\documentclass[]{fairmeta}
\newcommand{\remove}[1]{}
\usepackage{microtype}
\usepackage{graphicx}
\usepackage{subcaption}
\usepackage{booktabs}
\usepackage{multirow}
\usepackage{wrapfig}
\usepackage{makecell}
\usepackage{rotating}
\usepackage{amsmath}
\usepackage{amssymb}
\usepackage{mathtools}
\usepackage{amsthm}
\theoremstyle{plain}
\newtheorem{theorem}{Theorem}[section]

\theoremstyle{definition}

\theoremstyle{remark}

\newcommand{\shortmethod}[1]{\textsc{BFF}}
\newcommand{\method}[1]{\textsc{GraphBFF}}
\usepackage{hyperref}
\usepackage{cleveref}

\usepackage[dvipsnames,table]{xcolor}
\colorlet{blue}{RoyalBlue}
\colorlet{green}{ForestGreen}
\colorlet{red}{BrickRed}

\title{Billion-Scale Graph Foundation Models}

\author[1]{Maya Bechler-Speicher}
\author[2]{Yoel Gottlieb}
\author[1]{Andrey Isakov}
\author[1]{David Abensur}
\author[1]{Ami Tavory}
\author[1]{Daniel Haimovich}
\author[1]{Ido Guy}
\author[1]{Udi Weinsberg}

\affiliation[1]{ Meta}
\affiliation[2]{Work done at Meta}

\abstract{
Graph-structured data underpins many critical applications. While foundation models have transformed language and vision via large-scale pretraining and lightweight adaptation, extending this paradigm to general, real-world graphs is challenging.
In this work, we present Graph Billion-Foundation-Fusion (\method{}): an end-to-end recipe for building billion-parameter Graph Foundation Models (GFMs) for large-scale heterogeneous graphs. Central to the recipe is the \method{} Transformer, a flexible and scalable architecture designed for practical billion-scale GFMs. Using the \method{}, we present neural scaling laws for heterogeneous graphs and show that loss decreases predictably as either model capacity or training data scales, depending on which factor is the bottleneck.
The \method{} framework provides concrete methodologies for data batching, pretraining, and fine-tuning for building GFMs at scale. We demonstrate the effectiveness of the framework over a real-world billion-scale graph, with an evaluation of a billion-parameter \method{} Transformer following the proposed recipe. Across ten diverse, real-world downstream tasks on graphs unseen during training, spanning node- and link-level classification and regression, \method{} consistently outperforms baselines, with large margins of up to $31$ PRAUC points, including in few-shot settings. Finally, we discuss key challenges and open opportunities for making GFMs a practical and principled foundation for graph learning at industrial scale.
}

\correspondence{Maya Bechler-Speicher \email{mayabs@meta.com}}

\begin{document}

\maketitle

\section{Introduction}

\begin{figure}[t]
        \centering
     \includegraphics[width=0.8\linewidth]{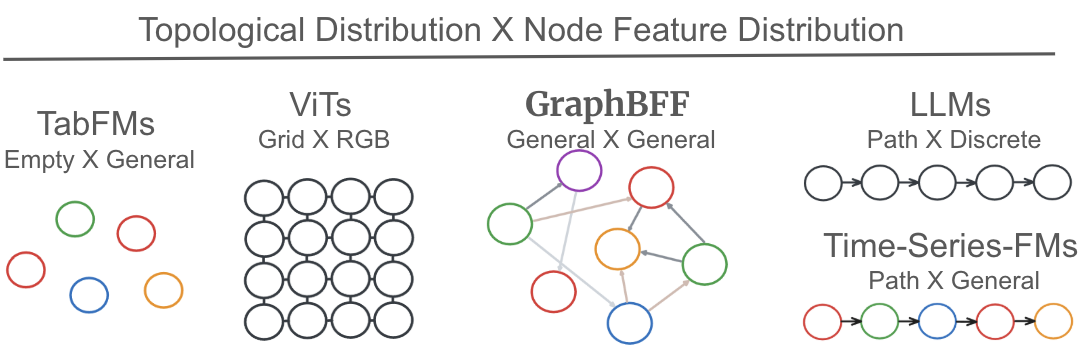}
    \caption{An illustration of FMs as GFMs over specific topological and feature distributions, using the minimal graph structure they operate on. Node colors correspond to token types, with each type undergoing distinct transformations within the model. The \textsc{GraphBFF} is designed for general graphs and feature distributions, supporting any number of token types.}
    \label{fig:gfms}
\end{figure}

Graph-structured data is ubiquitous across domains such as security, social networks, recommender systems, and many others. While foundation models have revolutionized natural language processing and computer vision through large-scale pretraining and lightweight adaptation to downstream tasks  \citep{bommasani2021foundation}, extending these advances to graphs in the form of Graph Foundation Models (GFMs) is fundamentally challenging.
First, graph-data distributions differ substantially across domains, for example, molecular graphs and social networks vary in node-feature distributions, topological structure, and scale. This raises a fundamental question about the validity of pretraining, as there might be little transferable structure across graphs. Furthermore, the scarcity of public, high-quality, large-scale graph datasets \citep{bechlerspeicher2025positiongraphlearninglose} limits the ability to rigorously study GFMs at billion-node data scales and billion-parameter model sizes. Finally, graph learning problems span multiple levels of granularity, including node-, edge-, and graph-level tasks.

While a truly generic GFM is challenging, many data modalities, including text and images, can be viewed as instances of graph-structured data with characteristic graph distributions \citep{Veli_kovi__2023,bronstein2021geometricdeeplearninggrids}. Under this perspective, Large Language Models (LLMs) \citep{vaswani2023attentionneed,brown2020language,devlin2019bert} and Vision Transformers (ViTs) \citep{dosovitskiy2021an} can be interpreted as billion-scale GFMs. 
LLMs assume a sequential token order and exploit it through positional encodings. ViTs assume images lie on a fixed two-dimensional grid, leveraging this structure via patchification and parameter sharing. \Cref{fig:gfms} illustrates FMs as GFMs, using the minimal graph structure they operate on.
However, these architectures embed strong inductive biases tailored to their underlying data distributions, which can hinder performance when applied to general graphs drawn from different distributions. For example, reducing graphs to text sequences as input to LLMs may yield poor performance and unstable predictions  \citep{fatemi2023talklikegraphencoding}. In addition, this approach may require LLMs to have orders of magnitude more parameters to outperform models designed for graphs \citep{ranjan2025relationaltransformerzeroshotfoundation}, such as Graph Neural Networks (GNNs) \citep{gilmer2017neural, kipf2017semisupervisedclassificationgraphconvolutional, hamilton2018inductiverepresentationlearninglarge} and Graph Transformers \citep{dwivedi2021graphtransformer, rampasek2022gps}.
\Cref{fig:gfms} illustrates FMs as GFMs, using the minimal graph structure they operate on.
We expand on the viewpoint of FMs as GFMs in \Cref{sec:preliminaries}.

In this work:
\begin{enumerate}
    \item We present Graph Billion-Foundation-Fusion (\method{}) - an end-to-end framework for building billion-parameter GFMs on large-scale heterogeneous graphs, at real industrial scale and settings. 
\item We introduce the \method{} Transformer, a flexible, scalable and effective architecture for building billion-scale GFMs. By leveraging two heterogeneous attention components and incorporating a sparse softmax, our transformer efficiently supports real-world large-scale heterogeneous graphs. We formally show that the two attention components of \method{} Transformer are necessary for its expressiveness.
\item Using \method{} Transformer, we present neural scaling laws in terms of data and model size. These laws show strict model and data bottlenecks for GFMs, suggesting that model and data must grow together, as previously observed in LLMs.
\item We introduce \textit{KL-Batching} and \textit{Round-Robin Batching}, storage-level and GPU-level strategies for effective pre-training on type-skewed billion-scale graphs.
\item We perform an extensive evaluation of a 1.4 billion-parameter \method{} Transformer pretrained on one billion samples from real-world graph data. We examine ten diverse, real-world industrial downstream tasks over graphs that were unseen during training, spanning node- and link-level classification and regression, and report strong probing performance, including in few-shot settings. We also evaluate the \method{} Transformer as a task-specific model, showing that it consistently outperforms existing task-specific heterogeneous graph transformers.
\end{enumerate}
    
While absent from the public domain, many organizations maintain billion-scale graphs and many industrial applications rely on large-scale graph data. Our goal is to provide a concrete, reproducible blueprint, backed by strong empirical evidence, for building effective GFMs in practice. We discuss numerous new theoretical and methodological questions arising from our work, and outlining key open challenges and promising directions for deploying effective GFMs.

\section{Related Work} \label{sec:related}
\textbf{Graph Foundation Models}
The public graph-data landscape still lacks billion-scale, diverse, high-quality data. This scarcity constrains GFMs research and may partly explain why progress on training billion-parameter GFMs lags behind other domains with abundant public data, such as text and vision \citep{bechlerspeicher2025positiongraphlearninglose}.
Another core challenge in building GFMs, emphasized by recent GFM surveys \citep{wang2025graphfoundationmodelscomprehensive}, is heterogeneity along three axes: (i) heterogeneity, (ii) structural heterogeneity, and (iii) task heterogeneity (e.g., node-, edge-, and graph-level tasks).
A prominent line of works focuses on designing GFMs for specific types of graphs with their specific tasks of interest, such as molecular graphs \citep{shoghi2024moleculesmaterialspretraininglarge} or knowledge  graphs \citep{galkin2024foundationmodelsknowledgegraph}.
Another line of work focuses on feature heterogeneity, a challenge also central to TabFMs \citep{gorishniy2021revisiting, somepalli2021saint, hollmann2022tabpfn, shaw2018self, hu2020heterogeneousgraphtransformer}. Feature heterogeneity asks whether a pre-trained model can be applied to samples with previously unseen feature sets. In practice, this often reduces to a technical mismatch between model parameters and the input space, motivating approaches that restrict inputs to a predefined vocabulary \citep{mao2024positiongraphfoundationmodels, wang2025learninggraphquantizedtokenizers}.
A different line of work extends TabFMs ideas to graphs, by partitioning features into predefined groups (e.g., numerical, categorical, text) and enforcing shared transformations within each group \citep{eremeev2025turningtabularfoundationmodels, finkelshtein2025equivarianceoncerecipegraph, GraphAny2024, ranjan2025relationaltransformerzeroshotfoundation, liu2024alltraininggraphmodel}.
While these strategies ensure dimensional compatibility for unseen node types, they can limit expressivity by forcing semantically distinct features through the same transformation. This exposes an inherent trade-off: grouping is a practitioner-driven design choice, and how to select it remains an open question.
Importantly, our framework is compatible with any of the above methods, and focuses at the billion-scale model parameters and data.

\textbf{Transformers on Graphs}
Applying Transformers to graphs requires choosing how to represent a graph as tokens, for example by converting it into node and edge tokens arranged as a set or sequence \citep{dwivedi2021graphtransformer,kreuzer2021rethinking,zhang2020graphbert,ying2021transformer}. Recent works show that this design choice strongly influences both expressivity and scalability \citep{yehudai2025depthwidthtradeoffsalgorithmicreasoning, sanford2023representationalstrengthslimitationstransformers, sanford2024transformersparallelcomputationlogarithmic}.
To better capture topology, another dominant approach involves constraining or biasing the attention mechanism. By employing graph-informed attention masks or structural biases, models can explicitly inject graph-structure priors into the self-attention process \citep{ying2021transformer,velickovic2018graph,zhang2020graphbert,dwivedi2021graphtransformer,kreuzer2021rethinking, rampasek2022gps}.
Heterogeneous graph Transformers target graphs with multiple node and edge types, requiring relation-aware attention rather than treating all edges uniformly. HGT conditions self-attention on node/edge types via type-specific projections and relation-dependent parameters \citep{hu2020heterogeneousgraphtransformer}. Related heterogeneous attention models leverage schema structure through meta-path–guided aggregation, as in HAN and MAGNN \citep{wang2021heterogeneousgraphattentionnetwork,fu2020magnn}. See \citet{Shehzad_2026} for a recent Survey on Graph Transformers.

\section{Preliminaries} \label{sec:preliminaries}
As in most foundation model settings, we assume a data universe from which both pre-training data and downstream task data are drawn. In the general case, this universe can be represented as a heterogeneous graph
\[
\mathcal{G} = (\mathcal{V}, \mathcal{E}, \tau, \phi)\]
where $\mathcal{V}$ is a set of nodes, $\mathcal{E} \subseteq \mathcal{V} \times \mathcal{V}$ is a set of edges, $\tau : \mathcal{V} \to \mathcal{T}_V$ assigns each node a type from a finite set of node types $\mathcal{T}_V$, and $\phi : \mathcal{E} \to \mathcal{T}_E$ assigns each edge a relation type from a finite set of edge types $\mathcal{T}_E$. 
A \textit{heterogeneous graph} is a graph with $|\mathcal{T}_E| >1$ or  $|\mathcal{T}_V| > 1$.\newline
Each node $v \in \mathcal{V}$ is associated with an input feature vector $\mathbf{x}_v \in \mathcal{X}_{\tau(v)} \subseteq \mathbb{R}^{d_{\tau(v)}}$, where $\mathcal{X}_{\tau(v)}$ denotes the feature support of node type $\tau(v)$. 
Each edge $(u,v) \in \mathcal{E}$ may also have an associated feature vector $\mathbf{x}_{uv} \in \mathcal{X}_{\phi(u,v)} \subseteq \mathbb{R}^{d_{\phi(u,v)}}$. We assume that node features are drawn from a type-specific marginal distribution $\mathbf{x}_v \sim P_{\tau(v)}$ with $\mathrm{supp}(P_{\tau(v)}) = \mathcal{X}_{\tau(v)}$, while the joint distribution over all node features may exhibit arbitrary dependencies induced by the graph structure. Each edge $e = (u,v) \in \mathcal{E}$, with relation type $r = \phi(e)$, encodes a typed interaction between nodes $u$ and $v$. 
This general formulation recovers standard foundation model modalities as special cases (see \Cref{fig:gfms}). For LLMs, the graph reduces to a directed, homogeneous path graph with $|\mathcal{T}_V| = |\mathcal{T}_E| = 1$, where nodes correspond to token positions connected by directed edges forming a chain, and node features take values in a finite vocabulary mapped to continuous embeddings augmented with positional encodings. For ViTs, the graph is a fixed, homogeneous two-dimensional grid with $|\mathcal{T}_V| = |\mathcal{T}_E| = 1$, where each node corresponds to a patch carrying a feature vector $\mathbf{x}_v \in \mathbb{R}^{3 \times p \times p}$. For tabular foundation models, the graph degenerates to isolated nodes with no edges, where attention operates over feature columns rather than graph structure. In contrast, \method{} operates on general heterogeneous graphs with $|\mathcal{T}_V| \geq 1$ and $|\mathcal{T}_E| \geq 1$, without assuming any fixed topology.

In this work, we consider universes of arbitrary $\mathcal{G}$s.
We discuss in the appendix the trade-offs in designing $\mathcal{G}$, and in \Cref{sec:finetuning} how to extend it to accommodate new feature, node or edge types outside of it.
$\mathcal{G}$  may be a single connected graph or consist of multiple connected components, for example as a disjoint union of graphs. Since this is a property of the data representation rather than a conceptual distinction relevant to our framework, we treat 
$\mathcal{G}$  as a single graph, possibly with multiple connected components, and do not distinguish between these cases in the remainder of the work. 

We consider the node features $\mathbf{x}_v$ to be the token inputs to the model, regardless of whether they are obtained from raw data or through a learned tokenization, which is the standard for LLMs and recently suggested for graphs as well \citep{wang2025learninggraphquantizedtokenizers}. 
For the rest of this paper, we refer to tokens simply as nodes.
The initial representation of a node is its initial features  $\mathbf{h}^{(0)}_v\in \mathbb{R}^{d_{\tau(v)}}$ .
The representation of a node $v\in V$ in layer $\ell\in \{1,\dots, L\}$ is denoted by
$\mathbf{h}^{(\ell)}_v\in\mathbb{R}^{d_\ell}$, where $d_\ell$ is the hidden dimension at layer $\ell$.
We denote by $\mathcal{C}_v^a\subseteq V$ a \emph{context} of node $v$, defined as a set of nodes that are allowed to exchange information with $v$ according to some function $a:V\rightarrow 2^V$.
The context is not required to coincide with direct connectivity between nodes.
All attention mechanisms use scaled dot-product attention with head dimension $d_h = d_\ell / H$ where $H$ is the number of attention heads.
Bold lowercase letters denote vectors and bold uppercase letters denote matrices.

\section{The \method{} Transformer}
In this section, we introduce \method{} Transformer used in our framework.
\method{} Transformer is designed for billion-parameter GFMs, training on billion-scale heterogeneous graphs, with feasible resources. 
This design reflects three empirical properties of real-world heterogeneous graphs. First, node and edge type distributions may be highly imbalanced. Second, rare relation types may carry disproportionately strong signal, and third, nodes may have million-scale degrees.  

\begin{wrapfigure}{r}{0.3\textwidth}
    \centering
    \includegraphics[width=0.3\textwidth]{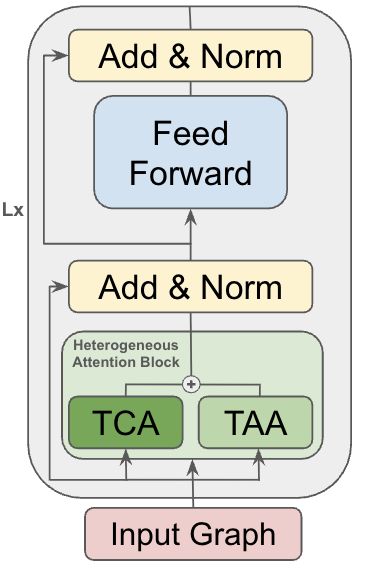}
    \caption{The \method{} Transformer block.}
    \label{fig:model}
\end{wrapfigure}
The \method{} Transformer updates node representations in each layer, based on their neighborhoods in the graph.
Each \method{} Transformer layer follows the standard transformer encoder
block structure \citep{vaswani2023attentionneed}, with residual connections, layer normalization, and a feed-forward network (FFNs), while replacing the all-pairs self-attention sub-block with two heterogeneous graph-aware masked attention modules. 

The \method{} introduces the \emph{Type-Conditioned Attention} (TCA) which also builds on type-specific attention transformations as previously suggested in Heterogeneous-Graph Transformer (HGT) \citep{hu2020heterogeneousgraphtransformer}, with a small yet crucial difference: a sparse softmax is applied to each type of neighbors separately, rather than to all neighbors at once. 
As softmax computation is often a bottleneck in self-attention due to the need to materialize and normalize the full attention matrix, leading to high memory bandwidth and I/O costs when applied to large neighborhoods.

To enable cross-type attention, \method{} Transformer also introduces a \emph{Type-Agnostic Attention} (TAA) component, which applies a shared attention among neighbors, with a sparse, fixed-degree neighbor sampling. This guarantees that efficiency is preserved, while also reducing the risk of overfitting to the graph structure in cases where the node degrees follow highly non-regular distributions~\citep{bechlerspeicher2024graphneuralnetworksuse}. Importantly, this component shares its attention matrices across all edge types, therefore introducing a relatively small number of additional parameters compared with the TCA component, yet it strictly increases the model expressivity as we prove in \Cref{thm:mm_exspressivness}. The final node representation in heterogeneous attention block is then a learned combination of the TAA and TCA. See \Cref{fig:model} for an overview of the \method{} Transformer block.

\textbf{Type-Conditioned Attention (TCA)} 

The TCA component performs masked self-attention operations within neighborhoods $\mathcal{C}^{\mathrm{tca}}_v$, limited to specific subsets of edge types. For a set $S \subseteq \mathcal{T}_E$, the node is allowed to attend only to nodes within $\mathcal{C}^{\mathrm{tca}}_v$ that are connected through edge types in $S$. Multiple sets $\mathcal{S}$ can be defined, and TCA applies an independent self-attention with respect to each set $S$, and then aggregates the resulting representations into a single one.

For every node $u \in \mathcal{C}^S_v$, we additionally incorporate edge features $\mathbf{x}_{uv}$ via an edge encoder
\( \mathbf{g}_{uv} = E(\mathbf{x}_{uv}) \in \mathbb{R}^{d_e} \).
We define:
\[
\mathbf{q}^{(S,\ell)}_v = \mathbf{W}^{(S,\ell)}_{Q}\,\mathbf{h}^{(\ell)}_v, \quad 
\mathbf{k}^{(S,\ell)}_u = \mathbf{W}^{(S,\ell)}_{K}\,\mathbf{h}^{(\ell)}_u, \quad 
\mathbf{v}^{(S,\ell)}_u = \mathbf{W}^{(S,\ell)}_{V}\,\mathbf{h}^{(\ell)}_u
\]

A learnable edge-attention vector $\mathbf{a}^{(S,\ell)} \in \mathbb{R}^{d_e}$ projects the edge features into a scalar bias for the attention logit. The attention weights for set $S$ are:
\[
\alpha^{(S,\ell)}_{uv} =
\frac{
\exp\!\left(
\frac{(\mathbf{q}^{(S,\ell)}_v)^\top \mathbf{k}^{(S,\ell)}_u}{\sqrt{d_h}}
+ (\mathbf{a}^{(S,\ell)})^\top \mathbf{g}_{uv}
\right)
}{
\sum_{u' \in \mathcal{C}^S_v}
\exp\!\left(
\frac{(\mathbf{q}^{(S,\ell)}_v)^\top \mathbf{k}^{(S,\ell)}_{u'}}{\sqrt{d_h}}
+ (\mathbf{a}^{(S,\ell)})^\top \mathbf{g}_{u'v}
\right)
}
\]

The set-specific representation of node $v$ for set $S$ is:
\[
\mathbf{h}^{(\ell,\mathrm{S})}_v =
\sum_{u \in \mathcal{C}^S_v}
\alpha^{(S,\ell)}_{uv}
\left(
\mathbf{v}^{(S,\ell)}_u
+
\mathbf{W}^{(S,\ell)}_{E}\,\mathbf{g}_{uv}
\right)
\]

The overall TCA representation of node $v$ in layer $\ell$ is obtained by aggregating its set-specific representations across all defined sets in $\mathcal{S}$:
\(
\mathbf{h}^{(\ell,\mathrm{tca})}_v =
\sum_{S \in \mathcal{S}, ~ \mathcal{C}^S_v \neq \emptyset} \mathbf{h}^{(\ell,\mathrm{S})}_v
\)

\textbf{Type-Agnostic Attention (TAA)}
The TAA component performs self-attention between a node and all other nodes in its TAA neighborhood $\mathcal{C}^{\mathrm{taa}}_v$. It first projects all nodes to the same embedding dimension.
This component is parameter-efficient as the attention matrices are shared across all node types, yet it can be expensive when applied to large neighborhoods. Therefore, we further apply a fixed sampling function $q$ over the neighborhood $\mathcal{C}^{\mathrm{taa}}_v$, $q: \mathcal{C}^{\mathrm{taa}}_v \rightarrow 2^{\mathcal{C}^{\mathrm{taa}}_v}$.
Formally, node representations are first mapped into a shared latent space.
At layer $\ell = 0$, each node type $\tau \in \mathcal{T}_V$ has a type-specific projection
$\mathbf{W}^{(0)}_{\tau} \in \mathbb{R}^{d_\ell \times d_{\tau}}$ that maps from the
type-specific feature dimension $d_\tau$ to the shared hidden dimension $d_\ell$.
For layers $\ell \geq 1$, all nodes already lie in $\mathbb{R}^{d_\ell}$ and the projection
becomes $\mathbf{W}^{(\ell)}_{\tau} \in \mathbb{R}^{d_\ell \times d_\ell}$.
In both cases, the projected representation is
\(
\widehat{\mathbf{h}}^{(\ell)}_v
=
\mathbf{W}^{(\ell)}_{\tau(v)}\,\mathbf{h}^{(\ell)}_v
\in\mathbb{R}^{d_\ell}.
\)
We then use \emph{shared} attention matrices across all nodes and edge types:
\(
\mathbf{W}^{(\ell)}_{Q},\mathbf{W}^{(\ell)}_{K},\mathbf{W}^{(\ell)}_{V}
\in\mathbb{R}^{d_h\times d_\ell}.
\)

For any neighbor $u\in\mathcal{C}_v$, we compute:
\(
\mathbf{q}^{(\ell)}_v = \mathbf{W}^{(\ell)}_{Q}\,\widehat{\mathbf{h}}^{(\ell)}_v,\quad
\mathbf{k}^{(\ell)}_u = \mathbf{W}^{(\ell)}_{K}\,\widehat{\mathbf{h}}^{(\ell)}_u,\quad
\mathbf{v}^{(\ell)}_u = \mathbf{W}^{(\ell)}_{V}\,\widehat{\mathbf{h}}^{(\ell)}_u
\)

We similarly incorporate edge features via $\mathbf{g}_{uv} = E(\mathbf{x}_{uv})$, with a shared edge-attention vector $\bar{\mathbf{a}}^{(\ell)} \in \mathbb{R}^{d_e}$. The type-agnostic attention weights are:
\begin{align*}
\beta^{(\ell)}_{uv} =
\frac{
\exp\!\left(
\frac{(\mathbf{q}^{(\ell)}_v)^\top \mathbf{k}^{(\ell)}_u}{\sqrt{d_h}}
+ \bar{\mathbf{a}}^{(\ell)\top} \mathbf{g}_{uv}
\right)
}{
\sum_{u'\in\mathcal{C}^{\mathrm{taa}}_v}
\exp\!\left(
\frac{(\mathbf{q}^{(\ell)}_v)^\top \mathbf{k}^{(\ell)}_{u'}}{\sqrt{d_h}}
+ \bar{\mathbf{a}}^{(\ell)\top} \mathbf{g}_{u'v}
\right)
}
\end{align*}

and the resulting TAA embedding is
\[
\mathbf{h}^{(\ell,\mathrm{taa})}_v =
\sum_{u\in\mathcal{C}^{\mathrm{taa}}_v}
\beta^{(\ell)}_{uv}
\left(
\mathbf{v}^{(\ell)}_u
+
\mathbf{W}^{(\ell)}_{E}\,\mathbf{g}_{uv}
\right)
\]
Finally, the TCA and TAA embeddings are combined into a
single output for node $v$, using an FFN  $\Phi^{(\ell)}$:
\(
\widetilde{\mathbf{h}}^{(\ell)}_v
=
\Phi^{(\ell)}\!\left(
\mathbf{h}^{(\ell,\mathrm{tca})}_v,\,
\mathbf{h}^{(\ell,\mathrm{taa})}_v
\right)
\in\mathbb{R}^{d_\ell},
\)

Following the heterogeneous attention sub-block, $\widetilde{\mathbf{h}}^{(\ell)}_v$ and $\mathbf{z}^{(\ell)}_v$ are passed through the remaining components of the transformer encoder block:

\begin{equation*}
\mathbf{z}^{(\ell)}_v=
\mathrm{LayerNorm}\!\left(
\mathbf{h}^{(\ell)}_v
+
\widetilde{\mathbf{h}}^{(\ell)}_v
\right), \quad
\mathbf{h}^{(\ell+1)}_v =
\mathrm{LayerNorm}\!\left(
\mathbf{z}^{(\ell)}_v
+
\mathrm{FFN}\!\left(\mathbf{z}^{(\ell)}_v\right)
\right).
\end{equation*}

The next theorem shows that a \method{} Transformer with both TCA and TAA is strictly more expressive than one lacking either component.
\begin{theorem} \label{thm:mm_exspressivness}
Consider a \method{} Transformer layer with hidden dimension $d_\ell=d$ and number of heads $H$, and heterogeneous attention sub-block TCA and TAA.
Let $\mathcal{F}_{\mathrm{GraphBFF}}, \mathcal{F}_{\mathrm{TAA}}, \mathcal{F}_{\mathrm{TCA}}$ be the sets of realizable functions by the \method{} Transformer with both TCA and TAA, with just TAA and with just TCA, respectively.
There exists a function $f$ such that:
$f \in \mathcal{F}_{\mathrm{GraphBFF}}$
but
$f \notin \mathcal{F}_{\mathrm{TAA}}$
and 
$f \notin \mathcal{F}_{\mathrm{TCA}}$.
\end{theorem}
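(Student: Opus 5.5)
We exhibit one function that needs both components at once. It should depend on two quantities: a \emph{per-type normalized} aggregate, which only TCA can produce, and a \emph{cross-type normalized} aggregate, which only TAA can produce. The sets $\mathcal{C}^{\mathrm{tca}}_v$, $\mathcal{C}^{\mathrm{taa}}_v$ and the partition $\mathcal{S}$ are fixed as part of the architecture, so we fix a small schema: two edge types $r_1,r_2$ with $\mathcal{S}=\{\{r_1\},\{r_2\}\}$, a single node type, and no edge features (or $\mathbf{g}_{uv}=0$). We then consider input graphs in which a target node $v$ has $m_1$ neighbors via $r_1$ with feature $\mathbf{x}_1$ and $m_2$ neighbors via $r_2$ with feature $\mathbf{x}_2$. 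We define $f$ as the output at $v$ of a particular \method{} layer whose FFN $\Phi$ reads off a coordinate of the TCA output and a coordinate of the TAA output.

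\textbf{TCA invariant.} With uniform attention, which zero query and key weights give, TCA returns $\mathbf{W}^{(r_1)}_V\mathbf{x}_1+\mathbf{W}^{(r_2)}_V\mathbf{x}_2$. More generally, every TCA realization is invariant under rescaling the multiplicities $(m_1,m_2)\mapsto(c_1m_1,c_2m_2)$ when all neighbors of a given type share the same feature. The reason is that a softmax over copies of identical tokens depends only on which values are present, not on how many copies there are. The residual, LayerNorm and FFN act pointwise on $v$, so they preserve this invariance. Hence every $g\in\mathcal{F}_{\mathrm{TCA}}$ satisfies $g(m_1,m_2)=g(c_1m_1,c_2m_2)$ on this family.

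\textbf{TAA invariant.} TAA, with the sampling $q$ taken as the identity on small neighborhoods, uses a shared softmax across both types. With uniform attention it outputs $\frac{m_1}{m_1+m_2}\mathbf{W}_V\mathbf{x}_1+\frac{m_2}{m_1+m_2}\mathbf{W}_V\mathbf{x}_2$, which depends on the ratio $m_1/m_2$ and is therefore not TCA-invariant. However, TAA with one node type is blind to edge types. So every $g\in\mathcal{F}_{\mathrm{TAA}}$ is invariant under swapping the edge labels $r_1\leftrightarrow r_2$ while keeping the features fixed. TCA, by contrast, uses different $\mathbf{W}^{(r_1)}_V\neq\mathbf{W}^{(r_2)}_V$ and so distinguishes the two labelings, provided $\mathbf{x}_1\neq\mathbf{x}_2$.

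\textbf{The separating function.} Let $f=\Phi(\mathbf{h}^{\mathrm{tca}},\mathbf{h}^{\mathrm{taa}})$ return the pair of one coordinate from each block. We verify two inputs.
\begin{enumerate}
\item Input A: $(m_1,m_2)=(1,1)$ versus $(1,2)$. These differ in TAA output but not in TCA output, so $f\notin\mathcal{F}_{\mathrm{TCA}}$.
\item Input B: the same multiset of neighbors, with the edge labels of the $\mathbf{x}_1$ and $\mathbf{x}_2$ neighbors swapped. These differ in TCA output but not in TAA output, so $f\notin\mathcal{F}_{\mathrm{TAA}}$.
\end{enumerate}
Membership $f\in\mathcal{F}_{\mathrm{GraphBFF}}$ holds by construction. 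To keep both distinctions visible through the encoder block, we take $\Phi$ to be linear injective on the relevant coordinates and pick the FFN so that LayerNorm does not collapse them. One way is to use a hidden dimension $d\ge 3$ and put an offset coordinate so the two distinct pre-norm vectors are not positive multiples of one another.

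\textbf{Main obstacle.} The delicate step is making both invariance arguments rigorous for \emph{all} parameters rather than just uniform attention. For TCA, we must argue that a softmax over duplicated identical keys yields the same convex combination regardless of the duplication count. This holds because all copies receive equal logits, and it also holds under multiple heads. For TAA, we must argue that with a single node type and no edge features nothing in the computation reads $\phi$, including the sampler $q$, which must be taken type-agnostic, for example the identity below the degree cap. We also need to ensure that LayerNorm does not accidentally make the two distinguished outputs equal; the offset coordinate handles this.
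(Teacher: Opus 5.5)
Your proposal is correct and is essentially the paper's argument: the paper's witness $s_\varepsilon(a(v))+s_\varepsilon(b(v))$ is separated from TCA-only models by changing per-type cardinalities with constant features within each type, and from TAA-only models by swapping edge-type labels on a fixed multiset of neighbors, which are exactly your Inputs A and B; the one difference is that you take $f$ to be a concrete realization instead of building the paper's explicit soft-threshold MLP. One small correction: LayerNorm is exactly invariant to adding $t\mathbf{1}$ (and, up to its $\epsilon$, to positive rescaling), so the condition you need is that the \emph{centered} pre-norm vectors are not positive multiples of each other, which you can guarantee by carrying two constant coordinates with distinct values.
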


We prove the theorem in the Appendix. To prove it, we construct a function that TCA fails to realize due to softmax normalization within edge-type subsets effectively masking the relative cardinality of neighbor sets, and TAA fails to realize due to shared parameters rendering it blind to specific edge-type distinctions, yet TAA and TCA together can realize it.
In \Cref{sec:evaluation} we show that using both TCA and TAA is also preferable empirically, leading to better generalization.

\subsection{Pre-Training}
\label{sec:pretrain}
Many self-supervised objectives have been proposed for graphs \citep{velivckovic2019dgi, hou2022graphmae2, you2020graphcl, hu2020strategies}, typically to inject stronger, task- or structure-specific inductive biases.
Inspired by recent results showing that scale can compensate for inductive bias~\citep{brehmer2025doesequivariancematterscale, tay2022scalinglawsvsmodel, Bahri_2024} we conjecture that a simple masked link prediction~\citep{kipf2016variational, hou2022graphmae} applied at billion-scale is sufficient for obtaining effective GFMs.

\textbf{Pre-training objective.} We use masked link prediction as the self-supervised objective. Given a graph $G = (\mathcal{V}, \mathcal{E})$, we randomly sample a set of positive edges $\mathcal{E}^+ \subset \mathcal{E}$ and a corresponding set of negative edges $\mathcal{E}^- = \{(u,v) : (u,v) \notin \mathcal{E}\}$ drawn uniformly at random with a 1:1 negative sampling ratio. The supervised edges are removed from the input graph during the forward pass. For each edge $(u,v)$, the model produces a score $s_{uv}$ by concatenating the source and target node embeddings and passing them through a two-layer MLP. The training loss is the binary cross-entropy:
\[
\mathcal{L} = -\frac{1}{|\mathcal{E}^+| + |\mathcal{E}^-|} \left( \sum_{(u,v) \in \mathcal{E}^+} \log \sigma(s_{uv}) + \sum_{(u,v) \in \mathcal{E}^-} \log(1 - \sigma(s_{uv})) \right),
\]
where $\sigma$ is the sigmoid function.

\textbf{Batching.} Batching graph data is commonly used to enable machines with constrained memory to load and process large-scale graphs. Since we address heterogeneous and potentially type-skewed graphs, we want to ensure that training is done over batches that represent the graph well in terms of node and edge type distributions. Furthermore, type-skewness might result in biased training due to highly common edge types and noisy gradients for rare types. To address these issues, we deploy a two-stage batching process.

\textbf{Storage-level batching.}
As transferring data from storage into memory is costly, we aim to maximize batch size while respecting a memory budget $M$. Standard practice is to partition the graph into clusters~\citep{Chiang_2019}, e.g., via Leiden~\citep{traag2019leiden}, however, such methods result in varying cluster size and node/edge-type distribution. This may result in non-representative clusters, and in case of large effective batch sizes in multi-machine pre-training, under-fill the batch capacity and create early distributional bias that destabilizes optimization. To address this, we propose \textit{KL-Batching},
which operates as follows: (1) use a graph clustering algorithm to partition the graph into small, disjoint clusters, (2) compute, for each cluster, an empirical distribution $p_k$ over a chosen categorical attribute (e.g., node or edge type) and measure its representativeness via $\mathrm{KL}(p_k\|p_G)$ against the global reference distribution $p_G$ (optionally combining multiple attributes with weighted KL terms), (3) construct batches by greedily aggregating whole clusters into batches, by increasing KL value, until the estimated batch cost reaches the memory limit $M$. This yields memory-efficient batches that better match global type distributions and utilize memory more effectively. Clusters are traversed in ascending KL order, so the first batches best match the global distribution. Since pre-training typically uses only a subset of the full graph, only the head batches with the most representative distributions are used, and tail batches with high-KL anomalous clusters are discarded. The full mathematical formulation is provided in the Appendix.

\textbf{GPU-level batching.} KL-batching yields batches that can be loaded into memory, but must be further subdivided into mini-batches that fit GPU memory. Randomly splitting the KL-batches for highly skewed graphs may produce mini-batches overwhelmingly composed of common edge types, yielding a biased training signal and noisy gradients for rare relations.
To avoid this, we propose \emph{Round-Robin Batching (RRB)}. In RRB, we group supervision edges by their type, and iterate over types in a fixed cyclic order. At each step, we form a mini-batch by sampling supervision edges (and their negative counterparts) of the current type, and materialize their neighborhoods from the KL-batch, excluding the supervised edges themselves. When a minority edge type is exhausted within an epoch, its edges are reiterated from the beginning, ensuring all types receive continued supervision throughout training. This simple scheduling leads to more stable training and better coverage of the heterogeneous edge-type space.

\subsection{Fine-Tuning and Extending $\mathcal{G}$}
\label{sec:finetuning}
Fine-tuning can be performed in both supervised and unsupervised settings,
and serves several distinct use cases. The most straightforward scenarios
involve adapting a pretrained \method{} to new data drawn from
the same underlying universe, or multi-task supervised training to improve performance on one
or more downstream tasks. In all of these cases, standard parameter‑efficient
fine-tuning methods such as LoRA~\citep{hu2021lora} can be applied to
\method, with the additional flexibility of constraining updates
to specific type-indexed matrices for node and edge types, and to selected
sub-modules such as the TAA or TCA components.

Downstream tasks may introduce features, node types, or edge
types that were not included in $G$ as they did not exist during pre-training, or as a modeling choice, for example if these types very sparse over the graph, or relevant only to a small amount of downstream tasks.
An approach to avoid the need to extend $\mathcal{G}$, is to map new features or node types into existing ones. These approaches were extensively researched in the TabFMs domain, with recent works applying them to graphs as well. We discuss these approaches in \Cref{sec:related}.
In an extension of $\mathcal{G}$, we extend it to a richer universe $G'$ with new features or types, by introducing new learned weights to the \method{} Transformer. We then train these weights through fine-tuning, possibly with the rest of the model frozen \citep{tai2020exbert}.  This approach may be preferable when some feature or node types are relevant for only a small fraction of downstream tasks, or when they occur in only a small fraction of the data.

\section{Evaluation} \label{sec:evaluation}
\begin{table*}[t]
\caption{A comparison between the GFM with up to 2 layers NN prediction head (Probing) and task-specific heterogeneous transformers (GraphBFF, HGT, HAN, GraphGPS). For each model, we compare 3 context sizes corresponding to ego-graphs of varying radius centered on the target node or edge. The best performance overall is highlighted in bold. The best performance for each context size for the task-specific baselines is highlighted in blue, green and red.}
\label{tab:probing}
\centering

\fontsize{6}{8}\selectfont
\setlength{\tabcolsep}{2pt}
\begin{tabular}{p{0.1cm} *{11}{c}}
\toprule
\multicolumn{1}{c}{\textbf{}} & \textbf{Model} &
\textbf{Task 1} & \textbf{Task 2} & \textbf{Task 3} & \textbf{Task 4} &
\textbf{Task 5} & \textbf{Task 6} & \textbf{Task 7} & \textbf{Task 8} &
\textbf{Task 9} & \textbf{Task 10} \\
& -\textbf{Context}&
\tiny PRAUC & \tiny PRAUC & \tiny PRAUC & \tiny PRAUC &
\tiny PRAUC & \tiny PRAUC & \tiny MAE & \tiny PRAUC &
\tiny PRAUC   & \tiny PRAUC   \\
\midrule
\multirow{10}{*}{\centering\arraybackslash\rotatebox{90}{Task-Specific}}
& NN   & 65.61$_{\pm 1.01}$ &  72.06$_{\pm0.01}$&    51.43$_{\pm0.01}$&   63.05$_{\pm3.80}$& 91.20$_{\pm4.00}$& 58.51$_{\pm0.35}$ & 0.032$_{\pm0.01}$& 85.76$_{\pm1.21}$ & 63.66$_{\pm3.26}$& 55.48$_{\pm0.79}$ \\
\cmidrule(l){2-12}
& HGT-1  & 61.23$_{\pm1.95}$ & 66.85$_{\pm2.27}$ & 71.83$_{\pm0.20}$ & 90.43$_{\pm0.18}$ &60.99$_{\pm12.3}$  & 71.06$_{\pm2.02}$  &  0.236$_{\pm0.02}$& 69.54$_{\pm1.21}$ & 30.08$_{\pm5.99}$ & 60.78$_{\pm0.49}$ \\
& HAN-1  & 61.73$_{\pm1.85}$ & 62.83$_{\pm1.50}$ & 64.12$_{\pm0.88}$ & 94.70$_{\pm0.30}$ & \textcolor{blue}{76.34$_{\pm5.81}$}& 70.10$_{\pm0.94}$ &0.083$_{\pm0.01}$& 67.64$_{\pm2.35}$ & 36.87$_{\pm2.31}$ & 60.59$_{\pm3.28}$ \\
& GraphGPS-1  & 62.08$_{\pm1.43}$ & 67.42$_{\pm1.15}$ & 72.51$_{\pm0.63}$ & 94.28$_{\pm0.42}$ & 74.82$_{\pm6.23}$ & 71.73$_{\pm1.48}$ & 0.089$_{\pm0.01}$ 
& 70.12$_{\pm1.85}$ & 35.42$_{\pm3.17}$ & 61.24$_{\pm1.05}$ \\ 
& \method{}-1  & \textcolor{blue}{64.77$_{\pm1.70}$} & \textcolor{blue}{67.94$_{\pm0.24}$}  & \textcolor{blue}{73.86$_{\pm0.71}$} & \textcolor{blue}{95.34$_{\pm0.51}$} & \textcolor{blue}{75.70$_{\pm4.12}$}& \textcolor{blue}{75.47$_{\pm0.69}$}& \textcolor{blue}{0.073$_{\pm0.01}$} &  \textcolor{blue}{72.47$_{\pm4.75}$}&  \textcolor{blue}{42.62$_{\pm4.80}$} &  \textcolor{blue}{62.92$_{\pm3.22}$}\\
\cmidrule(l){2-12}
& HGT-2  &  64.73$_{\pm0.61}$& 72.34$_{\pm0.86}$ & 71.94$_{\pm3.31}$ &  \textcolor{green}{95.02$_{\pm1.41}$}&\textcolor{green}{63.26$_{\pm18.8}$}  &  \textcolor{green}{72.86$_{\pm2.18}$} &0.186$_{\pm0.02}$& \textcolor{green}{70.67$_{\pm0.87}$} & 30.71$_{\pm7.18}$ &  58.63$_{\pm 3.55}$ \\
& HAN-2  & 61.65$_{\pm1.88}$ & 76.26$_{\pm0.79}$& 67.14$_{\pm0.96}$ & 94.70$_{\pm5.87}$ &43.26$_{\pm5.97}$& 69.74$_{\pm0.95}$ &  0.161$_{\pm0.01}$& 59.38$_{\pm2.04}$ & 31.92$_{\pm13.9}$ & \textcolor{green}{63.23$_{\pm2.03}$}  \\
& GraphGPS-2  & 65.38$_{\pm0.94}$ & 75.61$_{\pm1.22}$ & 72.85$_{\pm2.46}$ & 94.15$_{\pm2.03}$ & 57.42$_{\pm8.35}$ & 70.48$_{\pm1.65}$ & 0.174$_{\pm0.02}$ & 66.23$_{\pm2.41}$ & 33.24$_{\pm5.47}$ & 61.85$_{\pm2.74}$ \\
& \method{}-2  &  \textcolor{green}{69.81$_{\pm1.51}$}&  \textcolor{green}{81.48$_{\pm0.66}$}& \textcolor{green}{79.45$_{\pm1.82}$}  & \textcolor{green}{94.41$_{\pm2.61}$} & \textcolor{green}{63.15$_{\pm3.97}$}& \textcolor{green}{72.29$_{\pm4.21}$} & \textcolor{green}{0.156$_{\pm0.02}$}  & \textcolor{green}{71.13$_{\pm3.78}$} & \textcolor{green}{37.41$_{\pm4.75}$} &  63.76$_{\pm3.03}$\\
\cmidrule(l){2-12}
& HGT-3  & \textcolor{red}{76.88$_{\pm 2.14}$} & \textcolor{red}{77.66$_{\pm0.16}$} & 67.67$_{\pm0.95}$ & 91.75$_{\pm0.15}$ &45.59$_{\pm4.55}$  & \textcolor{red}{69.51$_{\pm2.56}$} & \textcolor{red}{0.135$_{\pm0.01}$} & 71.18$_{\pm0.78}$ & 33.88$_{\pm6.76}$ & 57.85$_{\pm3.83}$ \\
& HAN-3  & 74.32$_{\pm0.96}$ &74.75$_{\pm3.21}$  & 72.47$_{\pm1.18}$ & \textcolor{red}{94.83$_{\pm0.69}$}  &  \textcolor{red}{53.98$_{\pm17.5}$} & \textcolor{red}{69.46$_{\pm1.74}$} & 0.138$_{\pm0.02}$ &70.23$_{\pm1.08}$  & 30.71$_{\pm9.95}$ & 57.58$_{\pm2.21}$ \\
& GraphGPS-3  & 75.64$_{\pm1.67}$ & 76.93$_{\pm1.44}$ & 71.28$_{\pm1.35}$ & 92.18$_{\pm0.87}$ & 49.62$_{\pm8.21}$ & 69.28$_{\pm1.82}$ & 0.142$_{\pm0.02}$ & 71.42$_{\pm1.23}$ & 32.45$_{\pm5.82}$ & 58.92$_{\pm2.17}$ \\
& \method{}-3  & 74.42$_{\pm 1.74}$ & \textcolor{red}{79.34$_{\pm2.17}$} & \textcolor{red}{80.15$_{\pm1.75}$} & 93.56$_{\pm0.49}$ & \textcolor{red}{54.16$_{\pm0.01}$}& \textcolor{red}{69.65$_{\pm1.19}$} & 0.146$_{\pm0.03}$ & \textcolor{red}{78.83$_{\pm2.80}$} & \textcolor{red}{42.54$_{\pm3.40}$}&  \textcolor{red}{63.24$_{\pm2.39}$}\\

\cmidrule(l){1-12}
\multirow{3}{*}{\centering\arraybackslash\rotatebox{90}{Probing}}
& GFM-1 & 84.07$_{\pm0.37}$ &  82.58$_{\pm0.01}$&  83.03$_{\pm0.07}$&  98.31$_{\pm0.21}$&  60.70$_{\pm0.39}$  &75.85$_{\pm1.90}$  & 0.021$_{\pm0.01}$& 91.48$_{\pm0.92}$ &\textbf{95.19$_{\pm0.20}$}  & \textbf{79.65$_{\pm1.46}$}\\
& GFM-2 & 83.50$_{\pm0.13}$ &  \textbf{85.73$_{\pm0.05}$}&  84.36$_{\pm0.04}$&  98.36$_{\pm0.25}$&  83.93$_{\pm0.69}$&  \textbf{82.05$_{\pm0.04}$}&\textbf{0.018$_{\pm0.01}$} & 92.78$_{\pm0.20}$& 89.03$_{\pm0.58}$ & 60.77$_{\pm4.64}$ \\
& GFM-3 & \textbf{88.13$_{\pm0.15}$}& 83.00$_{\pm0.01}$&  \textbf{86.14$_{\pm0.11}$}&  \textbf{99.48$_{\pm0.19}$}&  \textbf{95.60$_{\pm0.32}$}&  75.54$_{\pm1.04}$& 0.021$_{\pm0.01}$ &  \textbf{95.39$_{\pm0.23}$}& 85.70$_{\pm0.56}$ & 78.48$_{\pm0.01}$ \\
\bottomrule
\end{tabular}

\end{table*}

In this section, we present an extensive evaluation of a 1.4 billion-parameters \method{} Transformer pre-trained on one billion samples. We evaluate it over diverse downstream tasks, spanning node and edge level classification and regression. We consider probing, few-shot probing, and zero-shot settings. We also perform an ablation study, evaluating a \method{} Transformer with only the TCA or the TAA components.

\textbf{Data and Tasks} We use a real-world billion-scale \emph{Enterprise} graph with ${\sim}50$ billion nodes and edges, with $12$ node types and $20$ relation types, including financial, social, business, infra, and more. Each node type is associated with its own features.
We use 10 diverse real-world industrial tasks including node and edge classification and regression, denoted as Task$1$ to Task$10$, where tasks $1{-}7$ are node-level tasks, and tasks $8{-}10$ are edge-level tasks. The tasks span finance, security, social and Attribution tasks, and defined over different node and edge types. The Enterprise graph contains roughly 50 billion nodes and edges, of which one billion are used as supervision edges for pre-training.
All the tasks' graphs are not part of the pre-training data, nor their nodes are connected to nodes in the pre-training data. All tasks are out-of-distribution with respect to the pre-training data. The amount of labels for the full data setting ranges from a minimum of $98$ to a maximum of $50k$ labels.
More details on the datasets including statistics are provided in the Appendix.

\textbf{Setting} We use the \method{} recipe to pre-train a 1.4 billion-parameter \method{} Transformer.
As the neighborhoods for the \method{} Transformer, we use direct neighbors for $\mathcal{N}^{\mathrm{tca}}$, i.e., nodes connected by a single edge. For $\mathcal{N}^{\mathrm{taa}}$, we use nodes within two hops of the target node. Within the TAA component, sampling function $q$ selects up to 10 random neighbors per hop. This accommodates graph scale and mitigates structural overfitting, a common issue for sparse graph transformers operating on large-variance node-degree distributions~\citep{bechlerspeicher2024graphneuralnetworksuse}. The TCA component is the primary driver of model capacity, comprising about 85\% of the total parameters.
We use node-type distribution for the KL-Batching.
Pre-training used eight Nvidia B200 machines and was completed in at most 12 hours with $64$ GPUs.
As evaluation metrics, we use PRAUC for classification and MAE for regression.
Across the evaluation, we used as context for the GFM the ego-graph~\citep{wasserman1994social} centered around target nodes or edges. We evaluate ego-graphs denoted as \textit{$k$-hops} with radius $k{\in}\{1,2,3\}$ to also examine how the context size affects performance.
Each task has its own train (80\%) validation (10\%) and test (10\%) splits.
We perform a grid search with $52$ hyper-param configurations for each setting, select the best configuration over the validation set, and report the mean and standard-deviation of the measured metric over the test set with 3 random seeds using the selected configuration. More hyper-parameters details are provided in the Appendix.

\begin{figure*}[t]
  \centering
  \includegraphics[width=\textwidth]{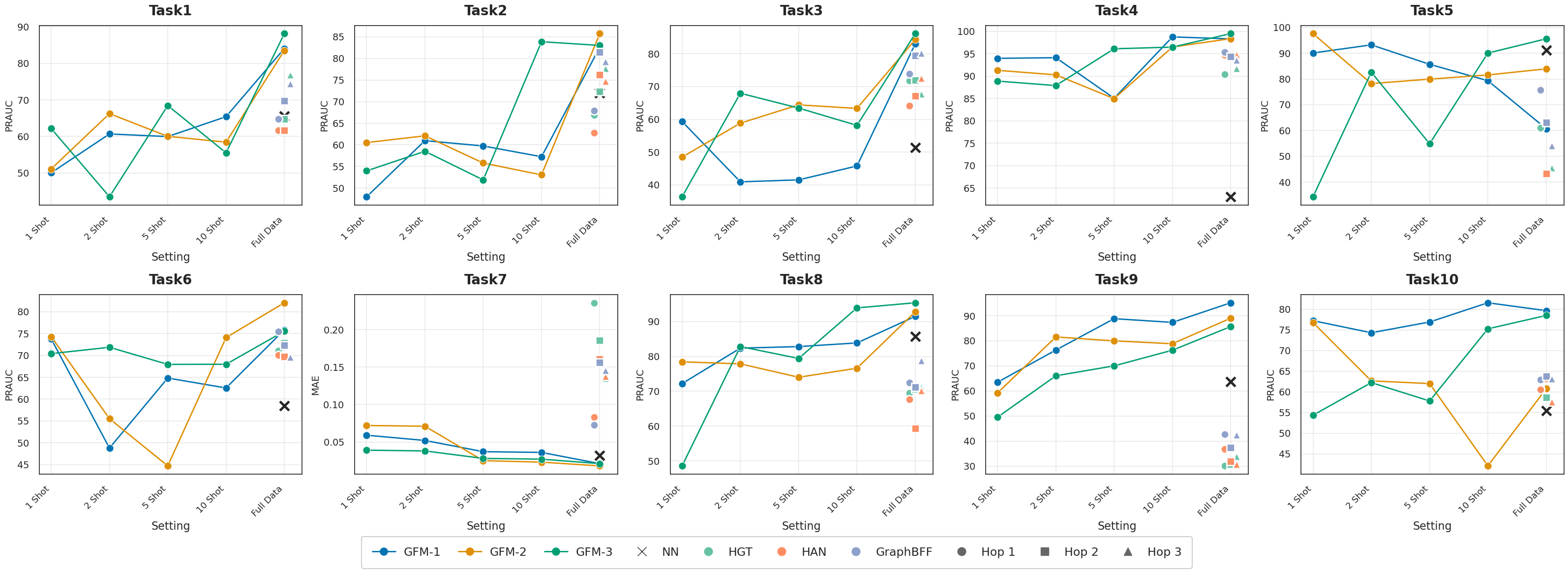}
  \caption{Performance across Tasks 1--10 under different few-shot settings. Lines show GFM-1/2/3. Markers at the full-data point indicate task-specific baselines with context size encoded by marker shape.}
  \label{fig:few_shot}
\end{figure*}

\textbf{Probing} We evaluate all ten tasks using (i) one-layer linear probes and (ii) up to two-layer non-linear probes. The GFM weights remain frozen and only the probe is trained. In particular, we train a vanilla feed-forward NN probe on top of the frozen GFM and compare it against task-specific graph models of comparable size to the NN probe, all trained on the same downstream task data.
We compare against a feed-forward neural network (NN) probe, HGT~\citep{hu2020heterogeneousgraphtransformer}, HAN~\citep{wang2021heterogeneousgraphattentionnetwork}, GraphGPS~\citep{rampasek2022gps}, and \method{} Transformer. Note that these graph-transformer baselines are trained on the input graph information, namely, the task-specific graph structure. In contrast, in the GFM probing setup these are only forwarded through the frozen GFM, and the NN probe is trained solely on the resulting GFM output embeddings.
For few-shot, we examine the performance of the NN probe over the GFM with 1, 2, 5, and 10 training samples per class, randomly sampled from the training set. For regression, we use 1, 2, 5, and 10 samples.

\subsection{Results}
\Cref{tab:probing,fig:few_shot} show that a simple NN probe on frozen \method{} representations outperforms all task-specific models in 10 out of 10 tasks, with gains of up to 31 PRAUC points. For instance, on Task 1, the best task-specific model HGT-3 reaches 76.88 while GFM Probing achieves 88.13. On Task 9, the gap exceeds 31 points with NN at 63.66 versus 95.19 for GFM Probing. Optimal context size is task-dependent, and \method{} consistently outperforms HGT, HAN, and GraphGPS as a task-specific baseline. A per-layer time and memory complexity comparison with HAN and HGT is provided in \Cref{app:time_complexity}, and a representational analysis examining Dirichlet Energy dynamics and in-context structural interrogation across layers is provided in \Cref{app:representational_analysis}.

\begin{table*}[tb]
\caption{Ablation study evaluating full-data probing with only TCA or TAA. $\Delta_{best}$ is the gap between the best ablation context and the full GFM from \Cref{tab:probing}.}
\label{tab:gfm_ablation}
\centering
\scriptsize
\setlength{\tabcolsep}{3pt}
\scriptsize
\setlength{\tabcolsep}{2pt}
\begin{tabular}{l *{10}{c}}
\toprule
\textbf{Model} & \textbf{T1} & \textbf{T2} & \textbf{T3} & \textbf{T4} & \textbf{T5} & \textbf{T6} & \textbf{T7} & \textbf{T8} & \textbf{T9} & \textbf{T10} \\
& \tiny PRAUC & \tiny PRAUC & \tiny PRAUC & \tiny PRAUC & \tiny PRAUC & \tiny PRAUC & \tiny MAE & \tiny PRAUC & \tiny PRAUC & \tiny PRAUC \\
\midrule
TCA-1 & 80.92$_{\pm.48}$ & 79.86$_{\pm.22}$ & 80.41$_{\pm.31}$ & 97.62$_{\pm.28}$ & 56.44$_{\pm.55}$ & 71.03$_{\pm1.6}$ & .027$_{\pm.01}$ & 88.73$_{\pm.61}$ & 90.84$_{\pm.41}$ & 73.28$_{\pm1.8}$ \\
TCA-2 & 79.71$_{\pm.29}$ & 82.63$_{\pm.18}$ & 81.92$_{\pm.20}$ & 97.71$_{\pm.33}$ & 79.54$_{\pm.74}$ & 77.14$_{\pm.34}$ & .024$_{\pm.01}$ & 89.86$_{\pm.33}$ & 84.62$_{\pm.77}$ & 55.33$_{\pm3.9}$ \\
TCA-3 & 84.65$_{\pm.25}$ & 80.34$_{\pm.12}$ & 83.02$_{\pm.19}$ & 98.84$_{\pm.23}$ & 91.02$_{\pm.46}$ & 70.92$_{\pm1.2}$ & .026$_{\pm.01}$ & 92.18$_{\pm.28}$ & 80.61$_{\pm.68}$ & 72.05$_{\pm.82}$ \\
\midrule
$\Delta_{best}$ & -3.48{\color{red}$\downarrow$} & -3.10{\color{red}$\downarrow$} & -3.12{\color{red}$\downarrow$} & -0.64{\color{red}$\downarrow$} & -4.58{\color{red}$\downarrow$} & -4.91{\color{red}$\downarrow$} & +.006{\color{red}$\uparrow$} & -3.21{\color{red}$\downarrow$} & -4.35{\color{red}$\downarrow$} & -6.37{\color{red}$\downarrow$} \\
\cmidrule(lr){1-11}
TAA-1 & 61.44$_{\pm.72}$ & 58.92$_{\pm.41}$ & 60.23$_{\pm.55}$ & 90.35$_{\pm.66}$ & 41.08$_{\pm.83}$ & 52.77$_{\pm2.1}$ & .044$_{\pm.02}$ & 72.61$_{\pm1.1}$ & 77.43$_{\pm.95}$ & 49.92$_{\pm2.4}$ \\
TAA-2 & 59.80$_{\pm.54}$ & 62.15$_{\pm.49}$ & 61.07$_{\pm.43}$ & 90.61$_{\pm.58}$ & 58.02$_{\pm1.1}$ & 56.48$_{\pm.88}$ & .041$_{\pm.02}$ & 74.38$_{\pm.84}$ & 70.05$_{\pm1.2}$ & 38.40$_{\pm5.0}$ \\
TAA-3 & 64.92$_{\pm.49}$ & 59.61$_{\pm.37}$ & 62.84$_{\pm.46}$ & 92.18$_{\pm.44}$ & 71.35$_{\pm.92}$ & 52.10$_{\pm1.7}$ & .043$_{\pm.02}$ & 78.92$_{\pm.63}$ & 66.21$_{\pm1.3}$ & 47.38$_{\pm1.7}$ \\
\midrule
$\Delta_{best}$ & -23.2{\color{red}$\downarrow$} & -23.6{\color{red}$\downarrow$} & -23.3{\color{red}$\downarrow$} & -7.30{\color{red}$\downarrow$} & -24.3{\color{red}$\downarrow$} & -25.6{\color{red}$\downarrow$} & +.023{\color{red}$\uparrow$} & -16.5{\color{red}$\downarrow$} & -17.8{\color{red}$\downarrow$} & -29.7{\color{red}$\downarrow$} \\
\bottomrule
\end{tabular}
\end{table*}

The necessity of the full \method{} Transformer architecture is supported by \Cref{tab:gfm_ablation}. Removing the TAA component (TCA-only) causes a consistent drop across all 10 tasks, with the best TCA-only configuration falling 3--6 PRAUC points below the full model on most tasks. Removing the TCA component (TAA-only) leads to much larger degradation, with drops of 17--30 PRAUC points, confirming that TCA is the primary driver of model capacity. Notably, even the TCA-only variant outperforms all task-specific baselines on every task, indicating that the pretrained TCA representations alone are highly effective.

\section{Neural Scaling Laws}\label{sec:scaling_laws}

In this section, we present neural scaling laws for GraphBFF.
Prior works on LLMs have shown that test loss follows predictable power-law relationships in both model size and data size, with clear transitions between data-limited and model-limited regimes as one of these resources is held fixed and the other is scaled \citep{kaplan2020scaling}. Here we show that GFMs follow the same trends, and show that gains are smooth when scaling model and data together, but saturate when one is fixed. 

\textbf{Setup.}
We begin with necessary notation, aligned with the notation in \cite{kaplan2020scaling}. Let $N$ denote the number of model parameters and let $D$ denote the number of
distinct supervised training edges, see the Appendix for discussion on the definition of $D$.
For each pair $(N, D)$, we define $L(N, D)$ to be
the best validation loss achieved when training on exactly $D$ samples with a model of size $N$.
We fit the joint scaling-law form from \cite{kaplan2020scaling}:

\begin{equation}\label{eq:scaling}
L(N,D) \;=\; L_{\infty}
\;+\;\left(\frac{N_c}{N}\right)^{\alpha_N}
\;+\;\left(\frac{D_c}{D}\right)^{\alpha_D},
\end{equation}

where $L_{\infty}$ is the irreducible loss floor. Here, $N_c$ and $D_c$ are normalization constants that characterize the model and data scales at which the corresponding terms begin to dominate the loss, and are estimated by fitting the scaling law to empirical performance curves.
Overall, we fit five parameters: the irreducible loss floor \(L_\infty\), the normalization constants \(N_c\) and \(D_c\), and the scaling exponents \(\alpha_N\) and \(\alpha_D\). Therefore, we demonstrate that with only $5$ parameters, the scaling law effectively models the validation loss across multiple orders of magnitude in both model size and dataset size.

We fit scaling behavior using logarithmically spaced data sizes with $\Delta \log_{10} D \approx 0.5$, and model sizes with
$\Delta \log_{10} N \approx 0.45$, providing sufficient resolution
to reliably estimate scaling exponents across regimes.
We consider nine model sizes spanning four orders of magnitude from $10^6$ to $4 \times 10^9$ parameters, and eight data sizes spanning five orders of magnitude from $3 \times 10^5$ to $10^9$ supervision edges.
The data we use is the enterprise data from \Cref{sec:evaluation}.
For each $(N,D)$ configuration, we train a \method{} Transformer using the complete \method{} recipe. We use four learning rates optimize with a sufficient number of epochs as done in \cite{kaplan2020scaling}. 
We evaluate the loss on a fixed holdout set of approximately $6$ million edges, drawn from a connected component that is not part of the training data.
Each training step processes a batch of $1024$ supervised edges.
For a given data size $D$, we define one epoch as a single pass over the $D$ distinct training edges, corresponding to $\lceil D / 1024 \rceil$ optimization steps.
Learning-rate schedules are parameterized by the total number of training steps for each run, with a warmup phase
($1$--$3\%$ of steps) as in~\cite{hoffmann2022training}.
The model and neighborhoods definitions are the same as in \Cref{sec:evaluation}.
\textbf{Results.}
\Cref{fig:scaling_laws_data} shows that the loss $L(N, D)$ varies predictably with the dataset size D and model
size N according to \Cref{eq:scaling}. 
This figure reveals both data- and model-bottlenecks: beyond a certain point, increasing \(D\) at fixed \(N\) yields negligible improvements in loss, and likewise increasing \(N\) at fixed \(D\) eventually provides little benefit. These results show that continued loss reductions require scaling data and model capacity together.
The test loss $L(\cdot)$ of a \method{} Transformer can be predicted using a power-law when performance is limited by $N$ or $D$. 
For models with a limited number of parameters, trained to convergence on sufficiently large datasets:
\(
L(N) = \left(\frac{N_c}{N}\right)^{\alpha_N},
\alpha_N \approx 0.703,
N_c \approx 2.1 \times 10^{4}.
\)
For large enough models trained with a limited dataset size:
\(
L(D) = \left(\frac{D_c}{D}\right)^{\alpha_D},
\alpha_D \approx 0.188,
D_c \approx 4.7.
\)
We also observe that larger GFMs attain a given test loss using fewer training examples than smaller models.

\begin{figure*}[t!]
    \centering
    \begin{subfigure}{0.3\textwidth}
        \centering
        \includegraphics[width=\textwidth]{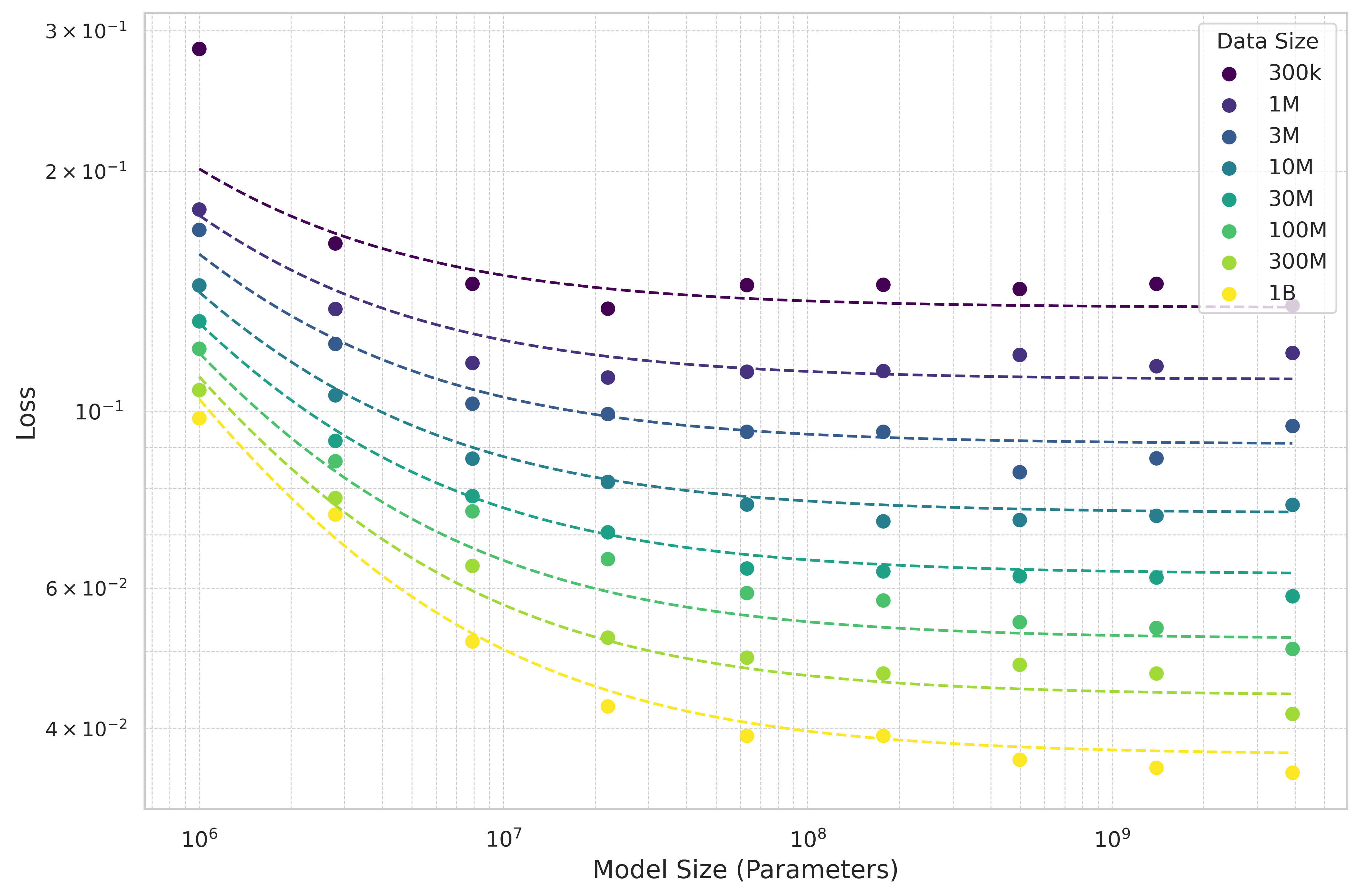}
        \caption{Validation loss as a function of the model size, for a fixed data size.}
    \end{subfigure}\hfill
    \begin{subfigure}{0.3\textwidth}
        \centering
        \includegraphics[width=\textwidth]{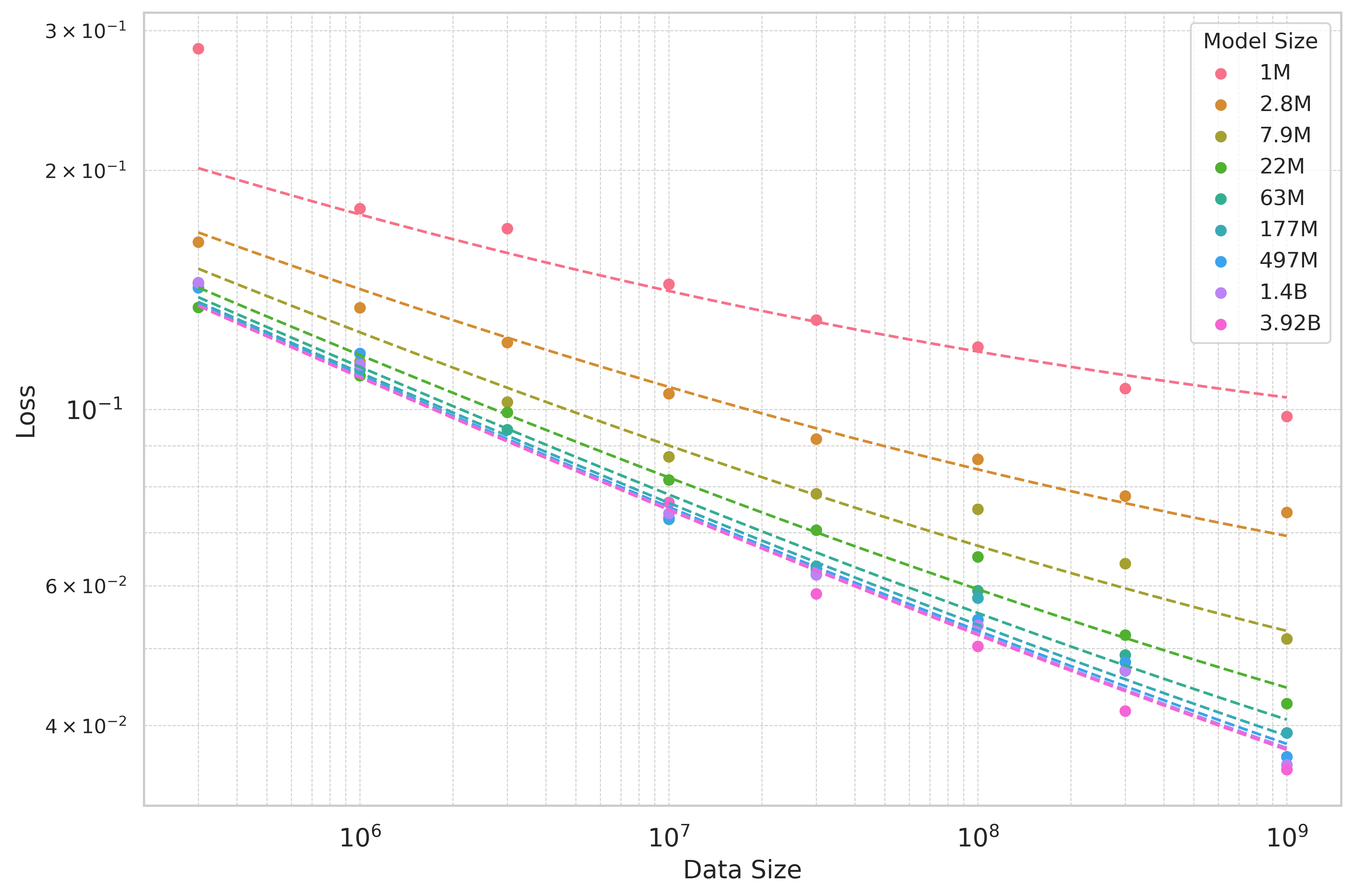}
        \caption{Validation loss as a function of the data size, for a fixed data model.}
    \end{subfigure}\hfill
    \begin{subfigure}{0.3\textwidth}
        \centering
        \includegraphics[width=\textwidth]{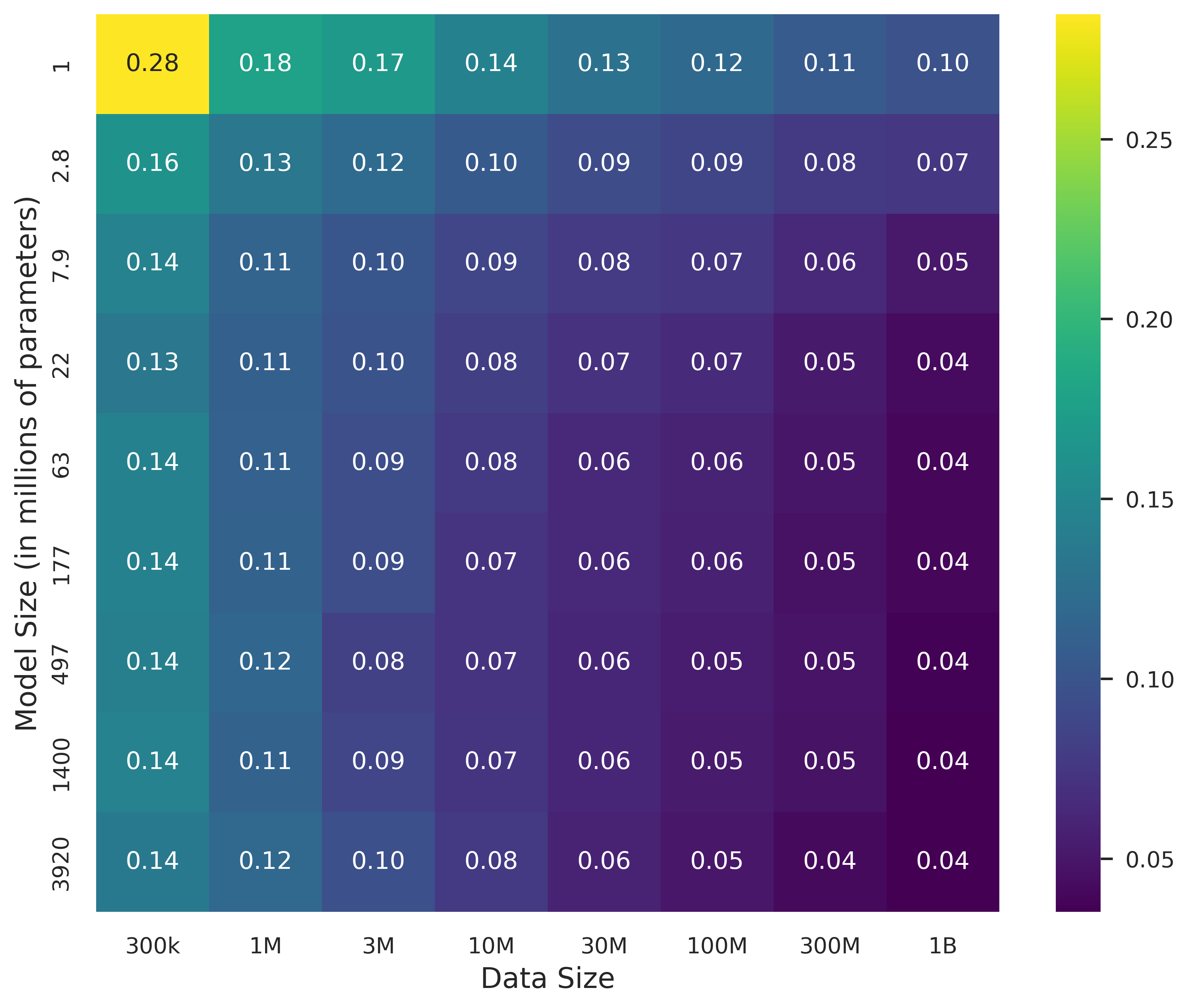}
        \caption{Validation loss as a function of the model size and data size.}
    \end{subfigure}

    \caption{The early-stop validation loss as a function of model size and data size. The dashed lines are obtained by fitting the exponents in \Cref{eq:scaling}. Loss decreases predictably depending on the model and data size, across 5 orders of magnitudes.}
    \label{fig:scaling_laws_data}
\end{figure*}

\section{Discussion and Future Work} \label{sec:discussion}

Many new questions arise from our work. 

\paragraph{Defining the pre-training universe $\mathcal{G}$}
While it is always possible to merge all available graphs into a single pre-training corpus, it is unclear whether this is optimal, and whether it would yield the best generalization across downstream tasks. This is not only a question of transferability between graphs, but also of how pre-training dynamics are shaped by long-tail distributions over nodes and edges. In particular, rare types in the tail may matter only for a small subset of downstream tasks, it remains unclear whether they should be prioritized during pre-training or reserved for task-specific fine-tuning.
A related challenge concerns the definition of node type and feature groups. We discussed grouping features to ensure applicability to previously unseen features, as done in Tabular FMs. In graphs, the same approach can be also done on the node-type level, on on combination of the two. However, this may come at the cost of expressivity. It would therefore be very interesting to derive concrete rules, potentially as a function of feature, node-type distributions and sparsity with respect to target downstream tasks, for selecting an appropriate level of granularity.

\paragraph{Compute-Optimal Allocation}
In ViTs and LLMs scaling-laws, the context size is fixed and therefore compute is well-approximated by $O(N \cdot D)$ where $N$ represents the number of parameters and $D$ represents the number of training tokens \citep{hoffmann2022training}. This abstraction allows for the derivation of precise compute-optimal frontiers, as the cost per "unit of data" remains constant.
In graph models with neighborhood-based context, the computational
cost per supervised edge depends on the size of the sampled subgraph, which in turn depends
on graph structure. As a result, total pre-training compute cannot be expressed as a universal function of the labeled data and model size alone.
Therefore, any compute-optimal conclusions must be interpreted as conditional on the underlying graph
and sampling distribution, rather than as universal prescriptions.
It would be very valuable to design compute-allocation measures for GFMs, which are transferable across different graphs.

\section{Conclusion}\label{sec:conclusion}
We presented \method{}, an end-to-end recipe for training billion-parameter Graph Foundation Models on billion-scale heterogeneous graphs. We introduced the \method{} Transformer, which combines type-conditioned and type-agnostic attention to efficiently handle the scale, heterogeneity, and degree skewness of real-world graphs, and we established empirical scaling laws showing that loss decreases predictably when model capacity and data are scaled jointly. Across 10 diverse downstream tasks on a billion-scale enterprise graph, we showed that a frozen \method{} Transformer with a simple probing head outperforms all task-specific baselines in every task, with gains of up to 31 PRAUC points, including in few-shot settings on graphs entirely unseen during pre-training. Our results demonstrate that graph foundation models at billion scale can learn transferable representations that generalize across diverse tasks, graph topologies, and evaluation regimes.

\paragraph{Limitations.} The scaling laws are derived from the Enterprise dataset and may not transfer to graphs with fundamentally different degree distributions or feature spaces; in particular, compute-optimal allocation for graph models remains an open problem due to the graph-conditional nature of training cost. The current framework uses masked link prediction as the sole pre-training objective; exploring alternative or complementary self-supervised objectives is an open direction. Finally, the setting of billion-scale graphs composed of very small connected components, as in molecular datasets, is left out of scope. While specialized foundation models exist for such settings, extending \method{} to these domains would be an interesting direction.

\section*{Acknowledgments}
We thank Kfir Amitai for the consistent support and enablement of this research.
We thank Petar Veli\v{c}kovi\'{c}, Amir Bar, Randall Balestriero, Joan Bruna and Uri Sherman for the insightful discussions on LLMs and ViTs as GFMs.
We thank Taco Cohen, Arjun Subramonian, Kaveh Hassani and Saurabh Verma for the insightful discussions and constructive feedback on early versions of this work.

\bibliographystyle{assets/plainnat}
\bibliography{references}

@inproceedings{bechlerspeicher2024graphneuralnetworksuse,
      title={Graph Neural Networks Use Graphs When They Shouldn't},
      author={Maya Bechler-Speicher and Ido Amos and Ran Gilad-Bachrach and Amir Globerson},
      booktitle={Proceedings of the 41st International Conference on Machine Learning},
      year={2024},
      series={ICML'24},
}

@inproceedings{gorishniy2021revisiting,
author = {Gorishniy, Yury and Rubachev, Ivan and Khrulkov, Valentin and Babenko, Artem},
title = {Revisiting deep learning models for tabular data},
year = {2021},
isbn = {9781713845393},
publisher = {Curran Associates Inc.},
address = {Red Hook, NY, USA},
booktitle = {Proceedings of the 35th International Conference on Neural Information Processing Systems},
articleno = {1447},
numpages = {12},
series = {NIPS '21}
}

@inproceedings{somepalli2021saint,
title={{SAINT}: Improved Neural Networks for Tabular Data via Row Attention and Contrastive Pre-Training},
author={Gowthami Somepalli and Avi Schwarzschild and Micah Goldblum and C. Bayan Bruss and Tom Goldstein},
booktitle={NeurIPS 2022 First Table Representation Workshop},
year={2022},
}

@inproceedings{hollmann2022tabpfn,
title={Tab{PFN}: A Transformer That Solves Small Tabular Classification Problems in a Second},
author={Noah Hollmann and Samuel M{\"u}ller and Katharina Eggensperger and Frank Hutter},
booktitle={NeurIPS 2022 First Table Representation Workshop},
year={2022},
}

@inproceedings{shaw2018self,
    title = "Self-Attention with Relative Position Representations",
    author = "Shaw, Peter  and
      Uszkoreit, Jakob  and
      Vaswani, Ashish",
    editor = "Walker, Marilyn  and
      Ji, Heng  and
      Stent, Amanda",
    booktitle = "Proceedings of the 2018 Conference of the North {A}merican Chapter of the Association for Computational Linguistics: Human Language Technologies, Volume 2 (Short Papers)",
    month = jun,
    year = "2018",
    address = "New Orleans, Louisiana",
    publisher = "Association for Computational Linguistics",
    doi = "10.18653/v1/N18-2074",
    pages = "464--468"
}

@misc{finkelshtein2025equivarianceoncerecipegraph,
      title={Equivariance Everywhere All At Once: A Recipe for Graph Foundation Models}, 
      author={Ben Finkelshtein and İsmail İlkan Ceylan and Michael Bronstein and Ron Levie},
      year={2025},
      eprint={2506.14291},
      archivePrefix={arXiv},
      primaryClass={cs.LG},
}

@inproceedings{hamilton2018inductiverepresentationlearninglarge,
author = {Hamilton, William L. and Ying, Rex and Leskovec, Jure},
title = {Inductive representation learning on large graphs},
year = {2017},
isbn = {9781510860964},
publisher = {Curran Associates Inc.},
address = {Red Hook, NY, USA},
booktitle = {Proceedings of the 31st International Conference on Neural Information Processing Systems},
pages = {1025–1035},
numpages = {11},
location = {Long Beach, California, USA},
series = {NIPS'17}
}

@article{kipf2016variational,
  title={Variational Graph Auto-Encoders},
  author={Thomas Kipf and Max Welling},
  journal={ArXiv},
  year={2016},
  volume={abs/1611.07308},
}

@inproceedings{velivckovic2019dgi,
title={Deep Graph Infomax},
author={Petar Veličković and William Fedus and William L. Hamilton and Pietro Liò and Yoshua Bengio and R Devon Hjelm},
booktitle={International Conference on Learning Representations},
year={2019},
}

@inproceedings{hou2022graphmae,
author = {Hou, Zhenyu and Liu, Xiao and Cen, Yukuo and Dong, Yuxiao and Yang, Hongxia and Wang, Chunjie and Tang, Jie},
title = {GraphMAE: Self-Supervised Masked Graph Autoencoders},
year = {2022},
isbn = {9781450393850},
publisher = {Association for Computing Machinery},
address = {New York, NY, USA},
doi = {10.1145/3534678.3539321},
booktitle = {Proceedings of the 28th ACM SIGKDD Conference on Knowledge Discovery and Data Mining},
pages = {594–604},
numpages = {11},
location = {Washington DC, USA},
series = {KDD '22}
}

@inproceedings{hou2022graphmae2,
author = {Hou, Zhenyu and He, Yufei and Cen, Yukuo and Liu, Xiao and Dong, Yuxiao and Kharlamov, Evgeny and Tang, Jie},
title = {GraphMAE2: A Decoding-Enhanced Masked Self-Supervised Graph Learner},
year = {2023},
isbn = {9781450394161},
publisher = {Association for Computing Machinery},
address = {New York, NY, USA},
doi = {10.1145/3543507.3583379},
booktitle = {Proceedings of the ACM Web Conference 2023},
pages = {737–746},
numpages = {10},
location = {Austin, TX, USA},
series = {WWW '23}
}

@inproceedings{you2020graphcl,
author = {You, Yuning and Chen, Tianlong and Sui, Yongduo and Chen, Ting and Wang, Zhangyang and Shen, Yang},
title = {Graph contrastive learning with augmentations},
year = {2020},
isbn = {9781713829546},
publisher = {Curran Associates Inc.},
address = {Red Hook, NY, USA},
booktitle = {Proceedings of the 34th International Conference on Neural Information Processing Systems},
articleno = {488},
numpages = {12},
location = {Vancouver, BC, Canada},
series = {NIPS '20}
}

@misc{zhang2020graphbert,
      title={Graph-Bert: Only Attention is Needed for Learning Graph Representations}, 
      author={Jiawei Zhang and Haopeng Zhang and Congying Xia and Li Sun},
      year={2020},
      eprint={2001.05140},
      archivePrefix={arXiv},
      primaryClass={cs.LG},
}

@inproceedings{vaswani2023attentionneed,
author = {Vaswani, Ashish and Shazeer, Noam and Parmar, Niki and Uszkoreit, Jakob and Jones, Llion and Gomez, Aidan N. and Kaiser, \L{}ukasz and Polosukhin, Illia},
title = {Attention is all you need},
year = {2017},
isbn = {9781510860964},
publisher = {Curran Associates Inc.},
address = {Red Hook, NY, USA},
booktitle = {Proceedings of the 31st International Conference on Neural Information Processing Systems},
pages = {6000–6010},
numpages = {11},
location = {Long Beach, California, USA},
series = {NIPS'17}
}

@misc{eremeev2025turningtabularfoundationmodels,
      title={Turning Tabular Foundation Models into Graph Foundation Models}, 
      author={Dmitry Eremeev and Gleb Bazhenov and Oleg Platonov and Artem Babenko and Liudmila Prokhorenkova},
      year={2025},
      eprint={2508.20906},
      archivePrefix={arXiv},
      primaryClass={cs.LG},

}

@inproceedings{hu2020heterogeneousgraphtransformer,
author = {Hu, Ziniu and Dong, Yuxiao and Wang, Kuansan and Sun, Yizhou},
title = {Heterogeneous Graph Transformer},
year = {2020},
isbn = {9781450370233},
publisher = {Association for Computing Machinery},
address = {New York, NY, USA},
doi = {10.1145/3366423.3380027},
booktitle = {Proceedings of The Web Conference 2020},
pages = {2704–2710},
numpages = {7},
location = {Taipei, Taiwan},
series = {WWW '20}
}

@inproceedings{kipf2017semisupervisedclassificationgraphconvolutional,
      title={Semi-Supervised Classification with Graph Convolutional Networks},
      author={Thomas N. Kipf and Max Welling},
      booktitle={International Conference on Learning Representations},
      year={2017},
}

@inproceedings{galkin2024foundationmodelsknowledgegraph,
      title={Towards Foundation Models for Knowledge Graph Reasoning},
      author={Mikhail Galkin and Xinyu Yuan and Hesham Mostafa and Jian Tang and Zhaocheng Zhu},
      booktitle={International Conference on Learning Representations},
      year={2024},
}

@misc{shoghi2024moleculesmaterialspretraininglarge,
      title={From Molecules to Materials: Pre-training Large Generalizable Models for Atomic Property Prediction}, 
      author={Nima Shoghi and Adeesh Kolluru and John R. Kitchin and Zachary W. Ulissi and C. Lawrence Zitnick and Brandon M. Wood},
      year={2024},
      eprint={2310.16802},
      archivePrefix={arXiv},
      primaryClass={cs.LG},

}

@misc{liu2024alltraininggraphmodel,
      title={One for All: Towards Training One Graph Model for All Classification Tasks}, 
      author={Hao Liu and Jiarui Feng and Lecheng Kong and Ningyue Liang and Dacheng Tao and Yixin Chen and Muhan Zhang},
      year={2024},
      eprint={2310.00149},
      archivePrefix={arXiv},
      primaryClass={cs.LG},
}

@inproceedings{mao2024positiongraphfoundationmodels,
author = {Mao, Haitao and Chen, Zhikai and Tang, Wenzhuo and Zhao, Jianan and Ma, Yao and Zhao, Tong and Shah, Neil and Galkin, Mikhail and Tang, Jiliang},
title = {Position: graph foundation models are already here},
year = {2024},
publisher = {JMLR.org},
booktitle = {Proceedings of the 41st International Conference on Machine Learning},
articleno = {1410},
numpages = {23},
location = {Vienna, Austria},
series = {ICML'24}
}

@misc{ranjan2025relationaltransformerzeroshotfoundation,
      title={Relational Transformer: Toward Zero-Shot Foundation Models for Relational Data}, 
      author={Rishabh Ranjan and Valter Hudovernik and Mark Znidar and Charilaos Kanatsoulis and Roshan Upendra and Mahmoud Mohammadi and Joe Meyer and Tom Palczewski and Carlos Guestrin and Jure Leskovec},
      year={2025},
      eprint={2510.06377},
      archivePrefix={arXiv},
      primaryClass={cs.LG},

}

@inproceedings{gilmer2017neural,
author = {Gilmer, Justin and Schoenholz, Samuel S. and Riley, Patrick F. and Vinyals, Oriol and Dahl, George E.},
title = {Neural message passing for Quantum chemistry},
year = {2017},
publisher = {JMLR.org},
booktitle = {Proceedings of the 34th International Conference on Machine Learning - Volume 70},
pages = {1263–1272},
numpages = {10},
location = {Sydney, NSW, Australia},
series = {ICML'17}
}

@article{Bahri_2024,
author = {Yasaman Bahri  and Ethan Dyer  and Jared Kaplan  and Jaehoon Lee  and Utkarsh Sharma },
title = {Explaining neural scaling laws},
journal = {Proceedings of the National Academy of Sciences},
volume = {121},
number = {27},
pages = {e2311878121},
year = {2024},
doi = {10.1073/pnas.2311878121},
eprint = {https://www.pnas.org/doi/pdf/10.1073/pnas.2311878121}
}

@misc{tay2022scalinglawsvsmodel,
      title={Scaling Laws vs Model Architectures: How does Inductive Bias Influence Scaling?}, 
      author={Yi Tay and Mostafa Dehghani and Samira Abnar and Hyung Won Chung and William Fedus and Jinfeng Rao and Sharan Narang and Vinh Q. Tran and Dani Yogatama and Donald Metzler},
      year={2022},
      eprint={2207.10551},
      archivePrefix={arXiv},
      primaryClass={cs.LG},

}

@misc{brehmer2025doesequivariancematterscale,
      title={Does equivariance matter at scale?}, 
      author={Johann Brehmer and Sönke Behrends and Pim de Haan and Taco Cohen},
      year={2025},
      eprint={2410.23179},
      archivePrefix={arXiv},
      primaryClass={cs.LG},
}

@article{traag2019leiden,
  title        = {From Louvain to Leiden: guaranteeing well-connected communities},
  author       = {Traag, Vincent A and Waltman, Ludo and van Eck, Nees Jan},
  journal      = {Scientific Reports},
  volume       = {9},
  number       = {1},
  pages        = {5233},
  year         = {2019},
  publisher    = {Nature Publishing Group},
  doi          = {10.1038/s41598-019-41695-z}
}

@inproceedings{Chiang_2019,
author = {Chiang, Wei-Lin and Liu, Xuanqing and Si, Si and Li, Yang and Bengio, Samy and Hsieh, Cho-Jui},
title = {Cluster-GCN: An Efficient Algorithm for Training Deep and Large Graph Convolutional Networks},
year = {2019},
isbn = {9781450362016},
publisher = {Association for Computing Machinery},
address = {New York, NY, USA},
doi = {10.1145/3292500.3330925},
booktitle = {Proceedings of the 25th ACM SIGKDD International Conference on Knowledge Discovery \& Data Mining},
pages = {257–266},
numpages = {10},
location = {Anchorage, AK, USA},
series = {KDD '19}
}

@misc{wang2025learninggraphquantizedtokenizers,
      title={Learning Graph Quantized Tokenizers}, 
      author={Limei Wang and Kaveh Hassani and Si Zhang and Dongqi Fu and Baichuan Yuan and Weilin Cong and Zhigang Hua and Hao Wu and Ning Yao and Bo Long},
      year={2025},
      eprint={2410.13798},
      archivePrefix={arXiv},
      primaryClass={cs.NE},
}

@inproceedings{10.1145/1553374.1553380,
author = {Bengio, Yoshua and Louradour, J\'{e}r\^{o}me and Collobert, Ronan and Weston, Jason},
title = {Curriculum learning},
year = {2009},
isbn = {9781605585161},
publisher = {Association for Computing Machinery},
address = {New York, NY, USA},
doi = {10.1145/1553374.1553380},
booktitle = {Proceedings of the 26th Annual International Conference on Machine Learning},
pages = {41–48},
numpages = {8},
location = {Montreal, Quebec, Canada},
series = {ICML '09}
}

@misc{keskar2017largebatchtrainingdeeplearning,
      title={On Large-Batch Training for Deep Learning: Generalization Gap and Sharp Minima}, 
      author={Nitish Shirish Keskar and Dheevatsa Mudigere and Jorge Nocedal and Mikhail Smelyanskiy and Ping Tak Peter Tang},
      year={2017},
      eprint={1609.04836},
      archivePrefix={arXiv},
      primaryClass={cs.LG},
}

@inproceedings{zeng2020graphsaintgraphsamplingbased,
      title={GraphSAINT: Graph Sampling Based Inductive Learning Method},
      author={Hanqing Zeng and Hongkuan Zhou and Ajitesh Srivastava and Rajgopal Kannan and Viktor Prasanna},
      booktitle={International Conference on Learning Representations},
      year={2020},
}

@inproceedings{tai2020exbert,
    title = "ex{BERT}: Extending Pre-trained Models with Domain-specific Vocabulary Under Constrained Training Resources",
    author = "Tai, Wen  and
      Kung, H. T.  and
      Dong, Xin  and
      Comiter, Marcus  and
      Kuo, Chang-Fu",
    editor = "Cohn, Trevor  and
      He, Yulan  and
      Liu, Yang",
    booktitle = "Findings of the Association for Computational Linguistics: EMNLP 2020",
    month = nov,
    year = "2020",
    address = "Online",
    publisher = "Association for Computational Linguistics",
    doi = "10.18653/v1/2020.findings-emnlp.129",
    pages = "1433--1439"
}

@inproceedings{hu2021lora,
title={Lo{RA}: Low-Rank Adaptation of Large Language Models},
author={Edward J Hu and yelong shen and Phillip Wallis and Zeyuan Allen-Zhu and Yuanzhi Li and Shean Wang and Lu Wang and Weizhu Chen},
booktitle={International Conference on Learning Representations},
year={2022},
}

@misc{bechlerspeicher2025positiongraphlearninglose,
      title={Position: Graph Learning Will Lose Relevance Due To Poor Benchmarks}, 
      author={Maya Bechler-Speicher and Ben Finkelshtein and Fabrizio Frasca and Luis Müller and Jan Tönshoff and Antoine Siraudin and Viktor Zaverkin and Michael M. Bronstein and Mathias Niepert and Bryan Perozzi and Mikhail Galkin and Christopher Morris},
      year={2025},
      eprint={2502.14546},
      archivePrefix={arXiv},
      primaryClass={cs.LG},
}

@article{Veli_kovi__2023,
title = {Everything is connected: Graph neural networks},
journal = {Current Opinion in Structural Biology},
volume = {79},
pages = {102538},
year = {2023},
issn = {0959-440X},
doi = {https://doi.org/10.1016/j.sbi.2023.102538},
author = {Petar Veličković}
}

@inproceedings{devlin2019bert,
    title = "{BERT}: Pre-training of Deep Bidirectional Transformers for Language Understanding",
    author = "Devlin, Jacob  and
      Chang, Ming-Wei  and
      Lee, Kenton  and
      Toutanova, Kristina",
    editor = "Burstein, Jill  and
      Doran, Christy  and
      Solorio, Thamar",
    booktitle = "Proceedings of the 2019 Conference of the North {A}merican Chapter of the Association for Computational Linguistics: Human Language Technologies, Volume 1 (Long and Short Papers)",
    month = jun,
    year = "2019",
    address = "Minneapolis, Minnesota",
    publisher = "Association for Computational Linguistics",
    doi = "10.18653/v1/N19-1423",
    pages = "4171--4186"
}

@misc{kaplan2020scaling,
      title={Scaling Laws for Neural Language Models}, 
      author={Jared Kaplan and Sam McCandlish and Tom Henighan and Tom B. Brown and Benjamin Chess and Rewon Child and Scott Gray and Alec Radford and Jeffrey Wu and Dario Amodei},
      year={2020},
      eprint={2001.08361},
      archivePrefix={arXiv},
      primaryClass={cs.LG},
}

@inproceedings{dosovitskiy2021an,
title={An Image is Worth 16x16 Words: Transformers for Image Recognition at Scale},
author={Alexey Dosovitskiy and Lucas Beyer and Alexander Kolesnikov and Dirk Weissenborn and Xiaohua Zhai and Thomas Unterthiner and Mostafa Dehghani and Matthias Minderer and Georg Heigold and Sylvain Gelly and Jakob Uszkoreit and Neil Houlsby},
booktitle={International Conference on Learning Representations},
year={2021},
}

@inproceedings{brown2020language,
 author = {Brown, Tom and Mann, Benjamin and Ryder, Nick and Subbiah, Melanie and Kaplan, Jared D and Dhariwal, Prafulla and Neelakantan, Arvind and Shyam, Pranav and Sastry, Girish and Askell, Amanda and Agarwal, Sandhini and Herbert-Voss, Ariel and Krueger, Gretchen and Henighan, Tom and Child, Rewon and Ramesh, Aditya and Ziegler, Daniel and Wu, Jeffrey and Winter, Clemens and Hesse, Chris and Chen, Mark and Sigler, Eric and Litwin, Mateusz and Gray, Scott and Chess, Benjamin and Clark, Jack and Berner, Christopher and McCandlish, Sam and Radford, Alec and Sutskever, Ilya and Amodei, Dario},
 booktitle = {Advances in Neural Information Processing Systems},
 editor = {H. Larochelle and M. Ranzato and R. Hadsell and M.F. Balcan and H. Lin},
 pages = {1877--1901},
 publisher = {Curran Associates, Inc.},
 title = {Language Models are Few-Shot Learners},
 volume = {33},
 year = {2020}
}

@inproceedings{
fatemi2023talklikegraphencoding,
title={Talk like a Graph: Encoding Graphs for Large Language Models},
author={Bahare Fatemi and Jonathan Halcrow and Bryan Perozzi},
booktitle={The Twelfth International Conference on Learning Representations},
year={2024},
}

@misc{bronstein2021geometricdeeplearninggrids,
      title={Geometric Deep Learning: Grids, Groups, Graphs, Geodesics, and Gauges}, 
      author={Michael M. Bronstein and Joan Bruna and Taco Cohen and Petar Veličković},
      year={2021},
      eprint={2104.13478},
      archivePrefix={arXiv},
      primaryClass={cs.LG},
      url={https://arxiv.org/abs/2104.13478}, 
}

@inproceedings{velickovic2018graph,
title={Graph Attention Networks},
author={Petar Veličković and Guillem Cucurull and Arantxa Casanova and Adriana Romero and Pietro Liò and Yoshua Bengio},
booktitle={International Conference on Learning Representations},
year={2018},
}

@inproceedings{sanford2024transformersparallelcomputationlogarithmic,
author = {Sanford, Clayton and Hsu, Daniel and Telgarsky, Matus},
title = {Transformers, parallel computation, and logarithmic depth},
year = {2024},
publisher = {JMLR.org},
booktitle = {Proceedings of the 41st International Conference on Machine Learning},
articleno = {1763},
numpages = {52},
location = {Vienna, Austria},
series = {ICML'24}
}

@misc{sanford2023representationalstrengthslimitationstransformers,
      title={Representational Strengths and Limitations of Transformers}, 
      author={Clayton Sanford and Daniel Hsu and Matus Telgarsky},
      year={2023},
      eprint={2306.02896},
      archivePrefix={arXiv},
      primaryClass={cs.LG},
}

@misc{yehudai2025depthwidthtradeoffsalgorithmicreasoning,
      title={Depth-Width tradeoffs in Algorithmic Reasoning of Graph Tasks with Transformers}, 
      author={Gilad Yehudai and Clayton Sanford and Maya Bechler-Speicher and Orr Fischer and Ran Gilad-Bachrach and Amir Globerson},
      year={2025},
      eprint={2503.01805},
      archivePrefix={arXiv},
      primaryClass={cs.LG},
}

@inproceedings{fu2020magnn,
author = {Fu, Xinyu and Zhang, Jiani and Meng, Ziqiao and King, Irwin},
title = {MAGNN: Metapath Aggregated Graph Neural Network for Heterogeneous Graph Embedding},
year = {2020},
isbn = {9781450370233},
publisher = {Association for Computing Machinery},
address = {New York, NY, USA},
doi = {10.1145/3366423.3380297},
booktitle = {Proceedings of The Web Conference 2020},
pages = {2331–2341},
numpages = {11},
location = {Taipei, Taiwan},
series = {WWW '20}
}

@misc{radford2021learningtransferablevisualmodels,
      title={Learning Transferable Visual Models From Natural Language Supervision}, 
      author={Alec Radford and Jong Wook Kim and Chris Hallacy and Aditya Ramesh and Gabriel Goh and Sandhini Agarwal and Girish Sastry and Amanda Askell and Pamela Mishkin and Jack Clark and Gretchen Krueger and Ilya Sutskever},
      year={2021},
      eprint={2103.00020},
      archivePrefix={arXiv},
      primaryClass={cs.CV},
}

@inproceedings{luo2023closerlookfewshotclassification,
author = {Luo, Xu and Wu, Hao and Zhang, Ji and Gao, Lianli and Xu, Jing and Song, Jingkuan},
title = {A closer look at few-shot classification again},
year = {2023},
publisher = {JMLR.org},
booktitle = {Proceedings of the 40th International Conference on Machine Learning},
articleno = {960},
numpages = {21},
location = {Honolulu, Hawaii, USA},
series = {ICML'23}
}

@inproceedings{wang2021heterogeneousgraphattentionnetwork,
author = {Wang, Xiao and Ji, Houye and Shi, Chuan and Wang, Bai and Ye, Yanfang and Cui, Peng and Yu, Philip S},
title = {Heterogeneous Graph Attention Network},
year = {2019},
isbn = {9781450366748},
publisher = {Association for Computing Machinery},
address = {New York, NY, USA},
doi = {10.1145/3308558.3313562},
booktitle = {The World Wide Web Conference},
pages = {2022–2032},
numpages = {11},
location = {San Francisco, CA, USA},
series = {WWW '19}
}

@book{wasserman1994social,
  title     = {Social Network Analysis: Methods and Applications},
  author    = {Stanley Wasserman, Katherine Faust},
  publisher = {Cambridge University Press},
  year      = {1994},
  address   = {Cambridge, UK},
  isbn      = {9780521387071}
}

@article{Shehzad_2026,
  title={Graph Transformers: A Survey},
  author={Ahsan Shehzad and Feng Xia and Shagufta Abid and Ciyuan Peng and Shuo Yu and Dongyu Zhang and Karin M. Verspoor},
  journal={IEEE transactions on neural networks and learning systems},
  year={2024},
  volume={PP},
}

@inproceedings{ying2021transformer,
author = {Ying, Chengxuan and Cai, Tianle and Luo, Shengjie and Zheng, Shuxin and Ke, Guolin and He, Di and Shen, Yanming and Liu, Tie-Yan},
title = {Do transformers really perform bad for graph representation?},
year = {2021},
isbn = {9781713845393},
publisher = {Curran Associates Inc.},
address = {Red Hook, NY, USA},
booktitle = {Proceedings of the 35th International Conference on Neural Information Processing Systems},
articleno = {2212},
numpages = {12},
series = {NIPS '21}
}

@inproceedings{dwivedi2021graphtransformer,
      title={A Generalization of Transformer Networks to Graphs},
      author={Vijay Prakash Dwivedi and Xavier Bresson},
      booktitle={AAAI 2021 Workshop on Deep Learning on Graphs: Methods and Applications},
      year={2021},
}

@inproceedings{kreuzer2021rethinking,
author = {Kreuzer, Devin and Beaini, Dominique and Hamilton, William L. and L\'{e}tourneau, Vincent and Tossou, Prudencio},
title = {Rethinking graph transformers with spectral attention},
year = {2021},
isbn = {9781713845393},
publisher = {Curran Associates Inc.},
address = {Red Hook, NY, USA},
booktitle = {Proceedings of the 35th International Conference on Neural Information Processing Systems},
articleno = {1654},
numpages = {12},
series = {NIPS '21}
}

@inproceedings{rampasek2022gps,
author = {Ramp\'{a}\v{s}ek, Ladislav and Galkin, Mikhail and Dwivedi, Vijay Prakash and Luu, Anh Tuan and Wolf, Guy and Beaini, Dominique},
title = {Recipe for a general, powerful, scalable graph transformer},
year = {2022},
isbn = {9781713871088},
publisher = {Curran Associates Inc.},
address = {Red Hook, NY, USA},
booktitle = {Proceedings of the 36th International Conference on Neural Information Processing Systems},
articleno = {1054},
numpages = {15},
location = {New Orleans, LA, USA},
series = {NIPS '22}
}

@article{bommasani2021foundation,
title={On the Opportunities and Risks of Foundation Models},
author={Rishi Bommasani and Drew A. Hudson and Ehsan Adeli and Russ Altman and Simran Arora and Sydney von Arx and Michael S. Bernstein and Jeannette Bohg and Antoine Bosselut and Emma Brunskill and Erik Brynjolfsson and S. Buch and Dallas Card and Rodrigo Castellon and Niladri S. Chatterji and Annie S. Chen and Kathleen A. Creel and Jared Davis and Dora Demszky and Chris Donahue and Moussa Doumbouya and Esin Durmus and Stefano Ermon and John Etchemendy and Kawin Ethayarajh and Li Fei-Fei and Chelsea Finn and Trevor Gale and Lauren E. Gillespie and Karan Goel and Noah D. Goodman and Shelby Grossman and Neel Guha and Tatsunori Hashimoto and Peter Henderson and John Hewitt and Daniel E. Ho and Jenny Hong and Kyle Hsu and Jing Huang and Thomas F. Icard and Saahil Jain and Dan Jurafsky and Pratyusha Kalluri and Siddharth Karamcheti and Geoff Keeling and Fereshte Khani and O. Khattab and Pang Wei Koh and Mark S. Krass and Ranjay Krishna and Rohith Kuditipudi and Ananya Kumar and Faisal Ladhak and Mina Lee and Tony Lee and Jure Leskovec and Isabelle Levent and Xiang Lisa Li and Xuechen Li and Tengyu Ma and Ali Malik and Christopher D. Manning and Suvir P. Mirchandani and Eric Mitchell and Zanele Munyikwa and Suraj Nair and Avanika Narayan and Deepak Narayanan and Benjamin Newman and Allen Nie and Juan Carlos Niebles and Hamed Nilforoshan and J. F. Nyarko and Giray Ogut and Laurel Orr and Isabel Papadimitriou and Joon Sung Park and Chris Piech and Eva Portelance and Christopher Potts and Aditi Raghunathan and Robert Reich and Hongyu Ren and Frieda Rong and Yusuf H. Roohani and Camilo Ruiz and Jack Ryan and Christopher R'e and Dorsa Sadigh and Shiori Sagawa and Keshav Santhanam and Andy Shih and Krishna Parasuram Srinivasan and Alex Tamkin and Rohan Taori and Armin W. Thomas and Florian Tram{\`e}r and Rose E. Wang and William Wang and Bohan Wu and Jiajun Wu and Yuhuai Wu and Sang Michael Xie and Michihiro Yasunaga and Jiaxuan You and Matei A. Zaharia and Michael Zhang and Tianyi Zhang and Xikun Zhang and Yuhui Zhang and Lucia Zheng and Kaitlyn Zhou and Percy Liang},
journal={ArXiv},
year={2021},
}

@inproceedings{hu2020strategies,
title={Strategies for Pre-training Graph Neural Networks},
author={Weihua Hu* and Bowen Liu* and Joseph Gomes and Marinka Zitnik and Percy Liang and Vijay Pande and Jure Leskovec},
booktitle={International Conference on Learning Representations},
year={2020},
}

@misc{wang2025graphfoundationmodelscomprehensive,
      title={Graph Foundation Models: A Comprehensive Survey}, 
      author={Zehong Wang and Zheyuan Liu and Tianyi Ma and Jiazheng Li and Zheyuan Zhang and Xingbo Fu and Yiyang Li and Zhengqing Yuan and Wei Song and Yijun Ma and Qingkai Zeng and Xiusi Chen and Jianan Zhao and Jundong Li and Meng Jiang and Pietro Lio and Nitesh Chawla and Chuxu Zhang and Yanfang Ye},
      year={2025},
      eprint={2505.15116},
      archivePrefix={arXiv},
      primaryClass={cs.LG},
}

@inproceedings{GraphAny2024,
title={Fully-inductive Node Classification on Arbitrary Graphs},
author={Jianan Zhao and Zhaocheng Zhu and Mikhail Galkin and Hesham Mostafa and Michael M. Bronstein and Jian Tang},
booktitle={The Thirteenth International Conference on Learning Representations},
year={2025},
}

@inproceedings{hoffmann2022training,
author = {Hoffmann, Jordan and Borgeaud, Sebastian and Mensch, Arthur and Buchatskaya, Elena and Cai, Trevor and Rutherford, Eliza and de Las Casas, Diego and Hendricks, Lisa Anne and Welbl, Johannes and Clark, Aidan and Hennigan, Tom and Noland, Eric and Millican, Katie and van den Driessche, George and Damoc, Bogdan and Guy, Aurelia and Osindero, Simon and Simonyan, Karen and Elsen, Erich and Vinyals, Oriol and Rae, Jack W. and Sifre, Laurent},
title = {Training compute-optimal large language models},
year = {2022},
isbn = {9781713871088},
publisher = {Curran Associates Inc.},
address = {Red Hook, NY, USA},
booktitle = {Proceedings of the 36th International Conference on Neural Information Processing Systems},
articleno = {2176},
numpages = {15},
location = {New Orleans, LA, USA},
series = {NIPS '22}
}

\clearpage
\appendix
\section{Proof of Theorem 4.1}
Here we prove Theorem 4.1.

\begin{proof}[Proof of Theorem 4.1]
Fix a node $v\in\mathcal{V}$ and let $\mathcal{N}^{\mathrm{taa}}_v$ be its TAA neighborhood.
For any set of edge types $S\subseteq\mathcal{T}_E$, let $\mathcal{N}^S_v\subseteq \mathcal{N}^{\mathrm{tca}}_v$
denote the $S$-masked neighborhood. In particular, with $\mathcal{N}^{\mathrm{tca}}_v=\mathcal{N}^{\mathrm{taa}}_v$
and $\mathcal{S}=\big\{\{r^\star\},\{r'\}\big\}$ we have two disjoint masked neighborhoods
$\mathcal{N}^{\{r^\star\}}_v$ and $\mathcal{N}^{\{r'\}}_v$.

Consider a family of graphs where all node types are identical (so the type projection
$\mathbf{W}^{(\ell)}_{\tau(v)}$ can be taken as the identity without loss of generality),
where edges carry no features ($\mathbf{g}_{uv} = \mathbf{0}$ for all edges),
and where each neighbor $u\in\mathcal{N}^{\mathrm{taa}}_v$ has a one-dimensional feature
$x_u\in\{0,1\}$, i.e., $\mathbf{h}^{(\ell)}_u=x_u$.

Define the two scalars
\[
a(v)\ :=\ \frac{1}{|\mathcal{N}^{\{r^\star\}}_v|}\sum_{u\in \mathcal{N}^{\{r^\star\}}_v} x_u,
\qquad
b(v)\ :=\ \frac{1}{|\mathcal{N}^{\mathrm{taa}}_v|}\sum_{u\in \mathcal{N}^{\mathrm{taa}}_v} x_u.
\]

Fix any $\varepsilon\in(0,\tfrac14)$ and define the continuous ``soft-threshold'' function
\[
s_\varepsilon(t)\ :=\ \mathrm{clip}\!\left(\frac{t-(\tfrac12-\varepsilon)}{2\varepsilon},\,0,\,1\right),
\qquad
\mathrm{clip}(z,0,1):=\min(1,\max(0,z)).
\]
Define the target
\[
\tilde f(v)\ :=\ s_\varepsilon(a(v))\ +\ s_\varepsilon(b(v)).
\].

\subsubsection*{$\tilde f\in\mathcal{F}_{\mathrm{GraphBFF}}$}

\emph{(i) TAA computes $b(v)$.}
Choose the (shared) TAA attention parameters so that all attention logits are equal.
Set $\mathbf{W}^{(\ell)}_{Q}=\mathbf{0}$, $\mathbf{W}^{(\ell)}_{K}=\mathbf{0}$, and $\bar{\mathbf{a}}^{(\ell)}=\mathbf{0}$,
so that for every $u\in\mathcal{N}^{\mathrm{taa}}_v$ the score
$\frac{(\mathbf{q}^{(\ell)}_v)^\top \mathbf{k}^{(\ell)}_u}{\sqrt{d_h}}$ is constant and thus
\[
\beta^{(\ell)}_{uv}=\frac{1}{|\mathcal{N}^{\mathrm{taa}}_v|}.
\]
Set $\mathbf{W}^{(\ell)}_{V}$ so that the first coordinate of $\mathbf{v}^{(\ell)}_u$ equals $x_u$.
Then the first coordinate of $\mathbf{h}^{(\ell,\mathrm{taa})}_v$ is exactly
\[
\sum_{u\in\mathcal{N}^{\mathrm{taa}}_v}\beta^{(\ell)}_{uv}\,x_u \;=\; b(v).
\]

\emph{(ii) TCA computes $a(v)$, and uses $\sum$ over sets.}
For the set $S=\{r^\star\}$, set $\mathbf{W}^{(\{r^\star\},\ell)}_{Q}=\mathbf{0}$,
$\mathbf{W}^{(\{r^\star\},\ell)}_{K}=\mathbf{0}$, and $\mathbf{a}^{(\{r^\star\},\ell)}=\mathbf{0}$ so the softmax is uniform on $\mathcal{N}^{\{r^\star\}}_v$:
\[
\alpha^{(\{r^\star\},\ell)}_{uv}=\frac{1}{|\mathcal{N}^{\{r^\star\}}_v|}.
\]
Choose $\mathbf{W}^{(\{r^\star\},\ell)}_{V}$ so that the first coordinate of
$\mathbf{v}^{(\{r^\star\},\ell)}_u$ equals $x_u$. Then the first coordinate of
$\mathbf{h}^{(\ell,\{r^\star\})}_v$ equals $a(v)$.

For $S=\{r'\}$, set $\mathbf{W}^{(\{r'\},\ell)}_{V}=\mathbf{0}$ so that
$\mathbf{h}^{(\ell,\{r'\})}_v=\mathbf{0}$.
Since in this variant the set-aggregation is \emph{sum},
\[
\mathbf{h}^{(\ell,\mathrm{tca})}_v
=\sum_{S\in\mathcal{S}} \mathbf{h}^{(\ell,S)}_v
=\mathbf{h}^{(\ell,\{r^\star\})}_v+\mathbf{h}^{(\ell,\{r'\})}_v
=\mathbf{h}^{(\ell,\{r^\star\})}_v,
\]
so $\mathbf{h}^{(\ell,\mathrm{tca})}_v$ contains $a(v)$ in a fixed coordinate.

\emph{(iii) A ReLU-MLP readout computes $\tilde f(v)$ exactly from $(a(v),b(v))$.}
Consider the 2-layer ReLU network (one hidden layer) taking input
$x=\begin{bmatrix}a\\ b\end{bmatrix}$ and producing output $\tilde f$:
\[
h=\mathrm{ReLU}(W_1 x+b_1),\qquad \tilde f = W_2 h + b_2,
\]
with
\[
W_1=
\begin{bmatrix}
\frac{1}{2\varepsilon} & 0\\
\frac{1}{2\varepsilon} & 0\\
0 & \frac{1}{2\varepsilon}\\
0 & \frac{1}{2\varepsilon}
\end{bmatrix},
\quad
b_1=
\begin{bmatrix}
\frac12-\frac{1}{4\varepsilon}\\[2pt]
-\frac12-\frac{1}{4\varepsilon}\\[2pt]
\frac12-\frac{1}{4\varepsilon}\\[2pt]
-\frac12-\frac{1}{4\varepsilon}
\end{bmatrix},
\quad
W_2=\begin{bmatrix}1&-1&1&-1\end{bmatrix},
\quad
b_2=0.
\]
This implements
\[
\tilde f \;=\;\big(\mathrm{ReLU}(z(a))-\mathrm{ReLU}(z(a)-1)\big)\;+\;\big(\mathrm{ReLU}(z(b))-\mathrm{ReLU}(z(b)-1)\big)
\;=\; s_\varepsilon(a)+s_\varepsilon(b),
\]
where $z(t)=\frac{t-(\frac12-\varepsilon)}{2\varepsilon}$ and we use the identity
$\mathrm{clip}(z,0,1)=\mathrm{ReLU}(z)-\mathrm{ReLU}(z-1)$.
Thus, choosing $\Phi^{(\ell)}$ to contain the above ReLU-MLP yields $\tilde f(v)$.
To ensure that the subsequent LayerNorm does not distort the constructed outputs, we embed the scalars $a(v)$ and $b(v)$ alongside $d_\ell - 2$ constant dummy coordinates set to a large fixed value $M$. For sufficiently large $M$, the per-node mean and variance are dominated by the constant coordinates and become nearly identical across all graphs in the construction, so LayerNorm acts as an approximately affine map that preserves the discriminability of the first two coordinates.
Hence $\tilde f\in\mathcal{F}_{\mathrm{GraphBFF}}$.

\subsubsection*{$\tilde f\notin\mathcal{F}_{\mathrm{TAA}}$}

We build two graphs that any TAA-only model must map to the same output at $v$, yet $\tilde f(v)$ differs.
Let $\mathcal{N}^{\mathrm{taa}}_v=\{u_1,u_2\}$ and set $(x_{u_1},x_{u_2})=(1,0)$.
Define two graphs $\mathcal{G}_A,\mathcal{G}_B$ that are identical except for swapping edge-type labels:
\[
\phi((u_1,v))=r^\star,\ \phi((u_2,v))=r' \ \text{ in }\mathcal{G}_A,
\qquad
\phi((u_1,v))=r',\ \phi((u_2,v))=r^\star \ \text{ in }\mathcal{G}_B.
\]
Then $b(v)=\tfrac12$ in both graphs, while $a(v)=1$ in $\mathcal{G}_A$ and $a(v)=0$ in $\mathcal{G}_B$.
Since $\varepsilon<\tfrac12$, we have $s_\varepsilon(1)=1$ and $s_\varepsilon(0)=0$, and also $s_\varepsilon(\tfrac12)=\tfrac12$.
Therefore
\[
\tilde f_{\mathcal{G}_A}(v)=1+\tfrac12=\tfrac32,\qquad
\tilde f_{\mathcal{G}_B}(v)=0+\tfrac12=\tfrac12,
\]
so $\tilde f_{\mathcal{G}_A}(v)\neq \tilde f_{\mathcal{G}_B}(v)$.

However, under the stated TAA parameter sharing
$\mathbf{W}^{(\ell)}_{Q,r}=\mathbf{W}^{(\ell)}_{Q}$,
$\mathbf{W}^{(\ell)}_{K,r}=\mathbf{W}^{(\ell)}_{K}$,
$\mathbf{W}^{(\ell)}_{V,r}=\mathbf{W}^{(\ell)}_{V}$ for all $r$,
the TAA computation at $v$ depends only on the multiset of neighbor representations
$\{\widehat{\mathbf{h}}^{(\ell)}_{u}:u\in\mathcal{N}^{\mathrm{taa}}_v\}$ and is invariant to swapping edge-type labels.
Since this multiset is identical in $\mathcal{G}_A$ and $\mathcal{G}_B$, the TAA-only embedding at $v$ is identical,
and any MLP readout must output the same value on both graphs, a contradiction.
Therefore $\tilde f\notin\mathcal{F}_{\mathrm{TAA}}$.

\subsubsection*{$\tilde f\notin\mathcal{F}_{\mathrm{TCA}}$}

We build two graphs that any TCA-only model must map to the same output at $v$, yet $\tilde f(v)$ differs,
using only that each set-specific TCA vector $\mathbf{h}^{(\ell,S)}_v$ is a softmax-weighted average over
$\{\mathbf{h}^{(\ell)}_u:u\in\mathcal{N}^S_v\}$.

Construct two graphs $\mathcal{G}_C,\mathcal{G}_D$ such that
\[
x_u=1\ \ \forall u\in\mathcal{N}^{\{r^\star\}}_v,
\qquad
x_u=0\ \ \forall u\in\mathcal{N}^{\{r'\}}_v,
\]
and the only difference is the neighborhood cardinalities:
\[
|\mathcal{N}^{\{r^\star\}}_v|=1,\ |\mathcal{N}^{\{r'\}}_v|=3 \ \text{ in }\mathcal{G}_C,
\qquad
|\mathcal{N}^{\{r^\star\}}_v|=3,\ |\mathcal{N}^{\{r'\}}_v|=1 \ \text{ in }\mathcal{G}_D.
\]
Then $a(v)=1$ in both graphs, but
\[
b_{\mathcal{G}_C}(v)=\tfrac14,\qquad b_{\mathcal{G}_D}(v)=\tfrac34.
\]
With $\varepsilon\in(0,\tfrac14)$, we have $s_\varepsilon(\tfrac14)=0$ and $s_\varepsilon(\tfrac34)=1$, and $s_\varepsilon(1)=1$.
Thus
\[
\tilde f_{\mathcal{G}_C}(v)=s_\varepsilon(1)+s_\varepsilon(\tfrac14)=1+0=1,
\qquad
\tilde f_{\mathcal{G}_D}(v)=s_\varepsilon(1)+s_\varepsilon(\tfrac34)=1+1=2,
\]
so $\tilde f_{\mathcal{G}_C}(v)\neq \tilde f_{\mathcal{G}_D}(v)$.

Now consider any TCA-only instantiation.
For $S=\{r^\star\}$, all nodes in $\mathcal{N}^{\{r^\star\}}_v$ have the same input representation $\mathbf{h}^{(\ell)}_u=1$,
hence the projected values $\mathbf{v}^{(S,\ell)}_u=\mathbf{W}^{(S,\ell)}_{V}\mathbf{h}^{(\ell)}_u$ are identical across
$u\in\mathcal{N}^{\{r^\star\}}_v$, and similarly for $S=\{r'\}$ (with $\mathbf{h}^{(\ell)}_u=0$).
Therefore, in each set $S$, the attention output
\[
\mathbf{h}^{(\ell,S)}_v=\sum_{u\in\mathcal{N}^S_v}\alpha^{(S,\ell)}_{uv}\,\mathbf{v}^{(S,\ell)}_u
\]
is the same in $\mathcal{G}_C$ and $\mathcal{G}_D$, because it is a convex combination of identical vectors
(and thus does not depend on $|\mathcal{N}^S_v|$).
Consequently, for every $S\in\mathcal{S}$, the vectors $\mathbf{h}^{(\ell,S)}_v$ match between $\mathcal{G}_C$ and $\mathcal{G}_D$,
and so does their \emph{sum}
\[
\mathbf{h}^{(\ell,\mathrm{tca})}_v=\sum_{S\in\mathcal{S}}\mathbf{h}^{(\ell,S)}_v.
\]
Hence any TCA-only model (followed by any MLP readout) must output the same value at $v$ on $\mathcal{G}_C$ and $\mathcal{G}_D$,
contradicting $\tilde f_{\mathcal{G}_C}(v)\neq \tilde f_{\mathcal{G}_D}(v)$.
Therefore $\tilde f\notin\mathcal{F}_{\mathrm{TCA}}$.
\end{proof}

\section{KL-Batching Additional Formulation}
Here we formulate and describe in details \textit{KL-Batching}.
As loading data from storage into memory is relatively slow, we aim to minimize the number of such transfers. We therefore maximize the mini-batch size subject to the available memory budget, denoted by $M$.

A common strategy for forming mini-batches from large graphs is to first partition the graph into clusters~\cite{Chiang_2019}, e.g., using the Leiden algorithm~\citep{traag2019leiden}, which scales to billion-edge graphs and optimizes modularity efficiently. However, cluster sizes can vary substantially, and the node/edge type composition within a cluster can deviate markedly from the global data distribution. 
In pre-training, we typically optimize on only a subset of the full graph (e.g., 1B nodes sampled from a 100B-node graph). In this regime, randomly selecting clusters to load into memory can (i) underutilize the desired batch size $B$ due to cluster-size variance, induce distributional bias when early updates are dominated by a few atypical clusters and lead to unstable training \citep{zeng2020graphsaintgraphsamplingbased, 10.1145/1553374.1553380}. This effect is amplified in multi-machine training, where the effective batch size per optimization step can reach tens to hundreds of millions of samples \citep{keskar2017largebatchtrainingdeeplearning}. Consequently, biased node/edge-type exposure in the first steps can steer optimization toward suboptimal solutions. To mitigate these issues, we propose \textit{KL-Batching}, a simple and efficient procedure that  assembles memory-efficient mini-batches whose type distributions are close to the global distribution.

KL-Batching first partitions the graph into $K$ disjoint clusters,
\(
\{C_1,\dots,C_K\}, \qquad C_k \subseteq V,
\)
which are never split across batches to avoid introducing additional cross-batch edge cuts.
For each cluster $C_k$, we compute an empirical distribution over a chosen discrete attribute of interest, denoted by $a(\cdot)$. This attribute can correspond to node types, edge types, or any other crucial categorical property used to control representativeness. Let $T$ denote the support of this attribute and let $p_k(t)$ be the empirical distribution induced by $C_k$ over $t \in T$. The full graph (or the pre-training population) similarly induces a global reference distribution $p_G(t)$.

For each cluster, we then compute the Kullback--Leibler (KL) divergence to the global distribution,
\[
\mathrm{KL}(p_k \,\|\, p_G)
=
\sum_{t \in T} p_k(t)\,\log\frac{p_k(t)}{p_G(t)},
\]
which quantifies how representative $C_k$ is with respect to the selected attribute distribution. When multiple attributes are important, one can compute multiple KL terms and combine them via, e.g., a weighted sum.

Given a target batch capacity $B$, measured as an upper bound on the total storage load associated with the batch, we construct batches by joining entire clusters. Let $\mathrm{size}(C_k)>0$ be a cost estimate for cluster $C_k$ (e.g., accounting for its number of nodes and edges, the mix of node/edge types, and type-specific feature dimensionalities). KL-Batching proceeds by (1) sorting all clusters in ascending order of $\mathrm{KL}(p_k \,\|\, p_G)$, and then (2) traversing this list and sequentially aggregating clusters into batches as long as the cumulative batch cost does not exceed the memory budget $M$.

This yields a collection of batches $\mathcal{B}=\{B_1,\dots,B_S\}$, where each batch $B_\ell$ is the disjoint union of whole clusters,
\[
B_\ell = \bigcup_{k \in I_\ell} C_k,
\qquad
\sum_{k \in I_\ell} \mathrm{size}(C_k) \le M,
\]
and $\mathrm{size}(\cdot)$ denotes the estimated memory cost. Because batches are assembled from low-KL clusters first and filled as close as possible to the capacity constraint, we (i) obtain batches that are representative with respect to the chosen attribute(s) and (ii) improve memory utilization at each training worker.
The batch construction stage can be viewed as a constrained combinatorial optimization problem; due to space limitations, we provide a formal formulation in the Appendix.

Suppose we have clusters indexed by $k=1,\dots,K$, each with a size (cost) $s_k = \mathrm{size}(C_k) > 0$ and a KL value
\[
\kappa_k = \mathrm{KL}(p_k \,\|\, p_G).
\]
Fix a batch capacity $B$. For any batch, we seek a subset of indices $I \subseteq \{1,\dots,K\}$ such that
\begin{align}
\sum_{k \in I} s_k &\le B, \label{eq:knapsack-capacity} \\
\sum_{k \in I} s_k &\text{ is maximized}, \label{eq:knapsack-max}
\end{align}
under the preference that clusters with lower $\kappa_k$ are chosen first (i.e., we want the batch to be composed of clusters whose node-type distribution is close to the global one).

If we fix a set of candidate clusters that all have the same KL value, $\kappa_k = \kappa^\star$ for all $k$ in some index set $\mathcal{K}^\star$, then selecting a subset $I \subseteq \mathcal{K}^\star$ maximizing~\eqref{eq:knapsack-max} subject to~\eqref{eq:knapsack-capacity} is exactly the classical $0$--$1$ knapsack problem. This problem is known to be NP-hard in general, and solving it exactly for every batch is computationally infeasible at the scales we consider.

In our setting, exact ties in $\kappa_k$ are rare in practice, because KL values are continuous and clusters exhibit diverse type distributions. Consequently, the number of genuine ``knapsack'' situations, where we must choose among many clusters with effectively identical KL values-, is small relative to the total number of batches. We therefore adopt a simple greedy heuristic:
\begin{itemize}
    \item We traverse clusters in ascending order of $\kappa_k$.
    \item For each batch, we add clusters sequentially as long as the capacity constraint $\sum s_k \le B$ is satisfied.
    \item When multiple candidate clusters share the same $\kappa_k$, we add them in arbitrary (or size-sorted) order, without attempting to solve the knapsack problem exactly.
\end{itemize}
This heuristic does not guarantee globally optimal packing with respect to $B$, but it works well empirically: capacity utilization is typically high (batches are close to full), and pre-training remains stable. Given the rarity of large same-KL groups and the overwhelming scale of the graph, the suboptimality introduced by this greedy step is negligible in practice.

\section{Additional Evaluation Information}

\subsection{Hyper-Parameters}
All task evaluations uses a grid search with layers in $\{1,2\}$, hidden dimensions in $\{64,128,256\}$, dropout ratio in $\{0, 0.2, 0.6\}$, learning rate in $\{0.001, 0.0001, 0.0005\}$. We use 1000 epochs with early stopping on the validation loss, with patience of $200$ steps.

\subsection{Enterprise Feature Distributions}
\label{appendix:enterprise_features}

To characterize the discriminative power of node-level features across diverse graph tasks, we visualize their distributional properties using two complementary families of plots.

\Cref{fig:n-double-features} provides a fine-grained view of individual feature distributions for Task 1. For each node-level feature, we show four complementary plots: Kernel Density Estimation revealing shape, modality, and spread; violin plots of node degree stratified by binned feature values illustrating how feature magnitude relates to graph connectivity; hexbin scatter plots of log-transformed feature values versus log node degree highlighting non-linear relationships; and histograms with KDE overlays. Features are ranked by absolute skewness.

\Cref{fig:n-raincloud} offers a global summary across many features by presenting raincloud plots for the top 120 features ranked by the Kolmogorov--Smirnov statistic. Each subfigure combines a half-violin, a box plot, and jittered scatter points, jointly conveying central tendency, variability, and distributional shape across tasks during pre-training.

\begin{figure*}[h]
  \centering
  \begin{subfigure}[t]{0.49\textwidth}
    \centering
    \includegraphics[width=\textwidth]{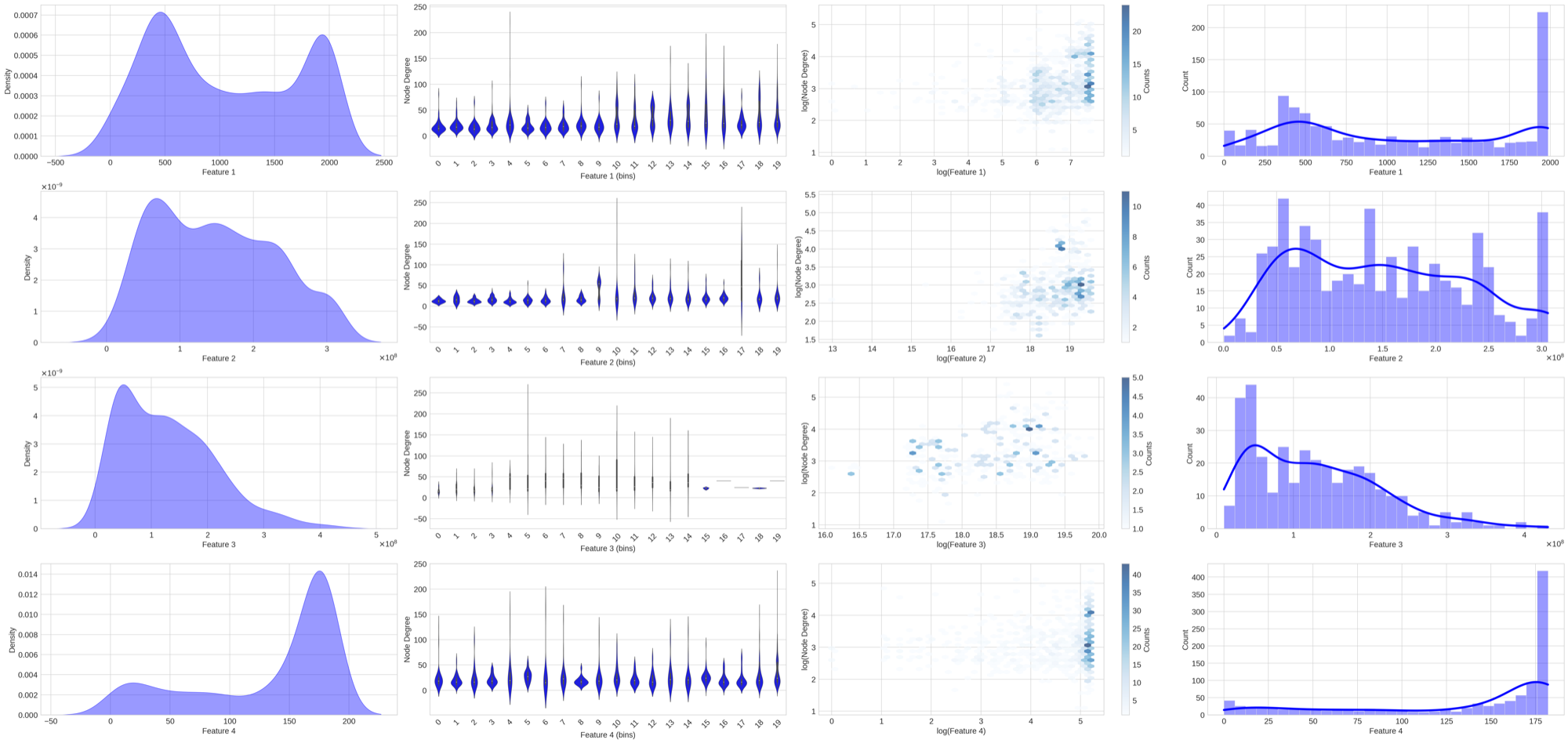}
    \caption{Feature group A.}
    \label{fig:n-double-features-a}
  \end{subfigure}
  \hfill
  \begin{subfigure}[t]{0.49\textwidth}
    \centering
    \includegraphics[width=\textwidth]{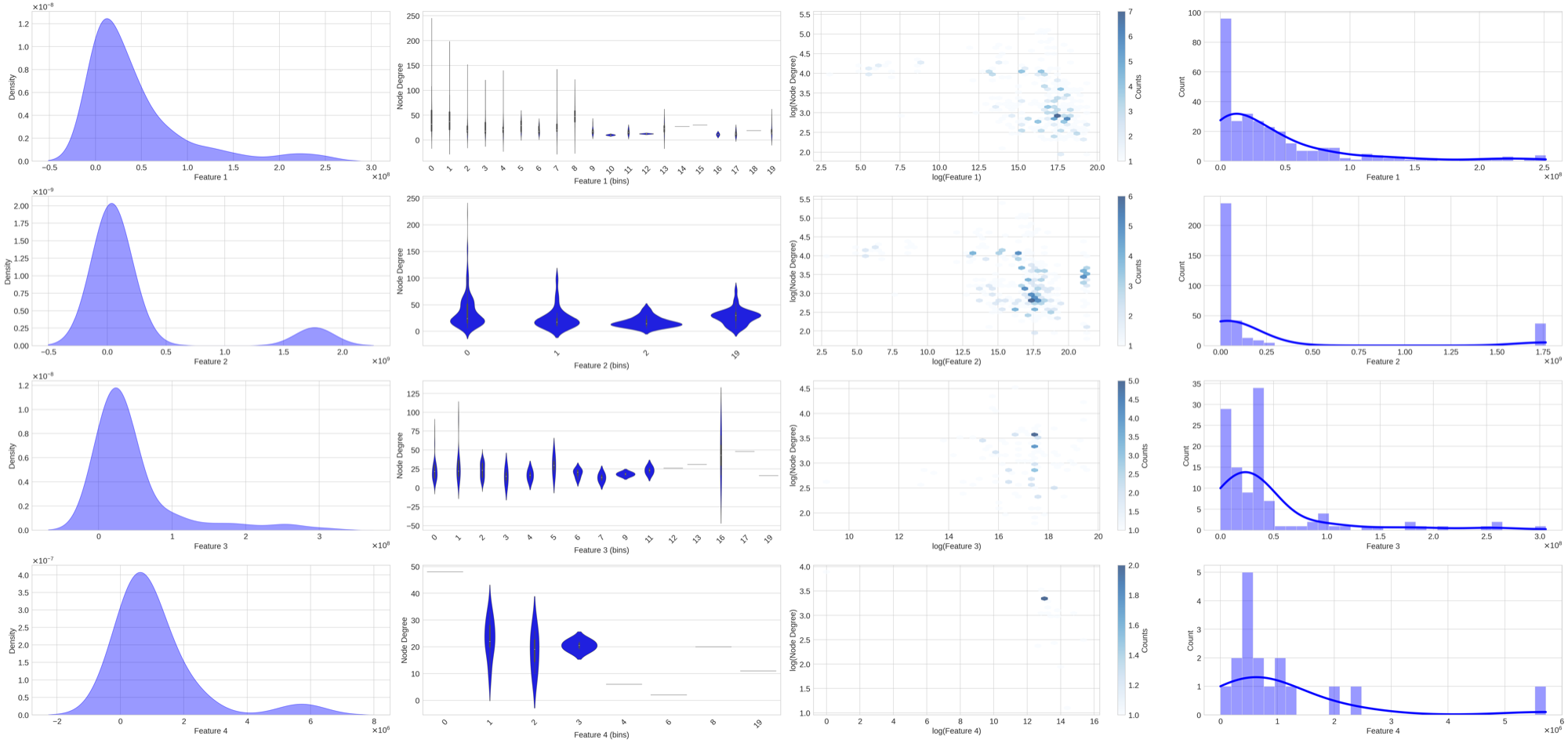}
    \caption{Feature group B.}
    \label{fig:n-double-features-b}
  \end{subfigure}

  \vspace{0.5em}

  \begin{subfigure}[t]{0.49\textwidth}
    \centering
    \includegraphics[width=\textwidth]{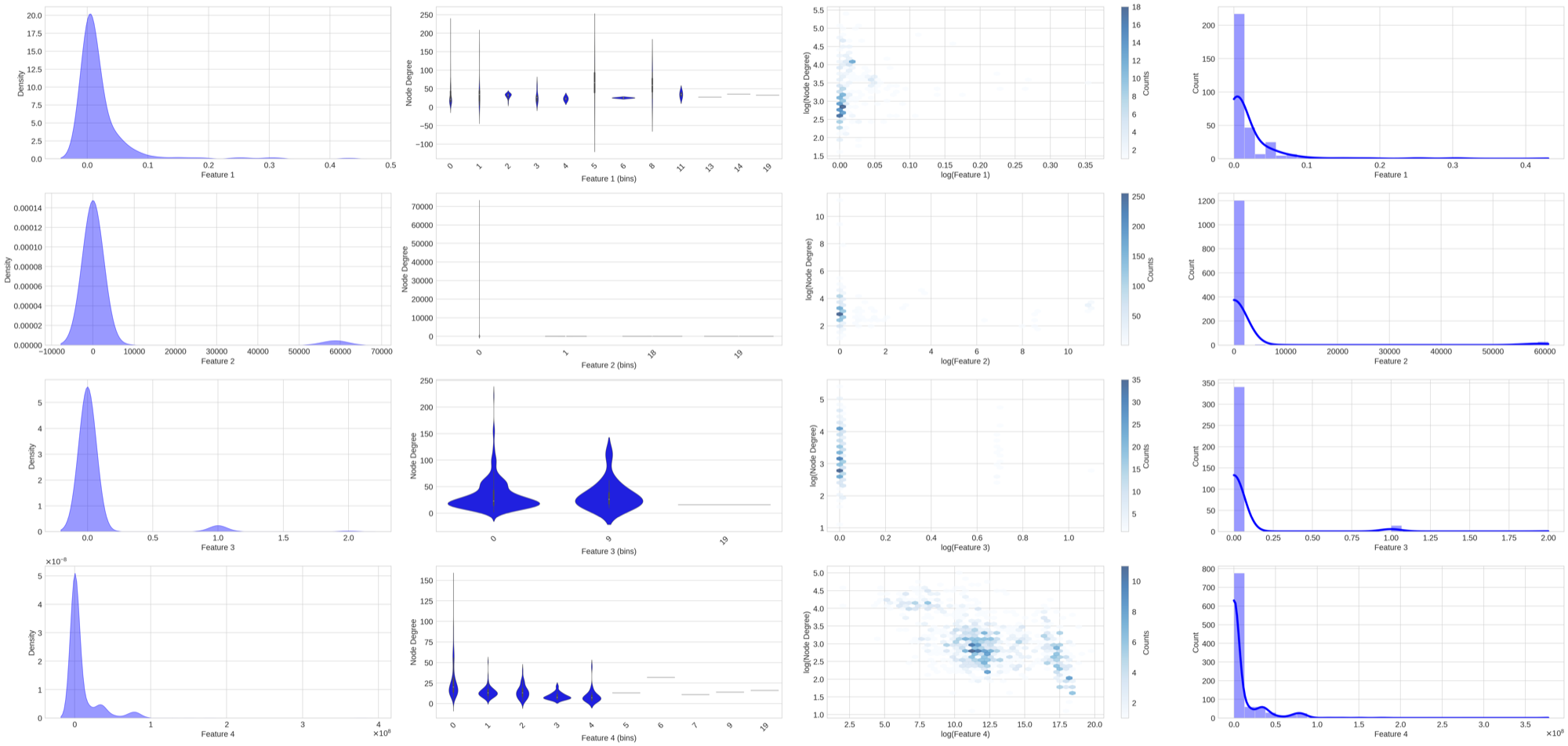}
    \caption{Feature group C.}
    \label{fig:n-double-features-c}
  \end{subfigure}
  \hfill
  \begin{subfigure}[t]{0.49\textwidth}
    \centering
    \includegraphics[width=\textwidth]{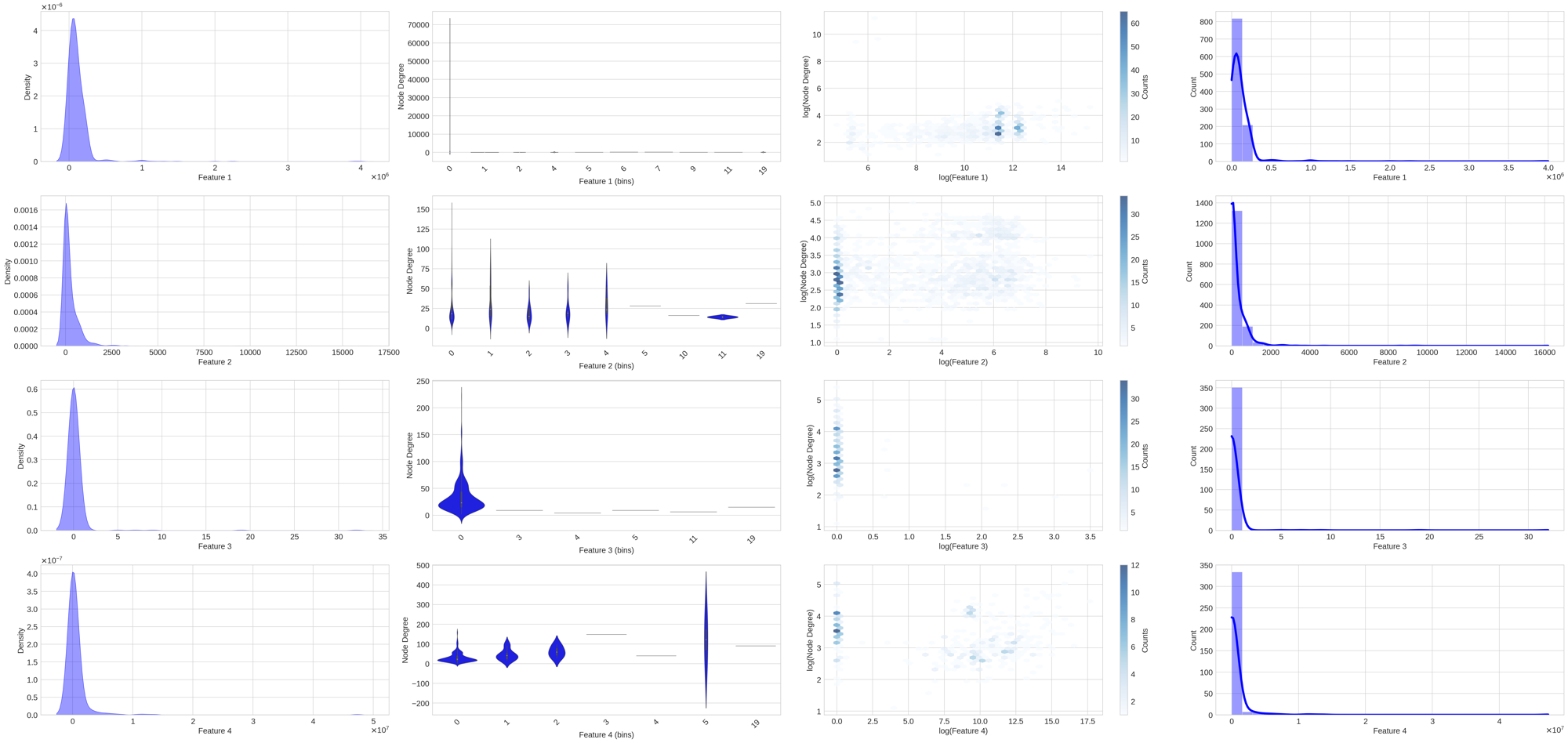}
    \caption{Feature group D.}
    \label{fig:n-double-features-d}
  \end{subfigure}

  \caption{Task 1: feature distribution comparisons over the pre-training data. Each panel shows KDE, degree-stratified violin plots, hexbin scatter plots, and histograms for a group of node-level features.}
  \label{fig:n-double-features}
\end{figure*}

\begin{figure*}[h]
  \centering
  \begin{subfigure}[t]{0.49\textwidth}
    \centering
    \includegraphics[width=\textwidth]{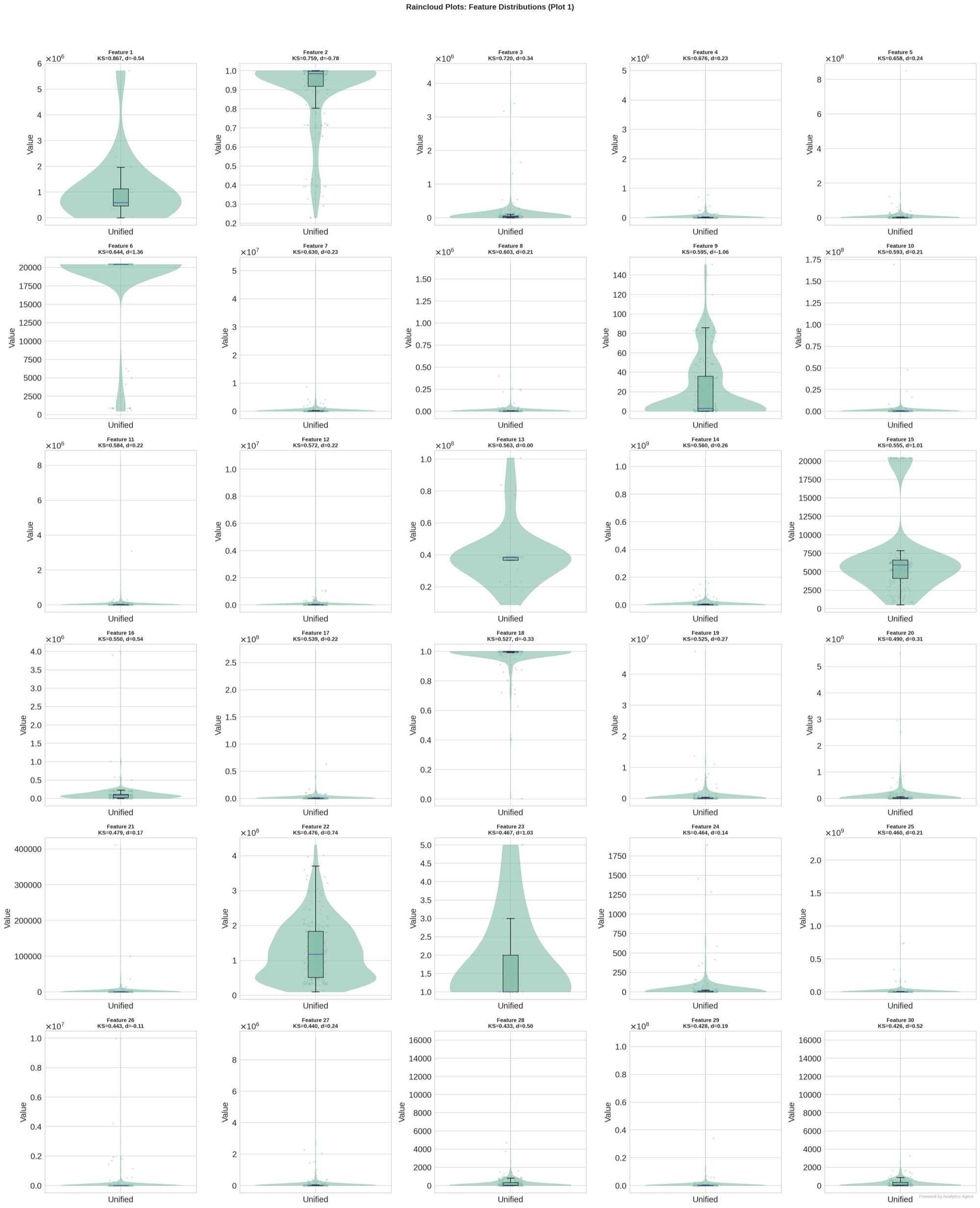}
    \caption{Raincloud for feature group A.}
    \label{fig:n-raincloud-a}
  \end{subfigure}
  \hfill
  \begin{subfigure}[t]{0.49\textwidth}
    \centering
    \includegraphics[width=\textwidth]{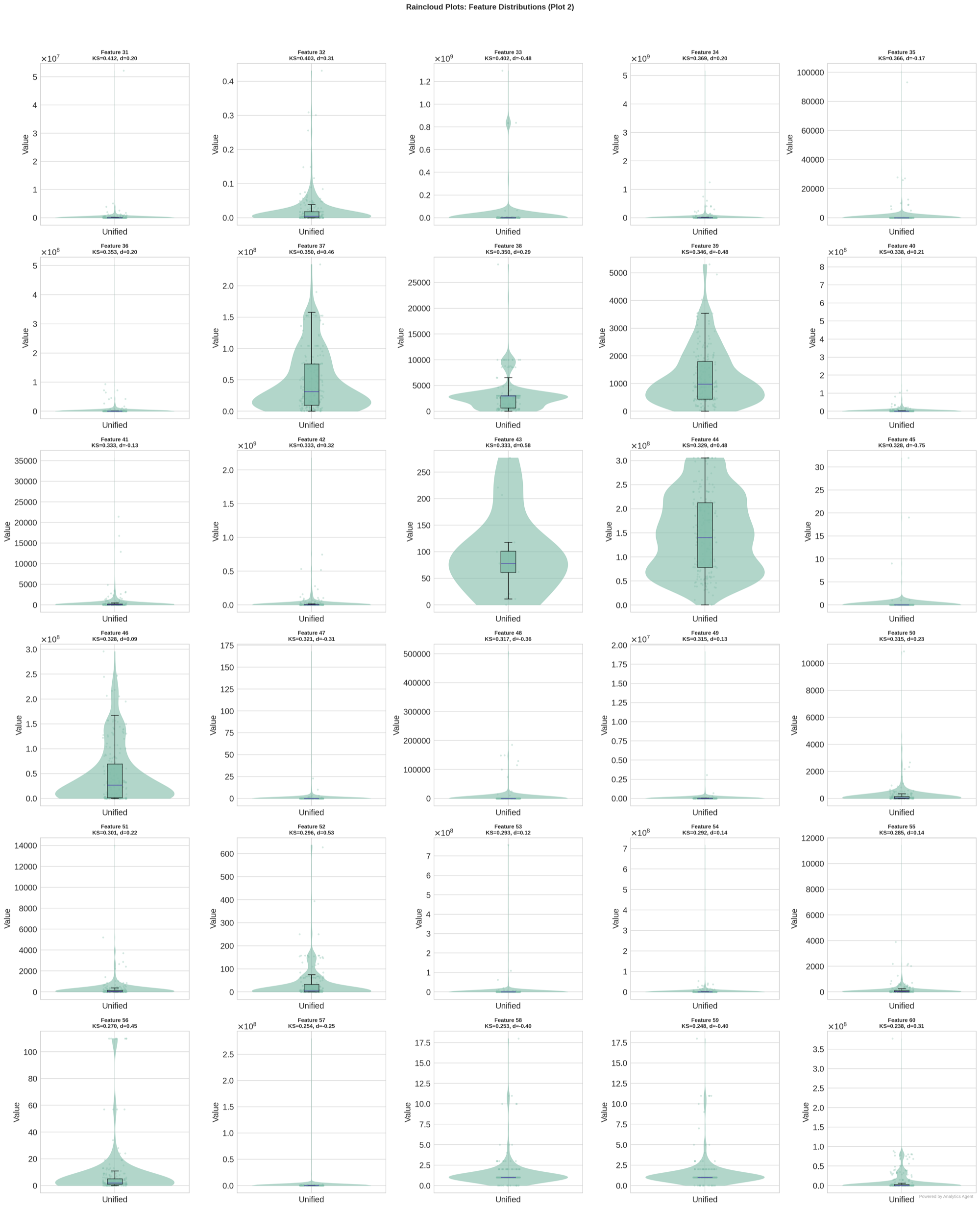}
    \caption{Raincloud for feature group B.}
    \label{fig:n-raincloud-b}
  \end{subfigure}

  \vspace{0.5em}

  \begin{subfigure}[t]{0.49\textwidth}
    \centering
    \includegraphics[width=\textwidth]{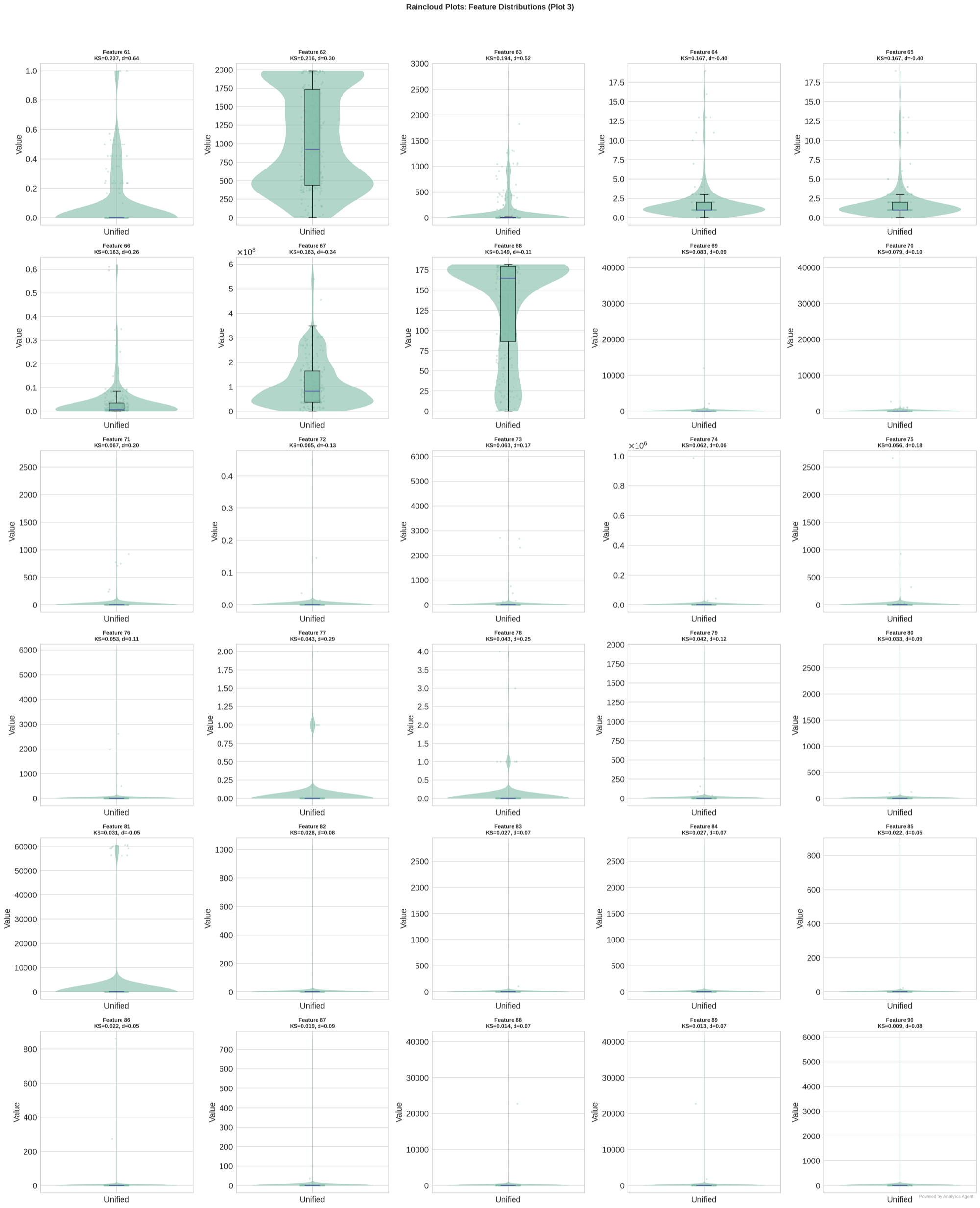}
    \caption{Raincloud for feature group C.}
    \label{fig:n-raincloud-c}
  \end{subfigure}
  \hfill
  \begin{subfigure}[t]{0.49\textwidth}
    \centering
    \includegraphics[width=\textwidth]{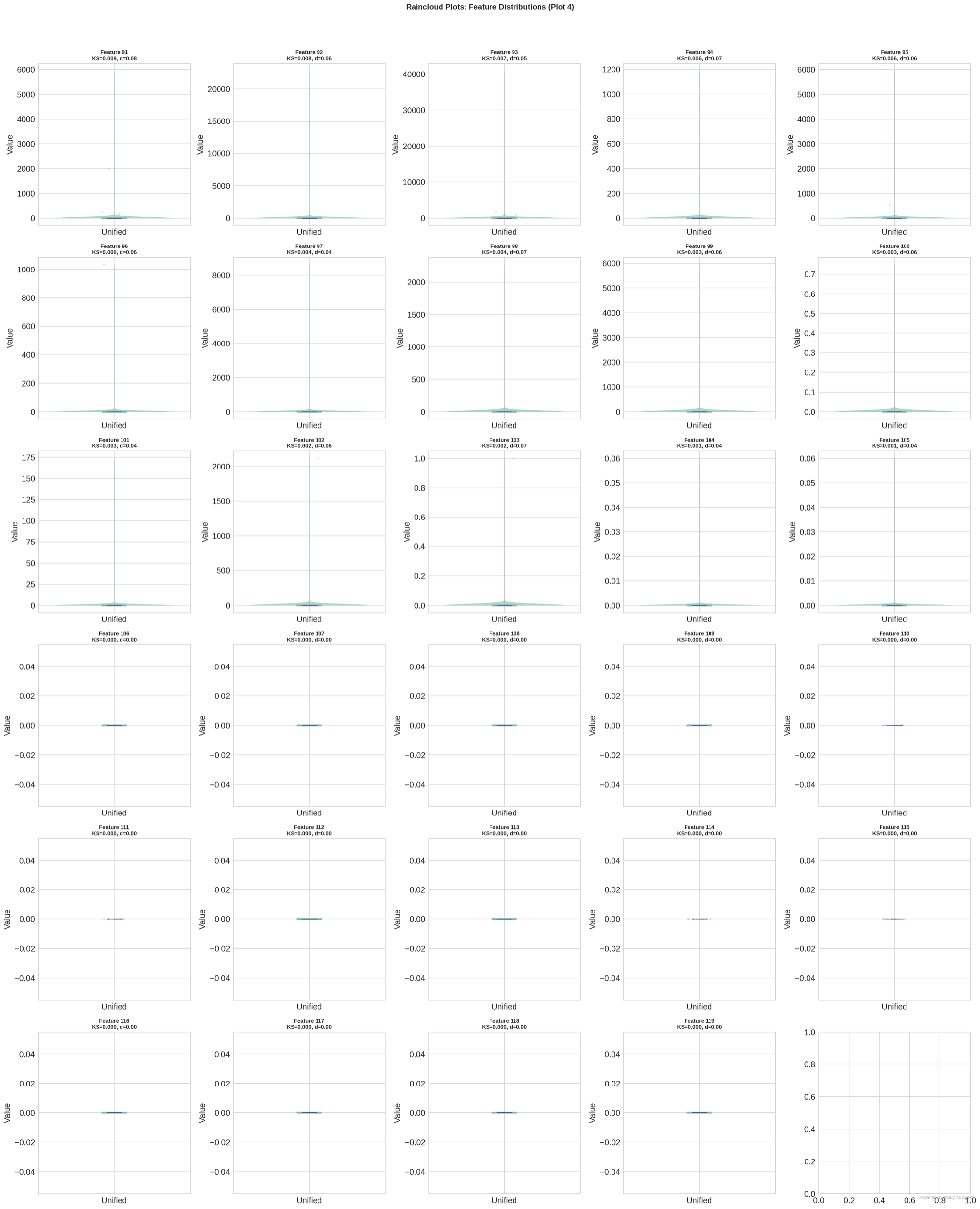}
    \caption{Raincloud for feature group D.}
    \label{fig:n-raincloud-d}
  \end{subfigure}

  \caption{Task 1: raincloud plots of feature distributions. Each panel combines a half-violin, box plot, and jittered scatter points for the top features ranked by the Kolmogorov--Smirnov statistic.}
  \label{fig:n-raincloud}
\end{figure*}

\section{Zero-Shot}

\begin{table}[t]
\caption{Zero-shot link prediction using the GFM’s pre-trained link predictor on three link-classification tasks. We evaluate three context sizes, corresponding to ego-graphs of varying radius centered on the target edge.}
 \label{tab:zero_shot}
\centering

\begin{tabular}{lccc}
\toprule
 \textbf{Model} & \textbf{Task 8} &  \textbf{Task 9} &  \textbf{Task 10}  \\
  \textbf{- Context} & \scriptsize PRAUC & \scriptsize PRAUC & \scriptsize PRAUC \\
 \midrule
GFM-1 & 54.23 &77.13 &73.05 \\
GFM-2 & 55.35 & 39.48 &53.21 \\
GFM-3 &  49.89&41.16 &73.84\\
\bottomrule
\end{tabular}

\end{table}

\textbf{Zero-Shot setup} 
For zero-shot, we evaluate the link-prediction tasks using the frozen link-prediction head of the pre-trained GFM. Node-level zero-shot is not well-defined for nodes with a feature dimension that is different than the task prediction dimension. The same issue arises in vision tasks, as the ViT output is in high dimension, and some projection to the label space must be learned. As recently noted by \cite{eremeev2025turningtabularfoundationmodels}, existing ``zero-shot'' node level evaluation uses labeled data from the pre-training graph, similarly to transduction settings, which essentially violates the definition of zero-shot prediction. Moreover, this evaluation is not possible if inference tasks data have no intersection with the pre-training data at all, as in our case. Therefore, for the node-level tasks we perform zero-shot separation analysis instead.
Our few-shot analysis reveals that GFM representations are highly expressive even with minimal supervision. While performance generally improves with more data, we observe that the increase from 1-shot to 10-shot is not always monotonic, echoing findings in other domains \citep{luo2023closerlookfewshotclassification, radford2021learningtransferablevisualmodels} where low-shot regimes can be sensitive to the specific samples selected. Nevertheless, the 10-shot GFM performance frequently rivals or exceeds the full-dataset performance of standard HGT and HAN models. A striking example is seen in Task 9, where the 10-shot GFM-1 (87.36) outperforms all versions of HGT and HAN trained on the full dataset, which fail to break the 40.00 PRAUC barrier, underscoring the efficiency and superior feature alignment of the learned foundation representations.

\Cref{fig:zero_separation} presents the zero-shot separation results for Tasks 1 to 6. With just the raw features, we do not observe meaningful separation between the positive and negative classes. In contrast, the GFM yields strong zero-shot separation.
The zero-shot link-prediction performance is presented in \Cref{tab:zero_shot}. For Task 10, GFM in the zero-shot setting performs strictly better than all GNN baselines, even though those baselines are trained on the full task dataset, with an absolute improvement of $10.08$ points.

\begin{figure*}[t!]
  \centering

  \setlength{\tabcolsep}{0.5pt}
  \renewcommand{\arraystretch}{0.80}

  \begin{tabular}{@{}c cccc@{}}
    & \multicolumn{1}{c}{\textbf{\scriptsize Raw}}
    & \multicolumn{1}{c}{\textbf{\scriptsize GFM-1}}
    & \multicolumn{1}{c}{\textbf{\scriptsize GFM-2}}
    & \multicolumn{1}{c}{\textbf{\scriptsize GFM-3}} \\[-2pt]

    \rotatebox{90}{\scriptsize Task 1} &
    \includegraphics[width=.235\columnwidth]{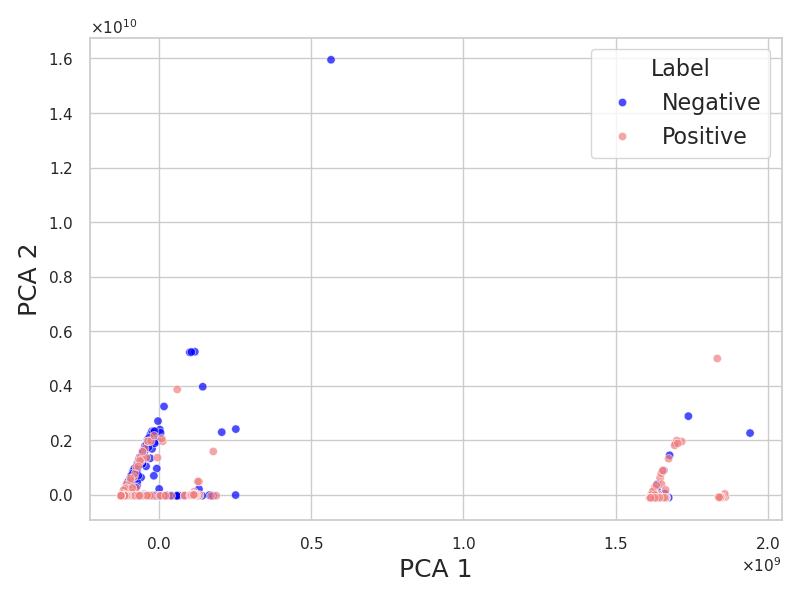} &
    \includegraphics[width=.235\columnwidth]{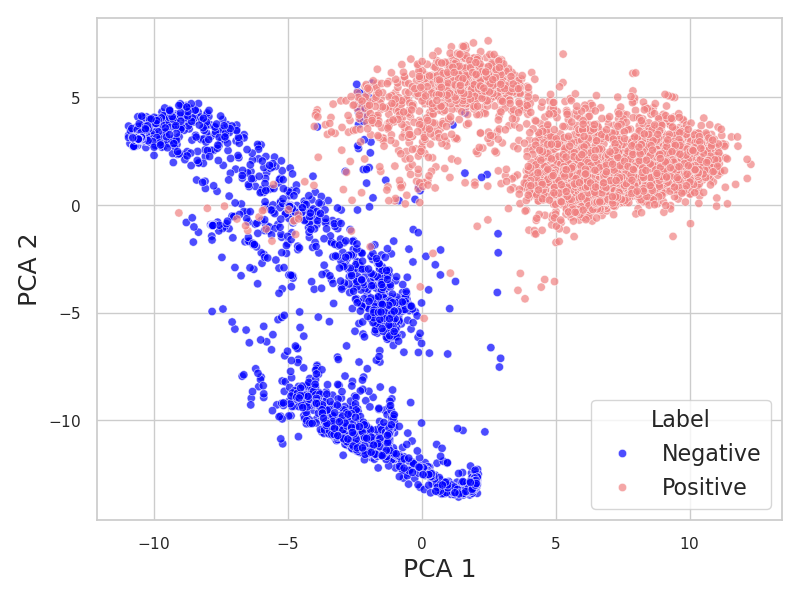} &
    \includegraphics[width=.235\columnwidth]{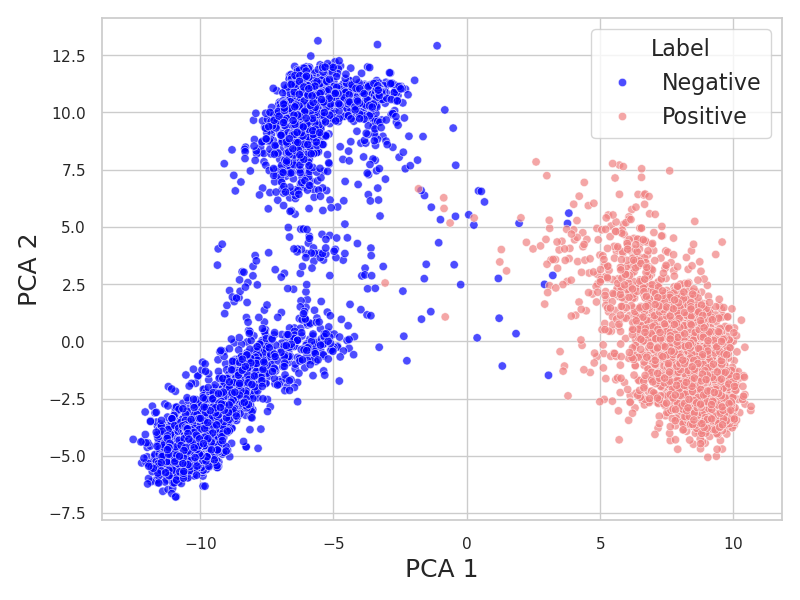} &
    \includegraphics[width=.235\columnwidth]{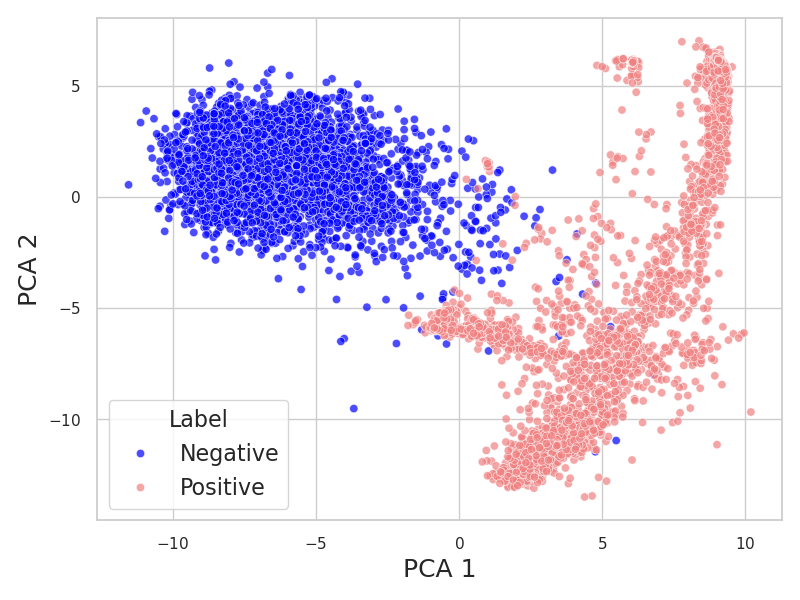} \\[-2pt]

    \rotatebox{90}{\scriptsize Task 2} &
    \includegraphics[width=.235\columnwidth]{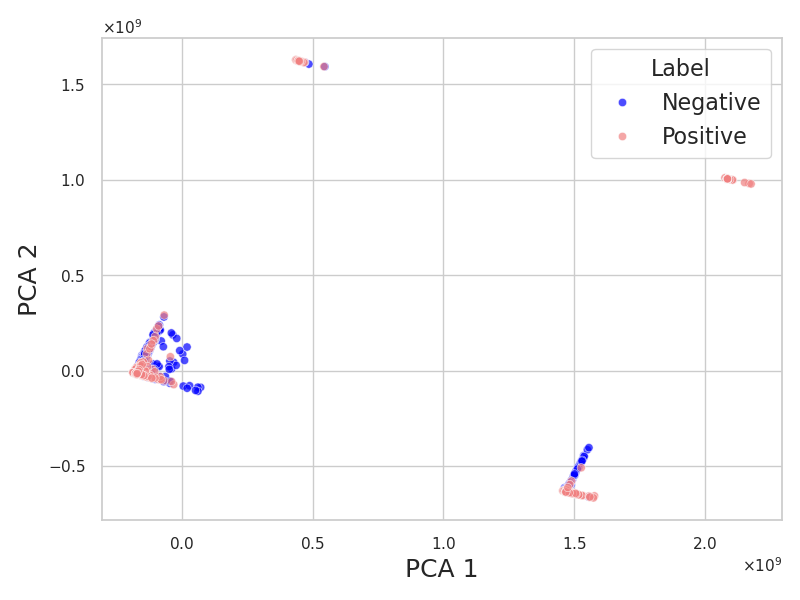} &
    \includegraphics[width=.235\columnwidth]{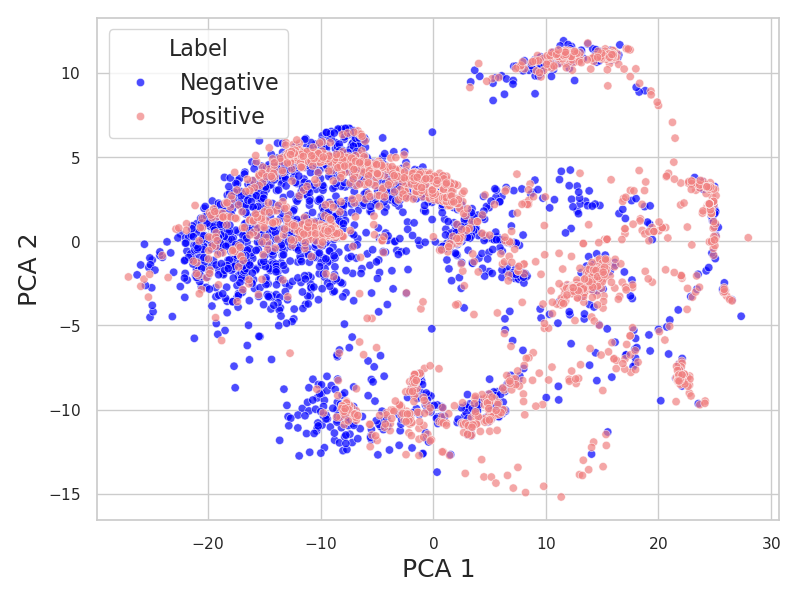} &
    \includegraphics[width=.235\columnwidth]{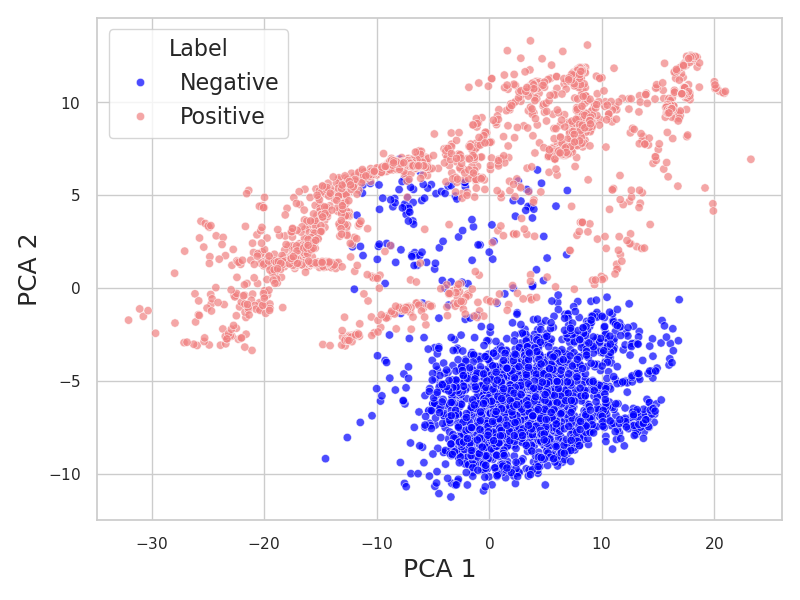} &
    \includegraphics[width=.235\columnwidth]{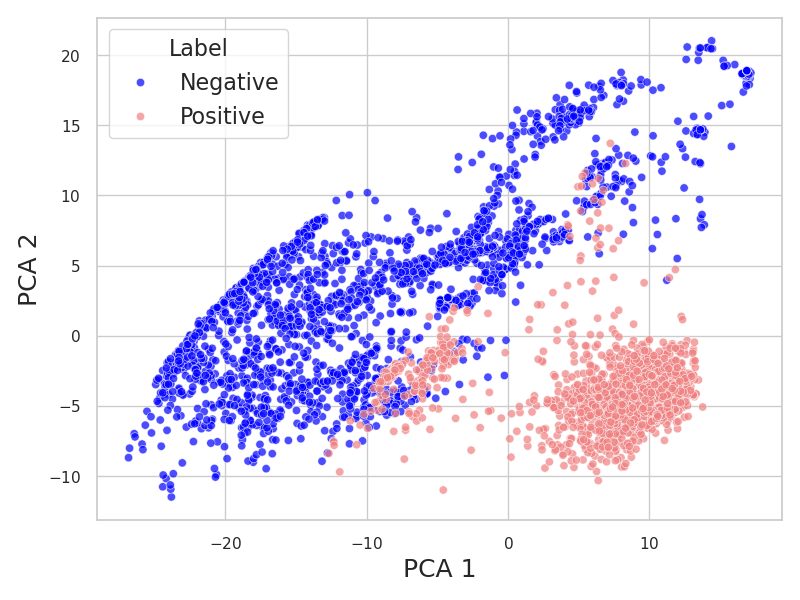} \\[-2pt]

    \rotatebox{90}{\scriptsize Task 3} &
    \includegraphics[width=.235\columnwidth]{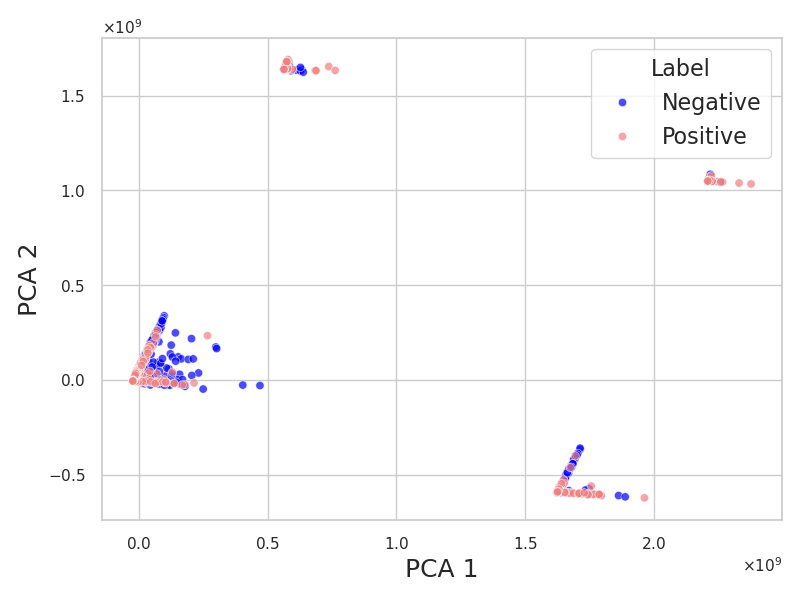} &
    \includegraphics[width=.235\columnwidth]{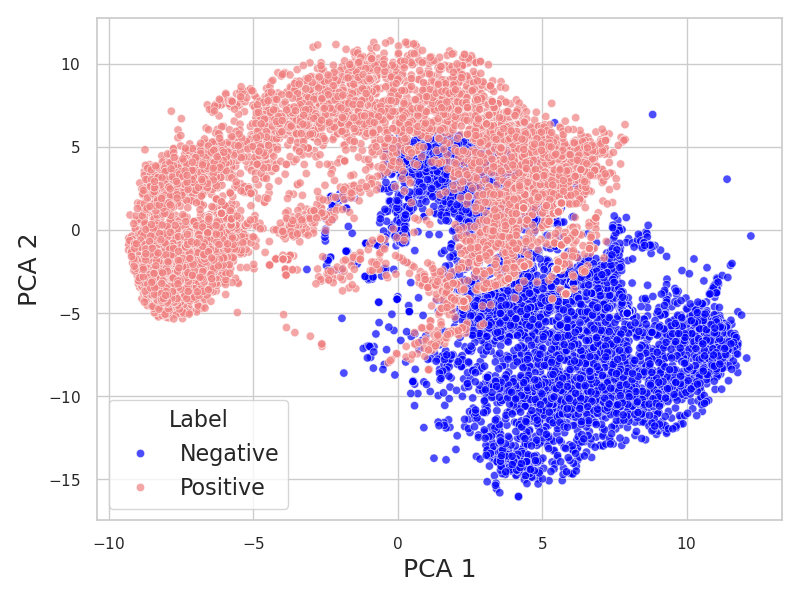} &
    \includegraphics[width=.235\columnwidth]{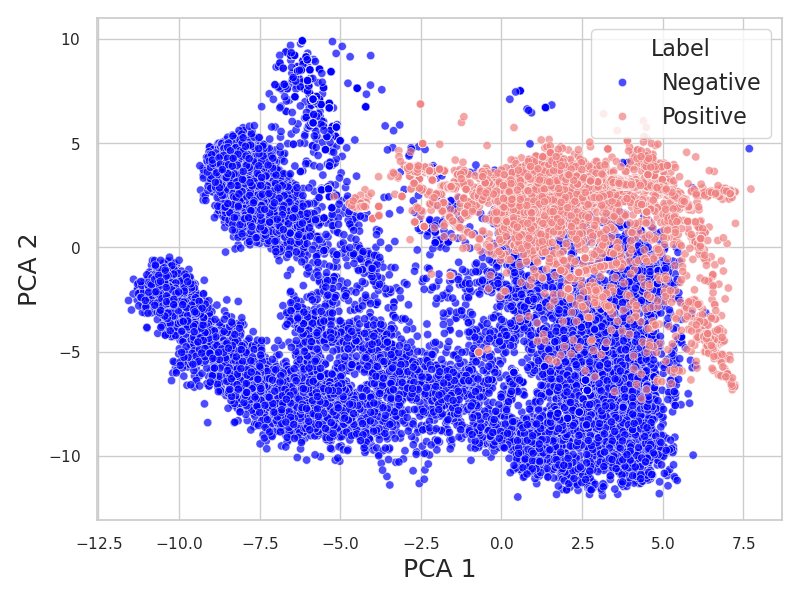} &
    \includegraphics[width=.235\columnwidth]{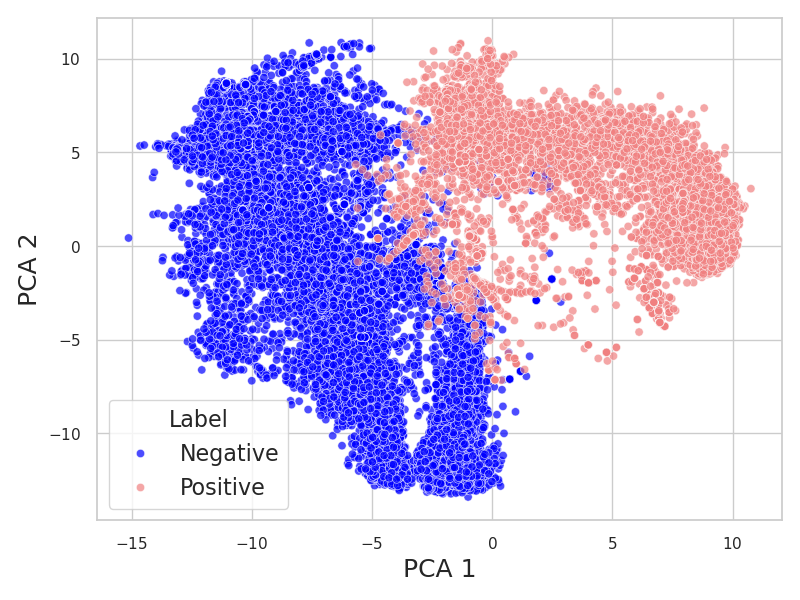} \\[-2pt]

    \rotatebox{90}{\scriptsize Task 4} &
    \includegraphics[width=.235\columnwidth]{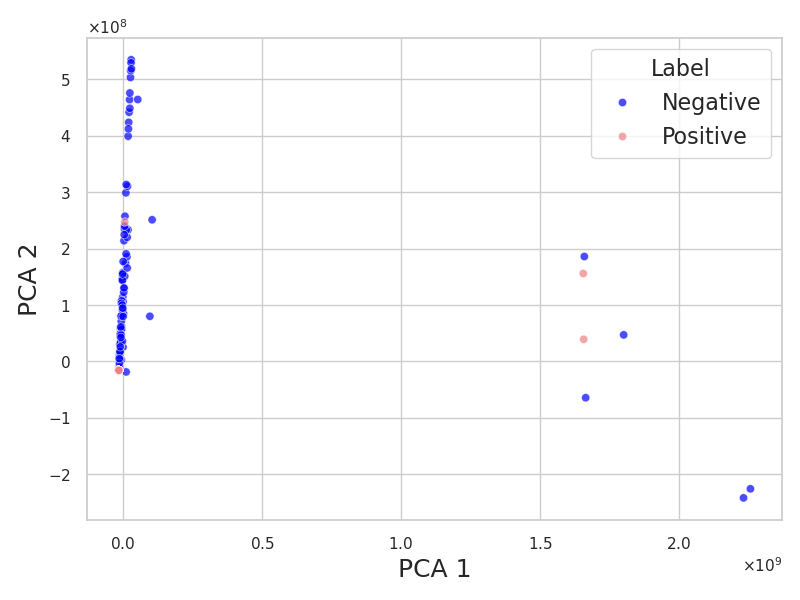} &
    \includegraphics[width=.235\columnwidth]{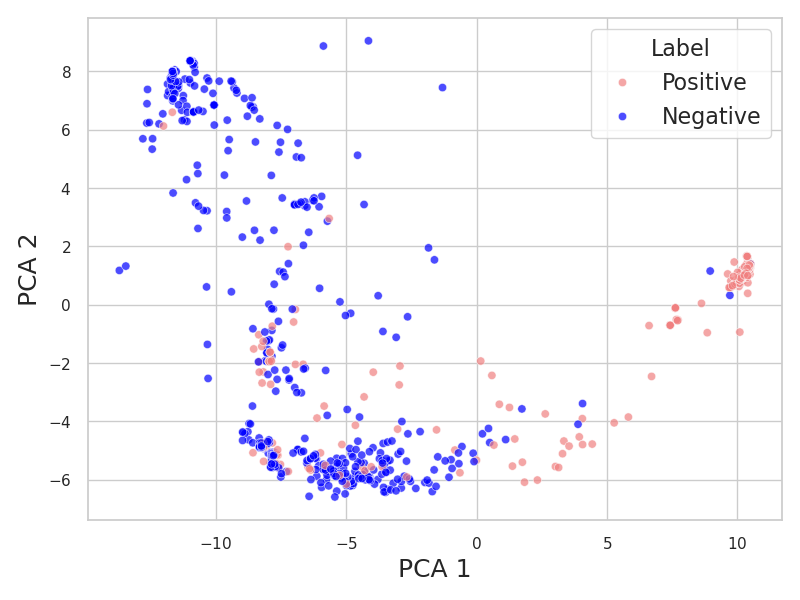} &
    \includegraphics[width=.235\columnwidth]{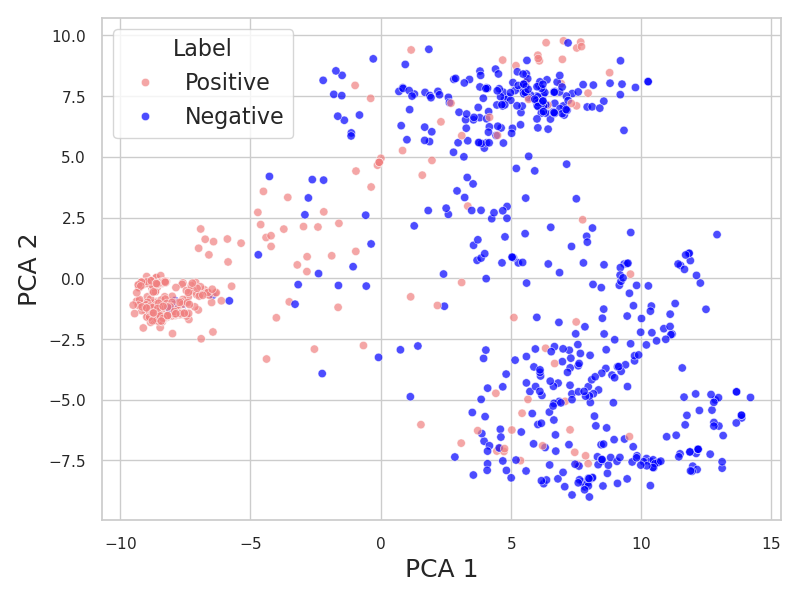} &
    \includegraphics[width=.235\columnwidth]{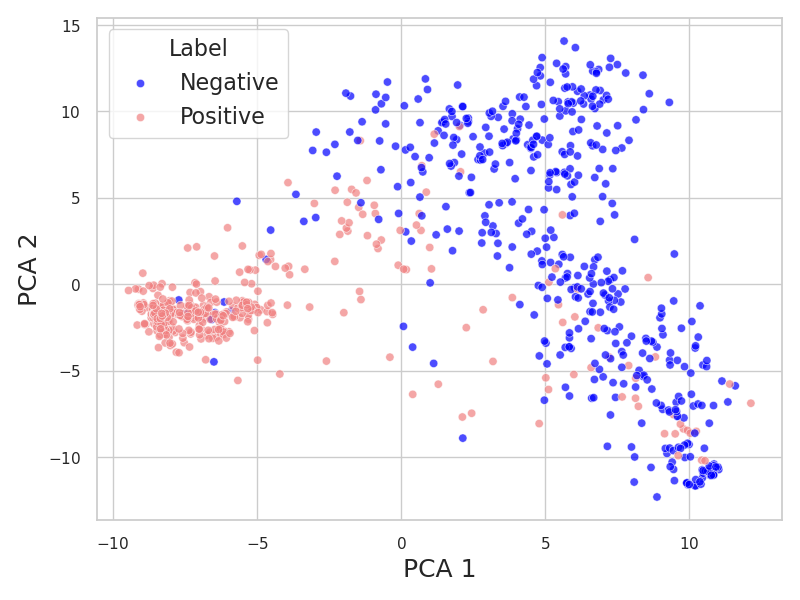} \\[-2pt]

    \rotatebox{90}{\scriptsize Task 5} &
    \includegraphics[width=.235\columnwidth]{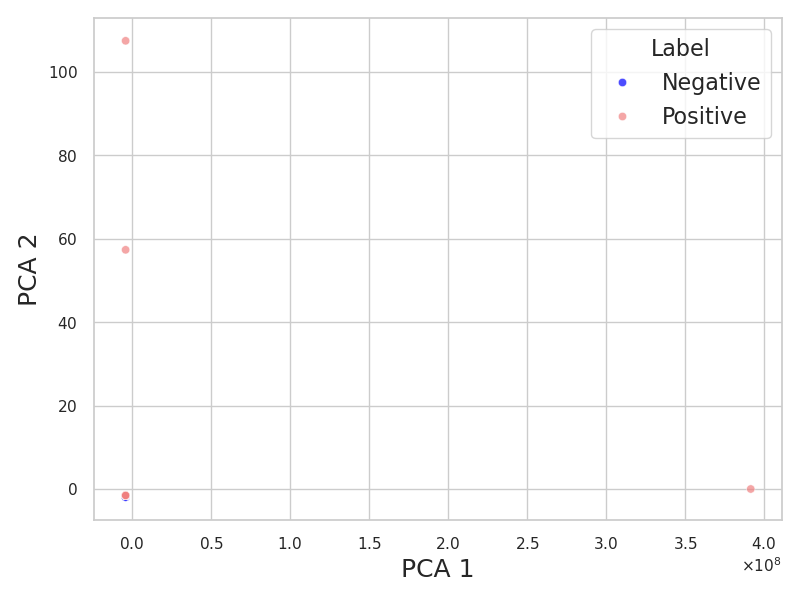} &
    \includegraphics[width=.235\columnwidth]{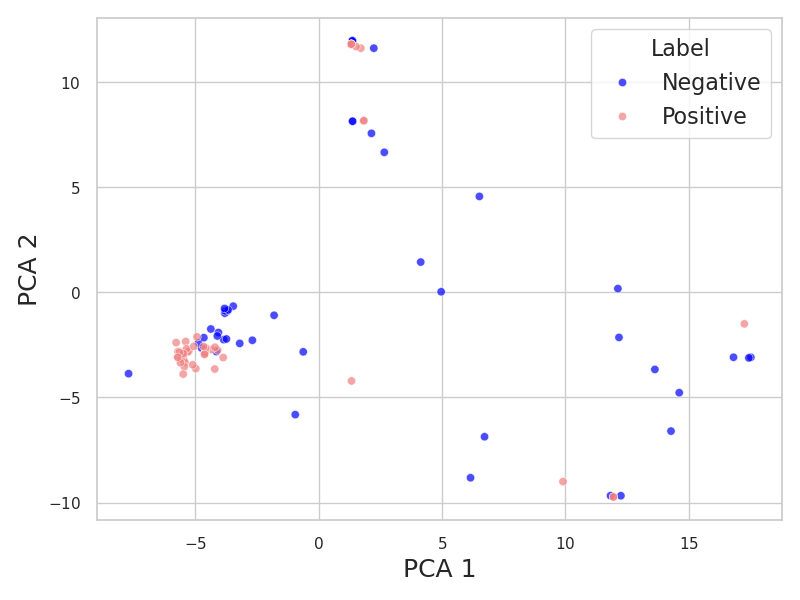} &
    \includegraphics[width=.235\columnwidth]{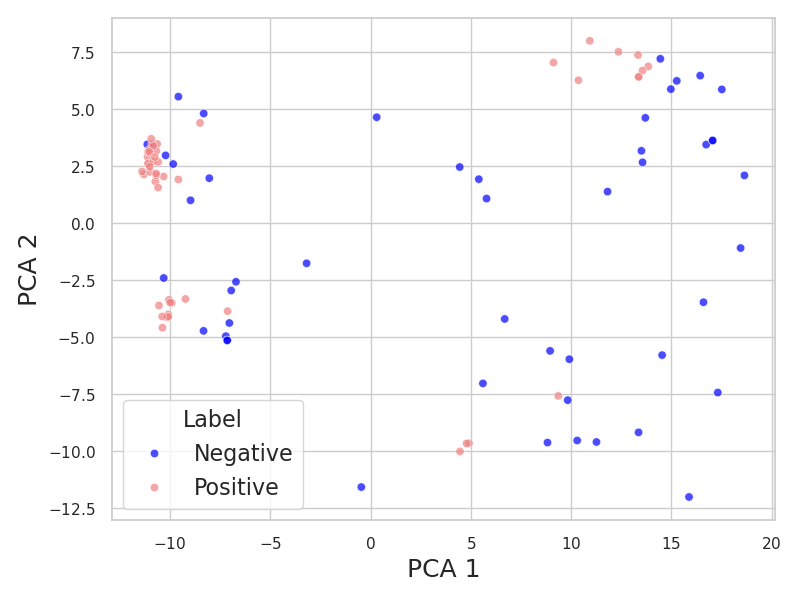} &
    \includegraphics[width=.235\columnwidth]{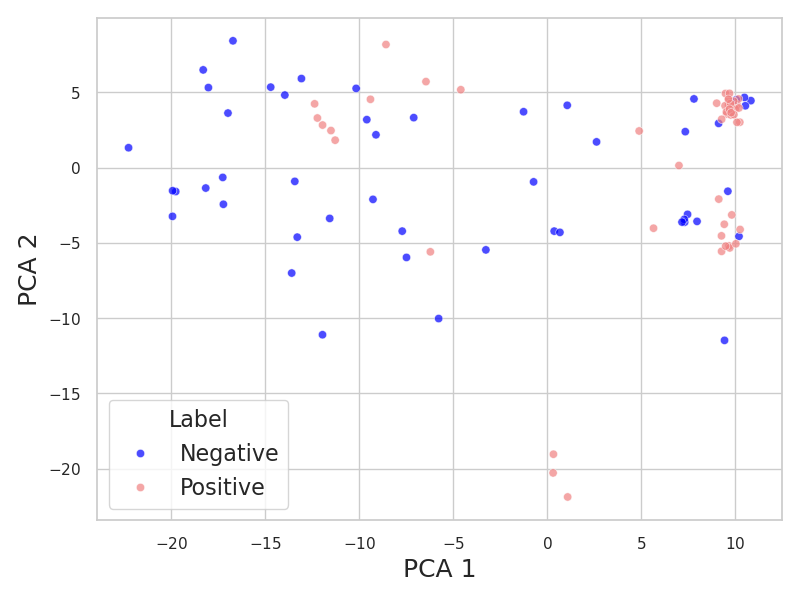} \\[-2pt]

    \rotatebox{90}{\scriptsize Task 6} &
    \includegraphics[width=.235\columnwidth]{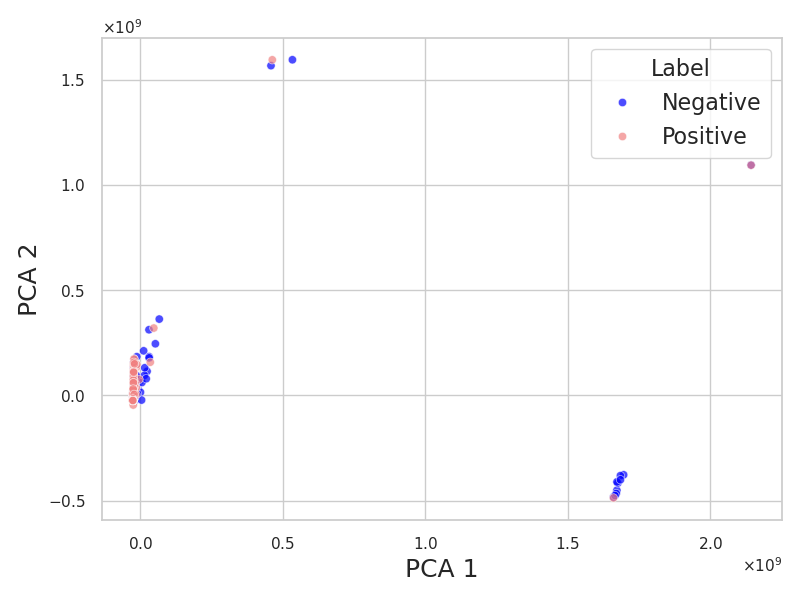} &
    \includegraphics[width=.235\columnwidth]{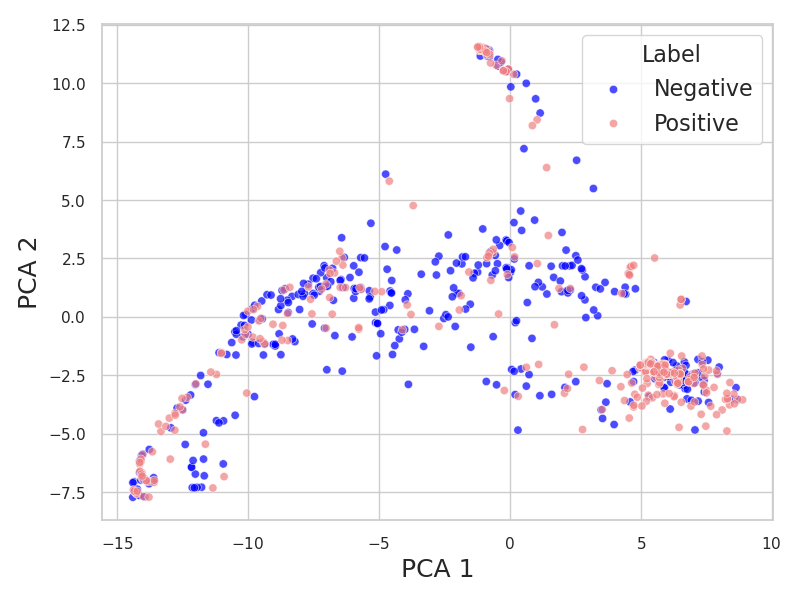} &
    \includegraphics[width=.235\columnwidth]{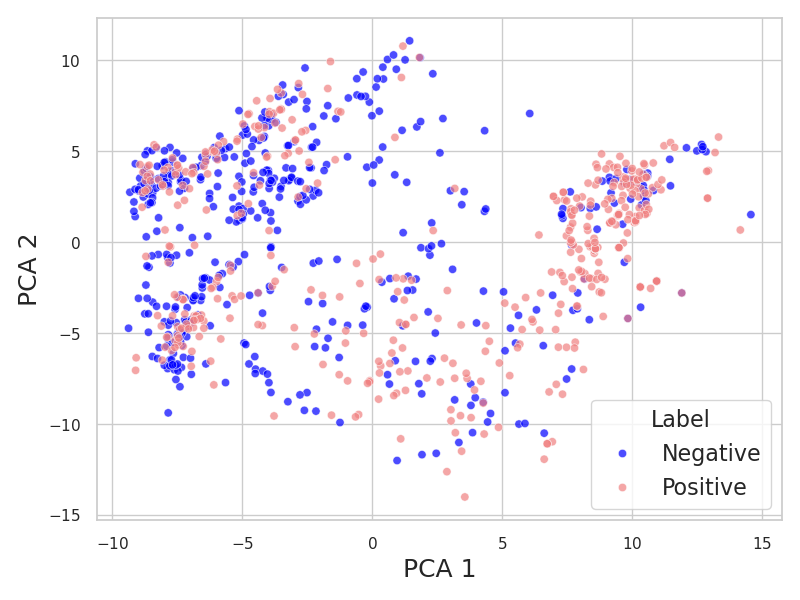} &
    \includegraphics[width=.235\columnwidth]{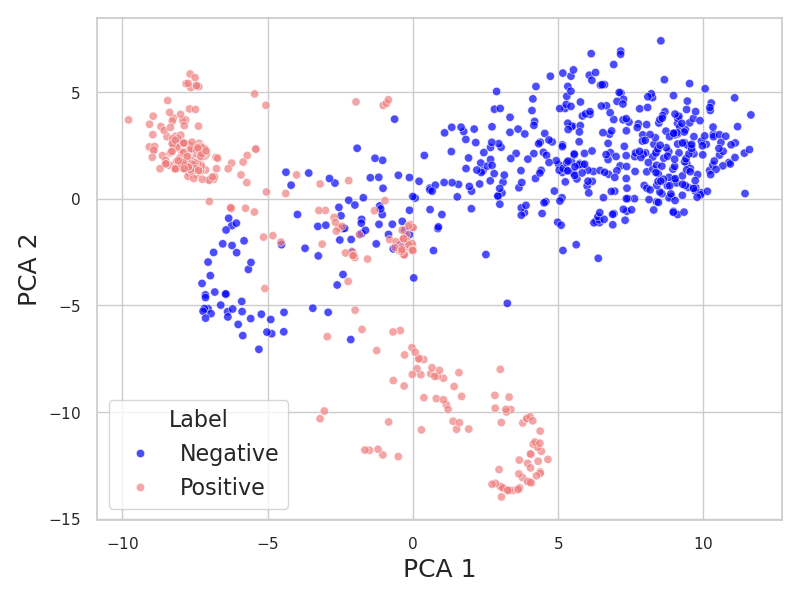} \\
  \end{tabular}
  \caption{Zero-shot separation for Enterprise Tasks 1--6. The raw features and GFM embeddings for context sizes 1, 2, and 3 are projected to 2D space using PCA, and the points are colored according to their label.}
    \label{fig:zero_separation}
\end{figure*}

\section{Time Complexity Analysis}
\label{app:time_complexity}

We compare the per-layer time and memory complexity of \method{} against HAN and HGT. Let $B$ denote the batch size, $d$ the hidden dimension, $h$ the number of attention heads, $d_h = d/h$ the head dimension, $|E_b|$ the number of sampled edges in the batch, and $q$ the fixed sampled degree in \method{}-TAA. For HAN, $K$ is the number of heads, $P$ the number of meta-paths, and $|E_\Phi|$ the number of node pairs for meta-path $\Phi$. For \method{}, $\mathcal{S}$ denotes the edge-type subsets used by TCA.

\textbf{HAN} requires $\mathcal{O}(K \sum_{\Phi=1}^{P} |E_\Phi| d_h + P B d)$ time per layer, as it repeats attention computation for each of $P$ meta-paths and $K$ heads, followed by semantic-level attention across meta-paths. Memory scales as $\mathcal{O}(K \sum_\Phi |E_\Phi| + P B d)$. The main bottleneck is meta-path expansion and materialization.

\textbf{HGT} requires $\mathcal{O}(|E_b| d)$ time, applying relation-aware heterogeneous multi-head attention with type-specific projections over all sampled edges. Memory scales as $\mathcal{O}(|E_b| h + B d)$. The bottleneck is the single softmax over all sampled neighbors of each target node, which becomes expensive for high-degree nodes.

\textbf{\method{}} requires $\mathcal{O}(|E_b^{\mathrm{tca}}| d_h + B q d_h)$ time when the edge-type subsets in $\mathcal{S}$ partition the sampled edges. The TCA component applies sparse softmax independently over each type-specific neighbor subset rather than a single softmax over the full neighborhood, reducing the per-node attention cost. The TAA component's cost is bounded by $Bq d_h$ regardless of actual node degree due to fixed-degree sampling. Memory scales as $\mathcal{O}(\sum_{v \in B} \sum_{S \in \mathcal{S}} |N_v^S| + Bq + Bd)$. Together, these choices enable \method{} to scale to billion-scale graphs with highly non-regular degree distributions.

\section{Representational Analysis}
\label{app:representational_analysis}

To understand the internal dynamics of the pretrained \method{} Transformer, we analyze how representations evolve across its 5-layer hierarchy on diverse unseen downstream tasks.

\paragraph{Dirichlet Energy and Attention Entropy.}
\Cref{tab:hierarchical_all_tasks} tracks the normalized Dirichlet Energy of node embeddings and the mean Shannon Entropy of attention weights across edge types at each layer. Across all tasks, the Dirichlet Energy follows a non-monotonic pattern, typically peaking at layers 2--3 before decreasing, indicating that the architecture acts as an adaptive filter that first sharpens node-distinct signals before aggregation in later layers. Simultaneously, attention entropy consistently decreases across layers, demonstrating that the model progressively narrows its focus to task-relevant structural motifs.

\begin{table*}[h]
\centering
\small
\caption{Dirichlet Energy and Attention Entropy across layers for five tasks. The non-monotonic DE trend and consistent entropy collapse demonstrate adaptive filtering and progressive structural focus.}
\label{tab:hierarchical_all_tasks}
\begin{tabular}{lccccc}
\toprule
\textbf{Task / Metric} & \textbf{Layer 1} & \textbf{Layer 2} & \textbf{Layer 3} & \textbf{Layer 4} & \textbf{Layer 5} \\
\midrule
\rowcolor[HTML]{EFEFEF} \multicolumn{6}{l}{\textit{Task 5}} \\
Dirichlet Energy (Norm) & 706.35 & 805.93 & 795.73 & 800.90 & 578.40 \\
Attention Entropy ($H$) & 0.1426 & 0.1062 & 0.0807 & 0.0708 & 0.0723 \\
\midrule
\rowcolor[HTML]{EFEFEF} \multicolumn{6}{l}{\textit{Task 8}} \\
Dirichlet Energy (Norm) & 703.71 & 799.15 & 786.38 & 802.19 & 565.15 \\
Attention Entropy ($H$) & 0.1498 & 0.1251 & 0.0961 & 0.0853 & 0.0887 \\
\midrule
\rowcolor[HTML]{EFEFEF} \multicolumn{6}{l}{\textit{Task 4}} \\
Dirichlet Energy (Norm) & 690.80 & 809.73 & 789.88 & 723.33 & 597.99 \\
Attention Entropy ($H$) & 0.1933 & 0.1448 & 0.1135 & 0.1024 & 0.1086 \\
\midrule
\rowcolor[HTML]{EFEFEF} \multicolumn{6}{l}{\textit{Task 9 (Source-centric)}} \\
Dirichlet Energy (Norm) & 263.91 & 474.10 & 487.53 & 385.13 & 480.67 \\
Attention Entropy ($H$) & 0.1605 & 0.1267 & 0.1094 & 0.0875 & 0.0862 \\
\midrule
\rowcolor[HTML]{EFEFEF} \multicolumn{6}{l}{\textit{Task 9 (Destination-centric)}} \\
Dirichlet Energy (Norm) & 267.50 & 444.52 & 452.95 & 359.00 & 430.77 \\
Attention Entropy ($H$) & 0.1859 & 0.1586 & 0.1399 & 0.1101 & 0.1075 \\
\bottomrule
\end{tabular}
\end{table*}

\paragraph{In-Context Structural Interrogation.}
To assess whether the model relies on node identity shortcuts or genuinely leverages neighborhood structure, we compare embeddings computed with original node features against embeddings where features are replaced by a mask token (\Cref{tab:mask_sensitivity_expanded}). The moderate cosine similarity between masked and unmasked embeddings, ranging from 0.31 to 0.76, confirms that the model performs a significant representational shift during masked inference. This shift indicates active interrogation of the neighborhood topology to reconstruct missing information, which is the mechanism underlying the model's zero-shot transfer capabilities.

\begin{table*}[h]
\centering
\caption{Cosine similarity and $L_2$ distance between unmasked and masked node embeddings across tasks. Moderate similarity confirms the model leverages neighborhood structure rather than relying on identity shortcuts.}
\label{tab:mask_sensitivity_expanded}
\begin{tabular}{llcc}
\toprule
\textbf{Context} & \textbf{Node Type} & \textbf{Cosine Similarity} & \textbf{$L_2$ Distance} \\
\midrule
Task 5 & NodeTypeC & $0.5784 \pm 0.2819$ & $12.698 \pm 4.332$ \\
Task 8 & NodeTypeC & $0.3111 \pm 0.1425$ & $17.858 \pm 1.806$ \\
Task 4 & NodeTypeC & $0.4142 \pm 0.2479$ & $15.418 \pm 3.627$ \\
Task 9 (Src) & NodeTypeA & $0.7614 \pm 0.1833$ & $9.135 \pm 3.100$ \\
Task 9 (Dst) & NodeTypeA & $0.7546 \pm 0.2084$ & $9.258 \pm 3.434$ \\
\bottomrule
\end{tabular}
\end{table*}

\section{Scaling Laws: Additional Details}
\label{app:scaling_laws_details}

\paragraph{Definition of Data Size.}
We define data size $D$ as the number of \emph{distinct target edges} used for supervision.
Repeated passes over the same edges are treated as optimization steps and do not increase $D$.
This definition follows classical learning-theoretic practice and is appropriate here
because the graph provides a virtually unbounded supply of unique edges.
Unlike language modeling, where data is often measured in tokens processed due to streaming
corpora and duplicate content~\citep{kaplan2020scaling,hoffmann2022training},
our setting allows an unambiguous notion of unique training examples.

\paragraph{Compute-Optimal Allocation.}
While scaling with respect to model size $N$ and data size $D$ is well-defined for arbitrary graphs, compute-optimal tradeoffs require special care. In graph models with neighborhood-based context, the computational cost per supervised edge depends on the size of the sampled subgraph, which in turn depends on graph structure such as degree distribution and clustering. As a result, total training compute cannot be expressed as a universal function of $(N,D)$ alone, unlike in sequence models where compute is well-approximated by $O(N \cdot D)$~\citep{hoffmann2022training}.
We therefore treat compute as a graph-conditional quantity. In addition to reporting scaling with respect to $N$ and $D$, we log the total neighborhood volume processed and wall-clock compute for each run. Any compute-optimal conclusions are interpreted as conditional on the underlying graph and sampling distribution, rather than as universal prescriptions.

\end{document}